\newtheorem{proposition}{Proposition}
\newcommand{\std}[1]{{\scriptsize$\pm$#1}}
\newcommand{\changed}[1]{#1}
\begin{document}
\title{CT-OT Flow: Estimating Continuous-Time Dynamics from Discrete Temporal Snapshots
}

\author{
  Keisuke Kawano$^\star$, Takuro Kutsuna, Naoki Hayashi, Yasushi Esaki, Hidenori Tanaka   \\
  Toyota Central R\&D Labs., Inc., Japan\\
  \texttt{$^\star$kawano@mosk.tytlabs.co.jp} \\
}

\date{}

\maketitle

\begin{abstract}
  In many real-world settings--e.g., single-cell RNA sequencing, mobility sensing, and environmental monitoring--data are observed only as temporally aggregated snapshots collected over finite time windows, often with noisy or uncertain timestamps, and without access to continuous trajectories.
  We study the problem of estimating continuous-time dynamics from such snapshots.
  We present Continuous-Time Optimal Transport Flow (CT-OT Flow), a two-stage framework that (i) infers high-resolution time labels by aligning neighboring intervals via partial optimal transport (POT) and (ii) reconstructs a continuous-time data distribution through temporal kernel smoothing, from which we sample pairs of nearby times to train standard ODE/SDE models.
  Our formulation explicitly accounts for snapshot aggregation and time-label uncertainty and uses practical accelerations (screening and mini-batch POT), making it applicable to large datasets.
  Across synthetic benchmarks and two real datasets (scRNA-seq and typhoon tracks), CT-OT Flow reduces distributional and trajectory errors compared with OT-CFM, [SF]\(^{2}\)M, TrajectoryNet, MFM, and ENOT.
\end{abstract}

\section{Introduction}
In various fields such as biology~\cite{tong2020trajectorynet,tong2024simulation}, urban studies~\cite{niu2023understanding}, and medical data analysis~\cite{bunne2023learning}, data are often collected not continuously but as \emph{discrete temporal snapshots}, i.e., at distinct points in time.
Researchers then need to estimate the underlying dynamics of individual data points from these snapshots.
For example, in single-cell RNA sequencing (scRNA-seq)~\cite{zheng2017massively}, cellular dynamics are inferred from discrete temporal snapshots at different culture durations because cells are destroyed during measurement, making it impossible to track each cell over time.
Importantly, even within the same experimental snapshot, cells may correspond to a range of survival times, since the exact birth times of individual cells cannot be synchronized or directly observed.
Thus, a single time label in practice aggregates cells with heterogeneous times, introducing intrinsic uncertainty into the observation timestamps~\cite{la2018rna}.
Similarly, in urban mobility studies~\cite{zheng2014urban} or environmental sensor data~\cite{yick2008wireless}, it is challenging to continuously track individuals or sensors, and one typically only obtains data sampled at coarse time intervals.

To model such dynamics, continuous normalizing flow~\cite{chen2018neural} and Schr\"odinger bridge~\cite{leonard2013survey}-based methods~\cite{tong2020trajectorynet,tong2024simulation,tong2024improving} have been proposed.
These methods fit ordinary differential equations (ODEs) or stochastic differential equations (SDEs) to temporal snapshots, and thereby infer time-evolving trajectories.
However, they face two major limitations:
\textbf{(1)~Time discretization.} Due to constraints such as measurement costs and data sparsity, temporal snapshots are often recorded at discrete intervals.
Consequently, all data points collected within the same interval are forced to share a single, coarse-grained time label.
For instance, in scRNA-seq experiments~\cite{moon2018embryoid}, cells harvested between day 1 and day 3 may be labeled as $t=1$, and those harvested between day 4 and day 6 as $t=2$.
Methods that ignore this discretization may naively assume that all $t=1$ cells transition directly to $t=2$ along the shortest path in feature space, yielding suboptimal or even misleading dynamics.
\textbf{(2)~Time-label uncertainty.} The recorded times themselves can be noisy.
In scRNA-seq experiments, individual cells’ true ``birth times'' cannot be directly measured, introducing uncertainty into the observation timestamps~\cite{la2018rna}.
Ignoring this uncertainty when estimating dynamics can result in a model that diverges from the true process.

In this work, we propose \textbf{Continuous-Time Optimal Transport Flow (CT-OT Flow)}, a method designed to address both time discretization and time-label uncertainty to recover more accurate dynamics.
CT-OT Flow has two major components.
\textbf{(1) High-resolution time label inference via partial optimal transport.}
CT-OT Flow employs partial optimal transport (POT)~\cite{bonneel2019spot,figalli2010optimal} to infer high-resolution time labels for each data point, even if the raw observations are aggregated into discrete intervals.
By leveraging POT, we can incorporate sample similarities when estimating high-resolution time labels.
\textbf{(2) Kernel-based construction of data distribution in continuous-time.}
CT-OT Flow then builds data distributions in continuous-time via a kernel-based smoothing approach.
This step enables robust estimation of ODE/SDE even in the presence of substantial noise or uncertainty in the times.
Unlike existing methods~\cite{tong2024simulation,tong2024improving} that simply ignore potential misalignment in time labels, CT-OT Flow can approximate real-world dynamics more accurately.

Figure~\ref{fig:intro}(a) illustrates a toy 2-D example, where the arrow represents the true dynamics and points indicate observations with discrete timestamps, shown in purple and orange.
Even though the true dynamics follow a curved path (Fig.~\ref{fig:intro}(a)), estimating an ODE directly from observations with discrete timestamps yields a spurious linear flow (Fig.~\ref{fig:intro}(b)).
CT-OT Flow first infers high-resolution time labels, indicated by the point colors in Fig.~\ref{fig:intro}(c), and then reconstructs a continuous trajectory that matches the ground truth.

\textbf{Contributions.} (1) We introduce CT-OT Flow, which first applies optimal transport theory to infer high-resolution time labels (\S\ref{sec:step1}) and then uses kernel-based smoothing to generate continuous-time data distributions (\S\ref{sec:step2}).
This framework allows for robust ODE/SDE estimation in scenarios where observation times are both discretized and noisy.
(2) Although CT-OT Flow is formulated as a mixed-integer linear programming problem, we provide a relaxed formulation and an efficient algorithm that can solve it within practical computation times (\S\ref{sec:step1}).
(3) Through numerical experiments on multiple tasks, including a real scRNA-seq dataset, we demonstrate that CT-OT Flow achieves more accurate dynamics estimation than conventional methods (\S\ref{sec:experiments}).
The source code has been included in supplementary materials. Upon acceptance, we will publicly release the code.

\begin{figure}[t]
  \centering
  \includegraphics[width=\textwidth]{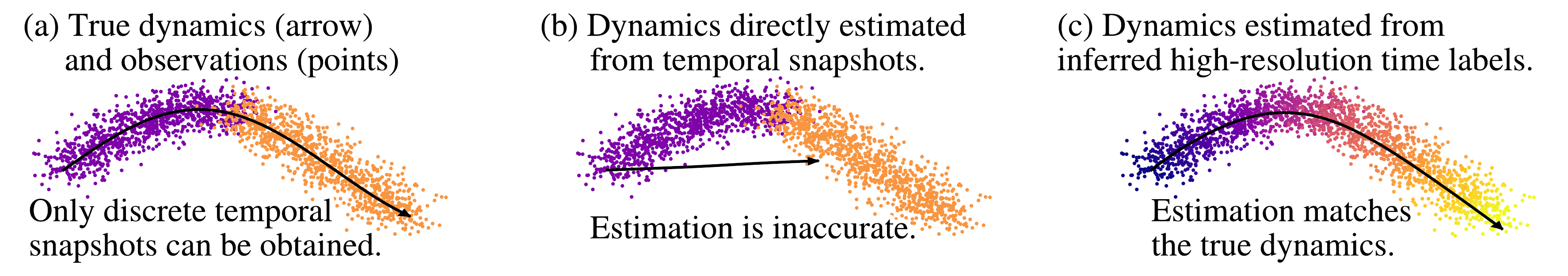}
  \caption{Motivating example for CT-OT Flow.
    (a) True dynamics (arrow) and observations (points).
    (b) Dynamics estimated directly from the discrete timestamps are inaccurate.
  (c) CT-OT Flow: Dynamics recovered accurately follow the ground truth by inferring high-resolution time labels. \label{fig:intro}}
\end{figure}

\section{Related work}
\paragraph{Dynamics estimation from temporal snapshots}
When individual trajectories cannot be observed directly, numerous methods have been proposed for estimating dynamics from temporal snapshots~\cite{tong2020trajectorynet,tong2024simulation,chen2018neural,rohbeck2025modeling,lipman2023flow,liu2023flow,korotin2023light,koshizuka2022neural,gushchin2024entropic,klein2024genot}.
Neural ODE~\cite{chen2018neural}, Flow Matching~\cite{lipman2023flow}, and Rectified Flow~\cite{liu2023flow} represent the vector fields as neural networks, enabling learning a smooth trajectory that connects an initial distribution to a terminal one.
Stochastic formulations based on SDEs and Schr\"odinger bridges~\cite{leonard2013survey} model dynamics as stochastic processes~\cite{tong2024simulation,korotin2023light,koshizuka2022neural,klein2024genot,song2021scorebased,neklyudov2023action,chen2023deep}.
To enforce smoothness across multiple time points, multi-marginal OT and spline interpolation have been explored~\cite{chen2023deep, lee2025multimarginal,banerjee2025efficient}.
These methods interpolate trajectories by imposing global smoothness, but they generally assume that accurate time labels are available.
Other works estimate more realistic dynamics by adding data density-based regularization~\cite{tong2020trajectorynet,tong2024simulation,koshizuka2022neural} or by constraining the trajectories to lie on the data manifold~\cite{kapusniak2024metric,chen2023flow}.
However, none of these methods explicitly account for practical data-collection constraints, such as discretized time labels or uncertainty in observation times.
These issues arise from the way temporal snapshots are obtained in practice (e.g., scRNA-seq experiments).
Ignoring such data-level constraints can limit the accuracy of the resulting dynamics estimation.

\paragraph{Optimal transport}
Optimal transport theory provides a framework for transforming one distribution into another by minimizing a chosen transport cost.
In addition to defining metrics such as the Wasserstein distance~\cite{peyre2019computational}, it has also been applied to dynamics estimation~\cite{tong2024improving,lipman2023flow,gushchin2024entropic}.
In particular, OT-CFM~\cite{tong2024improving} employs an optimal transport approach to infer a joint distribution between two datasets, thereby improving the learning efficiency of conditional flow matching~\cite{lipman2023flow}.
Moreover, the relationship between the Schr\"odinger bridge~\cite{leonard2013survey} and the entropic OT problem~\cite{cuturi2013sinkhorn} is well known~\cite{gushchin2024entropic}.
However, these approaches assume that the observation times are explicitly known.
Our work differs fundamentally by inferring high-resolution time label estimates for each data point, thereby accommodating discretized and noisy observation times in dynamics estimation.

\paragraph{Pseudotime analysis}
Because scRNA-seq snapshots are destructive, true temporal trajectories cannot be observed directly.
Consequently, pseudotime analysis methods have been developed to infer a relative ordering of cells from their observed states~\cite{bunne2023learning,la2018rna,Bergen2020,trapnell2014dynamics,qiu2017reversed,sha2024reconstructing}.
For example, Monocle~\cite{trapnell2014dynamics,qiu2017reversed} leverages geometric or graph structures in cell-state space to estimate trajectories, whereas RNA velocity~\cite{la2018rna} and its extension scVelo~\cite{Bergen2020} predict future cell states by comparing spliced and unspliced mRNA abundances.
Related optimal-transport–based approaches couple distributions across known capture times to infer lineage flows and population dynamics from sequential snapshots~\cite{bunne2023learning}.
In summary, pseudotime methods primarily recover a relative cell ordering and local transition structures; they typically do not calibrate to absolute time, explicitly model interval aggregation, or return a generative dynamical law.
By contrast, CT-OT Flow tackles a different task: from discretized, temporally aggregated observations it estimates a generative continuous-time model that supports simulation, interpolation, and forecasting---capabilities that standard pseudotime pipelines do not provide.
The approaches are complementary (pseudotime can aid initialization or visualization), but their objectives, outputs, and evaluation criteria are fundamentally different: pseudotime is an ordering problem; CT-OT Flow is continuous-time dynamics estimation from temporally aggregated data.

\paragraph{Continuous-time representation learning in mobility and environmental domains}
Continuous-time representation learning has also been explored in mobility sensing and environmental monitoring.
In mobility-related domains, Neural ODE/CDE-based models have been used for spatio-temporal traffic forecasting~\cite{choi2022graph}.
In environmental domains, continuous-time neural models have recently been applied to weather and climate forecasting, such as physics-informed Neural ODEs for climate forecasting~\cite{verma2024climode}.
These studies demonstrate the broad relevance of continuous-time modeling for irregularly sampled and spatio-temporal observations, while our work focuses on a complementary setting: estimating continuous-time dynamics from temporally aggregated snapshots with uncertain timestamps.

\section{Preliminaries}
\label{sec:preliminary}
\subsection{Notation}
Vectors are denoted by bold lowercase letters (e.g., $\boldsymbol a$).
Let $\boldsymbol 1_d$ denote the $d$-dimensional vector whose elements are all ones.
We write $\|\cdot\|_2$ for the $L^2$ norm and $\delta_{\boldsymbol a}(\cdot)$ for the Dirac delta function centered at $\boldsymbol a$.
For a set of $N$ data points $X=\{\boldsymbol x^{(i)}\mid i=1,\dots, N\}$, we define the corresponding empirical distribution as $\hat p_{X} = \hat p_{X}(\boldsymbol x) =   \frac{1}{N}\sum_{i=1}^N \delta_{\boldsymbol x^{(i)}}(\boldsymbol x)$.

\subsection{Partial optimal transport}
For two finite sets $X=\{\boldsymbol x^{(1)},\dots, \boldsymbol x^{(N_x)}\}$ and $Y = \{\boldsymbol y^{(1)},\dots,\boldsymbol y^{(N_y)}\}$, we define POT between them as
\begin{align}
  \label{eq:partial}
  \text{POT}_{(\tau_x, \tau_y)} (\hat p_X, \hat p_Y) = \min_{P\in U_{(\tau_x,\tau_y)}} \sum_{i=1}^{N_x}\sum_{j=1}^{N_y} P_{ij}\|\boldsymbol x^{(i)} - \boldsymbol y^{(j)}\|_2^2,
\end{align}
where $P$ is an $N_x\times N_y$ matrix with entries $P_{ij}\in[0,1]$.
The feasible set $U_{(\tau_x,\tau_y)}$ is defined by
\begin{align}
  \label{eq:partial_boundary}
  U_{(\tau_x,\tau_y)} = \left\{P\in [0,1]^{N_x\times N_y} \ \middle | \ P\boldsymbol 1_{N_y} \leq \tau_x \frac{\boldsymbol{1}_{N_x}}{N_x},\ P^\top\boldsymbol 1_{N_x} \leq \tau_y\frac{\boldsymbol{1}_{N_y}}{N_y},\ \boldsymbol 1_{N_x}^\top P \boldsymbol 1_{N_y} = 1 \right\},
\end{align}
where $\tau_x, \tau_y\geq 1$ are scalar weights associated with the two empirical distributions.
Intuitively, POT transports only a fraction $1/\tau_x$ (or $1/\tau_y$) of the total mass (see App.~\ref{appendix:pot} for details).
The 2-Wasserstein distance~\cite{peyre2019computational} between $\hat p_X$ and $\hat p_Y$ can be expressed as $\mathcal W(\hat p_X, \hat p_Y) = \left(\text{POT}_{(1,1)}(\hat p_X, \hat p_Y)\right)^{\frac{1}{2}}$. 

\textbf{Link to our method.}
In CT-OT Flow, we solve the POT problem between two contiguous time-interval snapshots to assign each data point to high-resolution time labels.

\section{Proposed method}
\subsection{Problem setting}
Let $p_t: \mathbb{R}^d \to \mathbb{R}$, $\boldsymbol{x} \mapsto p_t(\boldsymbol{x})$ be the probability density function of the data-generating distribution at time $t$
and $\{\boldsymbol x_t\}$ be a stochastic process on $\mathbb{R}^d$ indexed by $\{t\}$, where $\boldsymbol x_t\sim p_t(\boldsymbol x)$, and $d$ is the dimension of the data.
Consider the distribution of data observed over an interval from $t$ to $t+\Delta t$:
\begin{align}
  \label{eq:integral}
  p_{[t, t+\Delta t]}(\boldsymbol x) & = \frac{1}{Z}\int_{t}^{t+\Delta t} p_{t'}(\boldsymbol x) p(t')dt',
\end{align}
where $\Delta t>0$, $p(t')$ is the probability density function of the observation time, and $Z$ is the normalization constant defined as $Z=\int_{t}^{t+\Delta t}p(t')dt'$.

In this work, we consider \textit{temporal snapshots} collected between discrete timestamps $t_j$ and $t_{j+1}$, where $t_j < t_{j+1}$.
For each $j=1,\dots, T$,  we define the temporal snapshot as
\begin{align}
  X_{[t_j, t_{j+1}]} = \{\boldsymbol x^{(i)}\mid \boldsymbol x^{(i)}\sim p_{[t_j, t_{j+1}]}(\boldsymbol x),\  i=1,\dots,N_{[t_j, t_{j+1}]}\},
\end{align}
where $N_{[t_j, t_{j+1}]}=|X_{[t_j, t_{j+1}]}|$ denotes the number of data points observed within the time interval $[t_j, t_{j+1}]$.
Because the recorded observation times may be imprecise due to noise, $X_{[t_j, t_{j+1}]}$ can also include samples actually drawn from $p_{[t_j-\epsilon_t,\; t_{j+1}+\epsilon_t]}(\boldsymbol x)$ for some $\epsilon_t>0$.

Our goal is to recover the true continuous-time dynamics of $\boldsymbol{x}_t$ by training an ODE/SDE model using temporal snapshots $X_{[t_j,t_{j+1}]}$ for $j=1,\dots, T$.
Most existing methods estimate the ODE/SDE models under the simplifying assumption that data points observed in the interval $[t_j, t_{j+1}]$ are drawn from the single time distribution $p_{t_j}(\boldsymbol{x})$, rather than from the aggregated distribution $p_{[t_j,t_{j+1}]}(\boldsymbol{x})$.
As a result, the trained models can diverge substantially from the true dynamics.
In contrast, our approach first estimates the distributions $p_t(\boldsymbol{x})$ from the temporal snapshots and then fits an ODE/SDE model, yielding a more accurate reconstruction of the underlying process.

\changed{
  \paragraph{Scope and assumptions}
  CT-OT Flow is designed for temporally aggregated snapshots, where each observed set $X_{[t_j, t_{j+1}]}$ represents samples collected over a finite interval rather than an instantaneous distribution at a precise time.
  The method relies on three main assumptions.
  First, the underlying distribution $p_t(\boldsymbol x)$ changes sufficiently smoothly in time so that distributions near the boundary of two contiguous intervals are similar; this assumption may be less appropriate under abrupt regime changes or strongly non-contiguous observations, which we further analyze in Apps.~\ref{appendix:non_contiguous} and~\ref{appendix:failure_case_for_boundary_extraction}.
  Second, the within-interval sampling-time distribution $p(t)$ is either known or approximated as uniform.
  When this approximation is violated, the inferred temporal ordering can remain preserved, but the estimated velocity magnitude may become systematically biased; we analyze this case in App.~\ref{appendix:non_uniform}.
  Third, CT-OT Flow assumes mass preservation at the probability-distribution level and does not explicitly model birth-death or abundance changes.
}

\subsection{Overview of CT-OT Flow}
CT-OT Flow estimates the continuous-time distribution $p_t(\boldsymbol x)$ from observed temporal snapshots and then learns an ODE/SDE from that estimated distribution.
Leveraging the estimated continuous-time distribution enables capturing more precise dynamics.
CT-OT Flow proceeds in three main steps:
\textbf{(1)~High-resolution time label estimation via optimal transport.}
Given two contiguous temporal snapshots $X_{[t_{j-1}, t_j]}$ and $X_{[t_j,t_{j+1}]}$, CT-OT Flow uses optimal transport to infer a high-resolution time label for each data point.
\textbf{(2)~Kernel-based distribution estimation.}
Based on the inferred high-resolution time labels, CT-OT Flow estimates a distribution $\tilde p_t(\boldsymbol x)$ for any time $t$.
Specifically, it applies a kernel smoothing approach in the time dimension.
This step enhances robustness to noise in the recorded observation times and inaccuracies in the inferred time labels.
\textbf{(3)~ODE/SDE learning from the estimated distributions.}
Finally, CT-OT Flow samples from the estimated distributions $\tilde p_t(\boldsymbol x)$ and $\tilde p_{t+\delta t}(\boldsymbol x)$, and trains a model to learn the dynamics.
By choosing a small $\delta t$, we can obtain data pairs at closely spaced time points, thereby fitting an ODE/SDE at a finer temporal scale than conventional approaches, resulting in more precise dynamics.
We describe these steps in detail below.
Pipeline and pseudocode for CT-OT Flow are shown in Fig.~\ref{fig:pipeline} and Algorithm~\ref{algorithm:proposed} in App.~\ref{appendix:algorithm}, respectively.

\begin{figure}
  \centering
  \includegraphics[width=\textwidth]{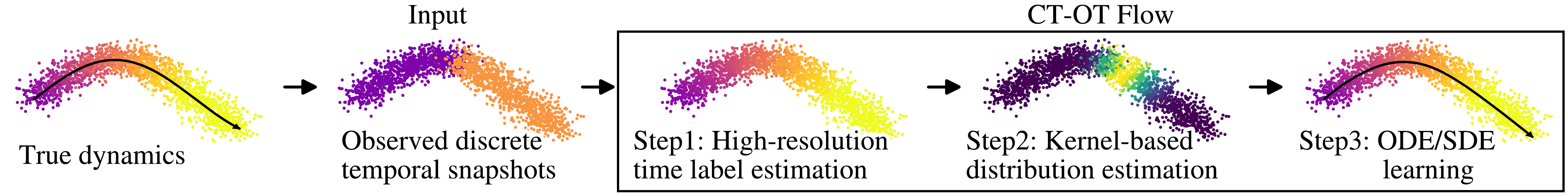}
  \caption{CT-OT Flow Pipeline: POT-based high-resolution time label estimation, kernel-based continuous-time distribution estimation, and ODE/SDE model training.}
  \label{fig:pipeline}
\end{figure}

\subsection{Step 1: High-resolution time label estimation}
\label{sec:step1}
CT-OT Flow first estimates high-resolution time labels for all data points observed in the interval~$[t_j, t_{j+1}]$.
Conventional methods assume that all data points in this interval belong to a single time~$t_j$.
However, in reality, each data point may have a distinct true observation time.
Assuming $p_t(\boldsymbol x)$ varies continuously in $t$, CT-OT Flow infers a high-resolution time label for each data point.

\paragraph{Continuity assumption}
We assume that the underlying distribution $p_t(\boldsymbol x)$ is continuous in $t$, i.e.,
\begin{align}
  \label{eq:continuity_assumption}
  \lim_{\Delta t \to 0} p_{t+\Delta t}(\boldsymbol x)
  = \lim_{\Delta t \to 0} p_{t-\Delta t}(\boldsymbol x)
  = p_t(\boldsymbol x).
\end{align}

As a consequence, the aggregated distribution over a short interval $p_{[t, t+\Delta t]}(\boldsymbol x)$ converges to $p_t(\boldsymbol x)$ as $\Delta t \to 0$, i.e.,
\begin{align}
  \label{eq:continuous}
  \lim_{\Delta t \to +0} p_{[t, t+\Delta t]}(\boldsymbol x)=\lim _{\Delta t \to +0} p_{[t-\Delta t, t]}(\boldsymbol x) = p_t(\boldsymbol x).
\end{align}
For two contiguous intervals $[t_{j-1}, t_j]$ and $[t_j, t_{j+1}]$ with $\Delta t_j, \Delta t_{j+1}>0$, the 2-Wasserstein distance between the distributions of their subintervals $[t_{j}-\Delta t_j, t_j]$ and $[t_j, t_{j}+\Delta t_{j+1}]$ converges to zero:
\begin{align}
  \label{eq:wass_continuous}
  \lim_{\Delta t_j, \Delta t_{j+1}  \to +0} \mathcal W\left(p_{[t_j-\Delta t_{j}, t_j]}(\boldsymbol x), p_{[t_j, t_j+\Delta t_{j+1}]}(\boldsymbol x)\right) = 0.
\end{align}

\paragraph{Identifying boundary subsets}
Given two snapshots $X_{[t_{j-1}, t_{j}]}$ and $X_{[t_{j}, t_{j+1}]}$, CT-OT Flow identifies data points near their boundary $t_j$ based on Eq.~\eqref{eq:wass_continuous}.
Specifically, it finds subsets $S^-\subseteq X_{[t_{j-1}, t_{j}]}$ and $S^+\subseteq X_{[t_{j}, t_{j+1}]}$ that capture the empirical distributions on the intervals $[t_j-\Delta t_j, t_j]$ and $[t_j, t_j+\Delta t_{j+1}]$, respectively.
These subsets are estimated by solving the following optimization:
\begin{align}
  \label{eq:MILP}
  & \min_{S^-\subseteq X_{[t_{j-1}, t_{j}]}, S^+\subseteq X_{[t_{j}, t_{j+1}]}} \mathcal W^2(\hat p_{S^-}, \hat p_{S^+})\ \ \ \ \text{s.t.\ } |S^-| = \left\lceil\frac{N_{[t_{j-1}, t_{j}]}}{K}\right\rceil, |S^+| = \left\lceil\frac{N_{[t_{j}, t_{j+1}]}}{K}\right\rceil,
\end{align}
where $\lceil \cdot \rceil$ is the ceiling function, $\mathcal W^2$ is the squared 2-Wasserstein distance, and $K>0$ is an integer.

Solving Eq.~\eqref{eq:MILP} extracts subsets from the two snapshots whose distributions are close in terms of the Wasserstein distance.
According to Eq.~\eqref{eq:wass_continuous}, increasing $K$ drives the distance between these two distributions toward zero, and thus the extracted subsets are expected to approximate $p_{t_j}(\boldsymbol x)$.
However, because $X_{[t_{j-1}, t_{j}]}$ and $X_{[t_{j}, t_{j+1}]}$ contain only a finite number of points, letting $K\to\infty$ trivially yields $S^-\to\emptyset$ and $S^+\to\emptyset$.
Hence, in practice, we fix a finite $K$ so as to estimate the data points in the contiguous intervals $[t_j-\Delta t_j, t_j]$ and $[t_j, t_j+\Delta t_{j+1}]$.
Our preliminary experiments indicate that tuning $K$ suitably allows for a robust extraction of these subsets; see \S\ref{sec:hyperparameters} for more details.
Figure~\ref{fig:S}~(a) shows an example of $S^-$ and $S^+$ identified by CT-OT Flow.

\begin{figure}
  \centering
  \includegraphics[width=\textwidth]{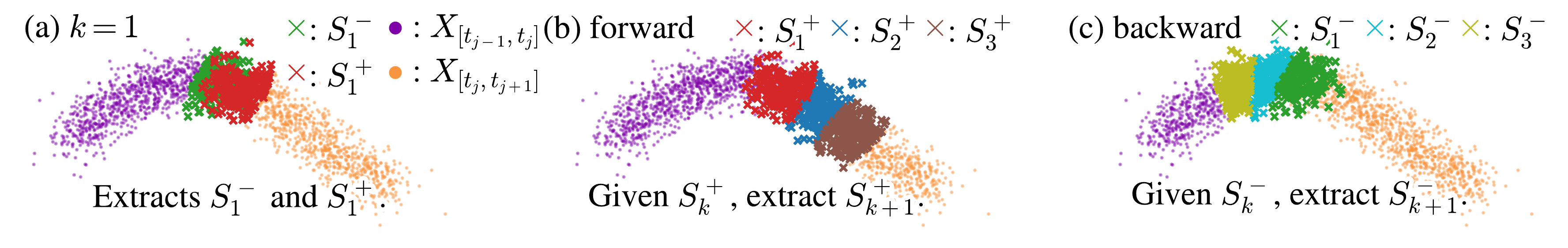}
  \caption{Step 1: high-resolution time label estimation via POT.
    (a) CT-OT Flow first extracts subsets $S^-_1$ and $S^+_1$ near the boundary of two time intervals $X_{[t_{j-1}, t_j]}$ (purple) and $X_{[t_{j}, t_{j+1}]}$ (orange).
    (b) (Forward) CT-OT Flow iteratively identifies the subset $S_{k+1}^+$.
  (c) (Backward) Similarly, CT-OT Flow identifies $S_{k+1}^-$. We set $K=5$ to better visualize the subset extraction process.    \label{fig:S}}
\end{figure}

\paragraph{Mixed-Integer Linear Programming relaxation}
The optimization problem in Eq.~\eqref{eq:MILP} is a mixed-integer linear programming (MILP).
As the dataset size grows, the computational cost escalates, making it increasingly difficult to solve directly.
Therefore, instead of solving Eq.~\eqref{eq:MILP} exactly, CT-OT Flow adopts a continuous relaxation of the problem, formulated as follows:
\begin{align}
  \label{eq:pw}
  \min_{P\in U_{(K,K)}} \sum_{m=1}^{N_{[t_{j-1}, t_{j}]}} \sum_{n=1}^{N_{[t_{j}, t_{j+1}]}} P_{m,n}\|\boldsymbol x^{(m)} - \boldsymbol x^{(n)}\|_2^2,
\end{align}
where $\boldsymbol x^{(m)}\in X_{[t_{j-1},t_{j}]}$ and $\boldsymbol x^{(n)}\in X_{[t_{j},t_{j+1}]}$.
\begin{proposition}
  \label{prop:relaxation}
  The problem in Eq.~\eqref{eq:pw} is the continuous relaxation of the MILP in Eq.~\eqref{eq:MILP}.
\end{proposition}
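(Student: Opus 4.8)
The plan is to exhibit Eq.~\eqref{eq:MILP} as a genuine mixed-integer linear program and then show that dropping its integrality constraints reproduces exactly the partial-OT problem of Eq.~\eqref{eq:pw}. The difficulty is that Eq.~\eqref{eq:MILP} is written as a bilevel problem: an outer combinatorial search over subsets $S^-, S^+$ and an inner optimal transport hidden inside $\mathcal{W}^2$. My first step is to flatten this into a single minimization. Writing $c_{mn} = \|\boldsymbol x^{(m)} - \boldsymbol x^{(n)}\|_2^2$ and introducing binary membership indicators $a$ for $S^-$ and $b$ for $S^+$ together with a nonnegative transport matrix $R$, I would impose the cardinality constraints $\sum_m a_m = n^- := \lceil N_{[t_{j-1},t_j]}/K\rceil$ and $\sum_n b_n = n^+ := \lceil N_{[t_j,t_{j+1}]}/K\rceil$, the marginal constraints $\sum_n R_{mn} = a_m/n^-$ and $\sum_m R_{mn} = b_n/n^+$, and the objective $\sum_{m,n} R_{mn} c_{mn}$. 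The key observation at this stage is that the marginal constraints simultaneously encode two things: they force $R$ to be supported on $S^-\times S^+$ (a row or column whose indicator is $0$ is forced to sum to zero, hence vanishes since $R\ge 0$), and, for fixed $a,b$, they are precisely the coupling constraints defining $\mathcal{W}^2(\hat p_{S^-},\hat p_{S^+})$. Hence the inner minimization over $R$ returns exactly the squared Wasserstein distance, so this single-level program is equivalent to Eq.~\eqref{eq:MILP}.

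Next I would relax the integrality: replace the binary constraints on $a$ and $b$ by $a_m\in[0,1]$ and $b_n\in[0,1]$. Because $a,b$ appear only through the marginal equalities, I can eliminate them by substituting $a_m = n^-\sum_n R_{mn}$ and $b_n = n^+\sum_m R_{mn}$. The box constraints $a_m\le 1$ and $b_n\le 1$ then become $\sum_n R_{mn}\le 1/n^-$ and $\sum_m R_{mn}\le 1/n^+$, while the cardinality constraints collapse to the single total-mass condition $\sum_{m,n}R_{mn} = 1$; nonnegativity of $R$ is retained, and the entrywise bound $R_{mn}\le 1$ holds automatically since each entry is dominated by its row sum $\le 1/n^-\le 1$. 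Identifying $P=R$, this is exactly the feasible set $U_{(K,K)}$ of Eq.~\eqref{eq:partial_boundary} once $1/n^- = K/N_{[t_{j-1},t_j]}$ and $1/n^+ = K/N_{[t_j,t_{j+1}]}$, with the same linear objective, which is precisely Eq.~\eqref{eq:pw}.

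The step I expect to be the main obstacle is reconciling the ceiling: $1/\lceil N/K\rceil$ equals $K/N$ only when $K$ divides $N$, and otherwise $1/\lceil N/K\rceil < K/N$, so the set obtained by literal relaxation sits strictly inside $U_{(K,K)}$. I would handle this in one of two ways. The clean route is to state the equivalence under the mild assumption that $K$ divides $N_{[t_{j-1},t_j]}$ and $N_{[t_j,t_{j+1}]}$, giving an exact identification of the feasible sets. The robust route, which needs no divisibility, is to prove only the relaxation inequality directly: given any feasible $(S^-,S^+)$ of Eq.~\eqref{eq:MILP} with optimal coupling $Q$, padding $Q$ with zero rows and columns yields a matrix $R$ whose row sums are $0$ or $1/n^-\le K/N_{[t_{j-1},t_j]}$, whose column sums are $0$ or $1/n^+\le K/N_{[t_j,t_{j+1}]}$, and whose total mass is $1$; hence $R\in U_{(K,K)}$ with objective equal to $\mathcal{W}^2(\hat p_{S^-},\hat p_{S^+})$. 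This shows the optimum of Eq.~\eqref{eq:pw} lower-bounds that of Eq.~\eqref{eq:MILP}, which is the defining property of a relaxation and holds for every $N$ and $K$. I would present the divisibility case for the exact feasible-set identification and remark that this lower-bound argument secures the relaxation claim in general.
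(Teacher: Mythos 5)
Your proposal is correct and follows essentially the same route as the paper's proof in Appendix~\ref{appendix:relaxation}: introduce binary membership indicators, express the inner Wasserstein distance as a transport problem whose marginals are tied to those indicators, relax the indicators to $[0,1]$, and eliminate them to obtain the partial-OT inequality constraints and total-mass condition. Your additional care with the ceiling (noting that $1/\lceil N/K\rceil$ matches $K/N$ only under divisibility, and otherwise falling back on the one-directional containment of the integer-feasible set) is a legitimate refinement of a point the paper leaves implicit, since its own conclusion likewise rests on the statement that every integer-feasible solution is feasible for the relaxed POT problem.
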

A proof is provided in App.~\ref{appendix:relaxation}.

Eq.~\eqref{eq:pw} is a POT problem, $\text{POT}_{(K,K)}(\hat p_{X_{[t_{j-1}, t_j]}}, \hat p_{X_{[t_{j}, t_{j+1}]}})$, so its optimal plan $P^*$ provides a \emph{soft} matching between the two snapshots and can be computed within a practical time.
To obtain hard subsets of the desired size, we sort all points by their row (or column) sums in $P^*$ and retain exactly $\lceil N_{[t_{j-1},t_j]}/K\rceil$ (or $\lceil N_{[t_j,t_{j+1}]}/K\rceil$) points on the first (or second) snapshot.
In this way we keep only those samples that participate \emph{most strongly} in the optimal transport, simultaneously satisfying the cardinality constraint and concentrating on points that lie closest to the interval boundary.
The empirical approximation quality of this POT-based relaxation relative to the exact MILP is examined in App.~\ref{app:milp-pot}.

\paragraph{Iterative extraction of subsets}
CT-OT Flow then iteratively applies a similar procedure to identify subsets in each contiguous time sub-interval.
Specifically, for $k=1,\dots, K-2$, we iteratively identify the subsets $S_{k+1}^-\subset X_{[t_{j-1}, t_{j}]}$ (backward) and $S_{k+1}^+\subset X_{[t_{j}, t_{j+1}]}$ (forward) as follows.
\begin{align}
  \label{eq:k2_1}
  \min_{ S_{k+1}^-\subset X_{[t_{j-1}, t_{j}]}\setminus (\cup_{k'=1}^{{k}}S_{k'}^-)} \mathcal W^2(\hat p_{S^-_{k}}, \hat p_{S^-_{k+1}}) \ \text{s.t.}\  |S_{k+1}^-| = \left\lceil\frac{N_{[t_{j-1}, t_{j}]}}{K}\right\rceil\ \  & \text{(backward)}, \\
  \label{eq:k2_2}
  \min_{ S_{k+1}^+\subset X_{[t_{j}, t_{j+1}]}\setminus (\cup_{k'=1}^{{k}}S_{k'}^+)} \mathcal W^2(\hat p_{S^+_{k}}, \hat p_{S^+_{k+1}}) \ \text{s.t.}\  |S_{k+1}^+| = \left\lceil\frac{N_{[t_{j}, t_{j+1}]}}{K}\right\rceil\ \  & \text{(forward)},
\end{align}
where $S_1^-=S^-, S_1^+=S^+$.
By applying the same relaxation as in Eq.~\eqref{eq:MILP}, each step can be solved via POT.
See App.~\ref{appendix:relaxation} for details.
For $k=K$, we take $S_K^- = X_{[t_{j-1}, t_{j}]} \setminus \cup_{k=1}^{K-1}S_k^-$ and $S_K^+ = X_{[t_{j}, t_{j+1}]} \setminus \cup_{k=1}^{K-1}S_k^+$.
Here, we expect $S_{k+1}^-$ and $S_{k}^-$ (and $S_{k+1}^+$ and $S_{k}^+$) to correspond to temporally contiguous subsets.
Figures~\ref{fig:S}~(b) and (c) illustrate examples of $S_k^-$ and $S_k^+$.
In CT-OT Flow, the subsets are thus extracted step-by-step from each boundary, moving backward and forward.

\paragraph{High-resolution time labels}
Through the above procedure, each of $X_{[t_{j-1}, t_{j}]}$ and $X_{[t_{j}, t_{j+1}]}$ is partitioned into $K$ subsets.
We then assign high-resolution time labels to each data point $\boldsymbol x^{(i)}$ according to the subsets.
If the sampling density $p(t)$ inside each coarse interval is known (e.g., from experimental design), high-resolution time labels can be obtained via the inverse CDF:
\begin{align}
  \label{eq:times_backward}
  \tilde{t}^{(i)}_{-}
  & = F_{-,j}^{-1}\!\Bigl(\tfrac{k}{K+1}\Bigr) \in [t_{j-1}, t_j], \ \ \text{if}\ \boldsymbol x^{(i)} \in S^-_k \subset X_{[t_{j-1},t_j]} \ \ \ \text{(backward)}, \\
  \label{eq:times_forward}
  \tilde{t}^{(i)}_{+}
  & =  F_{+,j}^{-1}\!\Bigl(\tfrac{k}{K+1}\Bigr)  \in [t_{j}, t_{j+1}],\ \ \text{if}\ \boldsymbol x^{(i)} \in S^+_k \subset X_{[t_{j},t_{j+1}]} \ \ \ \text{(forward)},
\end{align}
where $F_{-,j}(t) = \frac{1}{Z_{-,j}}\int_{t}^{t_j} p(t')dt'$ and $F_{+,j}(t)= \frac{1}{Z_{+,j}}\int_{t_j}^{t} p(t')dt'$, and $F_{-,j}^{-1}$ and $F_{+,j}^{-1}$ are their inverse functions.
$Z_{-,j}=\int_{t_{j-1}}^{t_j} p(t')dt'$ and $Z_{+,j}=\int_{t_j}^{t_{j+1}} p(t')dt'$ are normalization constants.
Hence, each data point is assigned a high-resolution time label $\tilde t^{(i)} = \tilde t^{(i)}_{-}$ (or $\tilde t^{(i)}_{+}$).
Applying this procedure to every pair of contiguous intervals in $\{X_{[t_j,t_{j+1}]}\}_{j=1}^{T}$ yields high-resolution time labels.
If a point is assigned both $\tilde t^{(i)}_{-}$ and $\tilde t^{(i)}_{+}$, we take their average as the final estimate.

When the true observation-time distribution $p(t)$ is unknown, we simply assume it to be \textbf{uniform within each coarse interval}.
In the following, we focus on the uniform scenario.
\changed{Non-uniform $p(t)$ analysis is provided in App.~\ref{appendix:non_uniform}, where performance degrades under mis-specified $p(t)$ but remains better than baselines.}
Although CT-OT Flow is derived for contiguous intervals, it can also be applied to non-contiguous pairs and often still yields reliable high-resolution time labels (see \S\ref{sec:experiments} and App.~\ref{appendix:non_contiguous}).

\subsection{Step 2: Estimating data distribution in continuous time}
\label{sec:step2}
Using the high-resolution time labels $\tilde t^{(i)}$, CT-OT Flow constructs the data distribution at any time~$t$.
Specifically, we define the \textit{kernel-based time-smoothed empirical distribution} at time $t$:
\begin{align}
  \label{eq:kernel_time_dist}
  \tilde p_t(\boldsymbol x) = \frac{1}{Z_\text{kernel}(t)}\sum_{i=1}^{|X|} k(t, \tilde t^{(i)})\delta_{\boldsymbol x^{(i)}}(\boldsymbol x),
\end{align}
where $X = \cup_{j=1}^T X_{[t_{j}, t_{j+1}]}$ denotes the set of all data points, $Z_\text{kernel}(t)=\sum_{i=1}^{|X|} k(t, \tilde t^{(i)})$, and $k(t, \tilde t^{(i)})$ is a kernel function.
For example, we can use the Gaussian kernel
$k(t, \tilde t^{(i)}) = \exp\left(-|t-\tilde t^{(i)}|^2 / \gamma\right)$,
where $\gamma>0$ controls the degree of smoothing.

This approach yields a robust estimate of the distribution under temporal noise.
In particular, data points near interval boundaries (e.g., around $t_j$) may be assigned to different time intervals if observation times are noisy.
By smoothing along the \emph{time axis}, the estimator becomes tolerant to both high-resolution time label errors and measurement noise.
Note that this smoothing is applied in the temporal dimension, unlike smoothing in the feature space, such as in kernel density estimation~\cite{bw1986density}.
In \S\ref{sec:hyperparameters}, we demonstrate the effect of $\gamma$ on $p_t(\boldsymbol{x})$.

\subsection{Step 3: Dynamics estimation}
CT-OT Flow then estimates the dynamics using $\tilde p_t(\boldsymbol x)$ from Step 2.
Specifically, we draw paired samples from $\tilde p_t(\boldsymbol x)$ and $\tilde p_{t+\delta t}(\boldsymbol x)$ for $\delta t>0$, and fit an ODE/SDE model to them.
A key advantage of CT-OT Flow is that $\delta t$ can be chosen arbitrarily small, allowing more realistic dynamics estimation, whereas conventional methods rely on a fixed set of discrete time points.
Figure~\ref{fig:delta} shows an example with $t=2.0$ and $\delta t=0.3$, where the time intervals of the inputs are $[1,2]$ and $[2,3]$.
\begin{figure}
  \centering
  \includegraphics[width=0.96\textwidth]{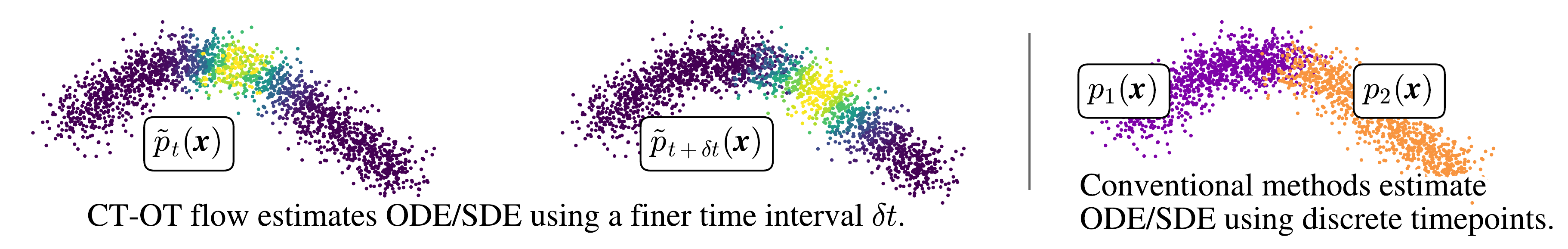}
  \caption{
    Steps 2\&3. A kernel function produces a smoothed empirical distribution $\tilde p_t(\boldsymbol x)$ at any continuous time~$t$.
  We then sample from $\tilde p_t$ and $\tilde p_{t+\delta t}$ to train a dynamics model. Unlike conventional methods that operate only on pre-specified discrete times, CT-OT Flow allows arbitrarily fine choices of $\delta t$, capturing more realistic trajectories.    \label{fig:delta}}
\end{figure}

CT-OT Flow can be combined with existing ODE/SDE models.
For instance, Rectified Flow~\cite{liu2023flow} models the velocity field with a neural network $\boldsymbol v_\theta(\boldsymbol x_t,t)$.
Let $\boldsymbol x^{(0)}$ and $\boldsymbol x^{(1)}$ be samples from $\tilde p_t(\boldsymbol x)$ and $\tilde p_{t+\delta t}(\boldsymbol x)$, respectively.
Their velocity vector is estimated as $\frac{\boldsymbol x^{(1)} - \boldsymbol x^{(0)}}{\delta t}$.
We adapt the Rectified Flow loss to our framework as follows:
\begin{align}
  \label{eq:loss_rf}
  \mathcal L_\text{RF} = \mathbb E_{(\boldsymbol x^{(0)}, \boldsymbol x^{(1)}, t, t')\sim Q}\left[\left\|\frac{\boldsymbol x^{(1)} - \boldsymbol x^{(0)}}{\delta t} - \boldsymbol v_\theta(t'\boldsymbol x^{(1)} + (1-t')\boldsymbol x^{(0)}, t+t'\delta t) \right\|_2^2 \right],
\end{align}
where $Q=\tilde p_t(\boldsymbol x)\otimes \tilde p_{t+\delta t}(\boldsymbol x)\otimes \mathcal U(t|t_1,t_{T+1}-\delta t)\otimes \mathcal U(t'|0,1)$, $\mathcal U$ is a uniform distribution, and $\otimes$ denotes the product of probability distributions.
Thanks to CT-OT Flow, the time variable $t$ is sampled from a \emph{continuous} range rather than a discrete set.
Thus, CT-OT Flow enables learning of dynamics from a continuous temporal perspective.
Similarly, our framework can be integrated with methods such as conditional flow matching~\cite{tong2024improving} or [SF]$^2$M~\cite{tong2024simulation} for ODE/SDE estimation.

\subsection{Computational cost}
The overall runtime of CT-OT Flow depends on three parameters: the number of intervals $T$, the number of samples per interval $N$, and the subdivision factor $K$.
We denote by $C(M,N)$ the cost of solving a single POT instance of size $M\times N$.
While the network simplex algorithm has worst-case super-cubic complexity~\cite{ahuja1993network}, the Sinkhorn iteration runs in $O(MN)$ time per iteration~\cite{cuturi2013sinkhorn}.

\textbf{(Step~1.)}
For $k=1$, we solve one POT problem for each of the $T$ snapshot pairs, costing $TC(N,N)$.
For each subsequent level $k=2,\dots,K-1$ we solve two POT problems (backward and forward) between a block of $\frac{N}{K}$ points and the remaining $\frac{N(K-k+1)}{K}$ points.
This incurs an additional cost of $2T\sum_{k=2}^{K-1} C(\frac{N}{K},\frac{N(K-k+1)}{K})$.
One practical acceleration splits each interval’s points into $\tfrac{N}{m}$ random mini-batches of size $m$, runs CT-OT Flow independently on $\tfrac{N}{m}$ batch pairs, and then concatenates the resulting high-resolution time labels.
With fixed $m$, the cost of Step 1 scales linearly in $N$.
Here, $m$ trades off accuracy (large $m$ preserves global structure) and computation (small $m$ is faster). \\
\textbf{(Step~2.)} Kernel smoothing over the $T$ intervals requires $O(TN)$ time. \\
\textbf{(Step~3.)} The cost matches that of the downstream ODE/SDE learner (e.g., Rectified Flow~\cite{liu2023flow}).

On a single CPU thread using network simplex, Step 1 takes approximately 70 seconds for $N=10^4, T=1, K=100$.
Under a mini-batch setup with $m=10^3$, the runtimes are approximately 3 seconds, 30 seconds, and 400 seconds for $N=10^4$, $10^5$, and $10^6$, respectively.
Note that these computation times are shorter than those of the neural network training phase in existing methods or Step~3, which can range from minutes to hours.
See App.~\ref{appendix:computational_time} for details on computation times when varying $N$ and $K$, as well as the mini-batch approach.

\section{Experiments}
\label{sec:experiments}
We evaluate the performance of CT-OT Flow through numerical experiments.
First, we describe the evaluation metrics in \S\ref{sec:metrics}.
Next, in \S\ref{sec:artificial}, we assess how well each method recovers the true dynamics using synthetic data.
\changed{Performance sensitivity to hyperparameters is examined in \S\ref{sec:hyperparameters}.  }
Finally, we present an application to real scRNA-seq data and meteorological data in \S\ref{sec:scRNA-seq} and \S\ref{sec:typhoon}, respectively.
For these experiments, we compare CT-OT Flow against recent methods, including ODE-based approaches such as I-CFM, OT-CFM~\cite{tong2024improving}, TrajectoryNet-Base~\cite{tong2020trajectorynet},
and Metric Flow Matching (MFM)~\cite{kapusniak2024metric}, as well as SDE-based approaches such as $\text{[SF]}^2$M-I, $\text{[SF]}^2$M-Exact~\cite{tong2024simulation} and Entropic Neural Optimal Transport (ENOT)~\cite{gushchin2024entropic}.
We additionally compared CT-OT Flow with Slingshot~\cite{street2018slingshot}, a widely used pseudotime inference method.
Since Slingshot outputs only a one-dimensional pseudotime ordering of the samples, but not trajectories, we used Slingshot's pseudotime labels as inputs to Step 2 and Step 3 of CT-OT Flow (ODE).
Because Slingshot requires cluster assignments as input, we applied $k$-means clustering ($k$=6) to each snapshot.
The cluster corresponding to the truly earliest time was designated as the root cluster.
We combine CT-OT Flow with OT-CFM~\cite{tong2024improving} to estimate ODEs and with $\text{[SF]}^2$M-Exact~\cite{tong2024simulation} to estimate SDEs.
We use a Gaussian kernel with parameter $\gamma=0.005$ for Step~2, and the subdivision factor is set to $K=100$ for all datasets.
Additional experimental parameters are listed in App.~\ref{appendix:experiments_parameters}.

\subsection{Evaluation metrics}
\label{sec:metrics}
When the true dynamics are known (i.e., when the true data distribution $p_t^*(\boldsymbol x)$ at each time $t$ and the corresponding ODE/SDE describing the movement of data points are available) we evaluate each method by comparing the trajectories derived from the true dynamics to the estimated trajectories.
We assume that the true system produces trajectories $X^{*}_j = [{\boldsymbol x}_{j,0}^*,\dots,{\boldsymbol x}^*_{j,T^*}]$ for $j=1,\dots, N^*$, where $T^*$ is the number of time steps and ${\boldsymbol x}_{j,0}^*$ denotes the mean of the true initial distribution $p^*_0(\boldsymbol x)$.
Each model produces an estimated trajectory $\hat X^{(i)} = [\hat {\boldsymbol x}_0^{(i)},\dots,\hat {\boldsymbol x}_{\hat T_i}^{(i)}]$, where $\hat T_i$ is the number of simulated time steps and $i=1,\dots, \hat N$; the initial point $\hat {\boldsymbol x}_0^{(i)}$ is drawn from $p^*_0(\boldsymbol x)$ (not from $p_{[0,1]}(\boldsymbol x)$).

We employ two metrics to evaluate the accuracy of the estimated dynamics.
The first metric quantifies the discrepancy between the true and estimated trajectories using Dynamic Time Warping~\cite{sakoe1978dynamic}:
\begin{align}
  \mathcal L_\text{DTW} = \frac{1}{\hat N}\sum_{i=1}^{\hat N} \min_{j} \text{DTW}(X_j^*, \hat X^{(i)} )
\end{align}
The second metric computes the average 1-Wasserstein distance~\cite{peyre2019computational} between the true distribution $p^*_{t}(\boldsymbol x)$ and the distribution estimated from the simulated trajectories:
\begin{align}
  \mathcal L_\text{Wass} = \frac{1}{T^*}\sum_{t=1}^{T^*}\mathcal W_1(\hat p_{\hat{\boldsymbol x}_{t'}}, p^*_{t}(\boldsymbol x)), \ \text{where}\ t' = \left\lfloor t\,\frac{\hat T_i}{T^*}\right\rfloor, \hat {\boldsymbol x}_{t'} = \{\hat{\boldsymbol x}_{t'}^{(i)}\}_{i=1}^{\hat N}.
\end{align}
Here, $\lfloor \cdot \rfloor$ represents the floor function and $\mathcal W_1$ is the 1-Wasserstein distance~\cite{peyre2019computational}.

\subsection{Synthetic data experiments}
\label{sec:artificial}
We evaluate dynamics estimation methods using three synthetic datasets; Spiral, Y-shaped, and Arch~\cite{kapusniak2024metric}.
For Spiral and Y-shaped, data are collected over two contiguous time intervals, while the Arch dataset comprises data from two non-contiguous intervals. See App.~\ref{appendix:data_generation} for details.

Figure~\ref{fig:trajectories_spiral} shows estimated trajectories for the Spiral dataset, while Fig.~\ref{fig:appendix_trajectory} in App.~\ref{appendix:additional_results} shows those for the Y-shaped and Arch datasets.
In these figures, the colors of the points indicate discrete time intervals, and for CT-OT Flow, the color of each point reflects its high-resolution time label.
Table~\ref{tab:artificial} summarizes the mean and standard deviation of the metrics over \changed{20} runs with different neural network initializations.
Boldface marks the best mean within each ODE/SDE group.
\begin{figure}
  \centering
  \includegraphics[width=0.9\textwidth]{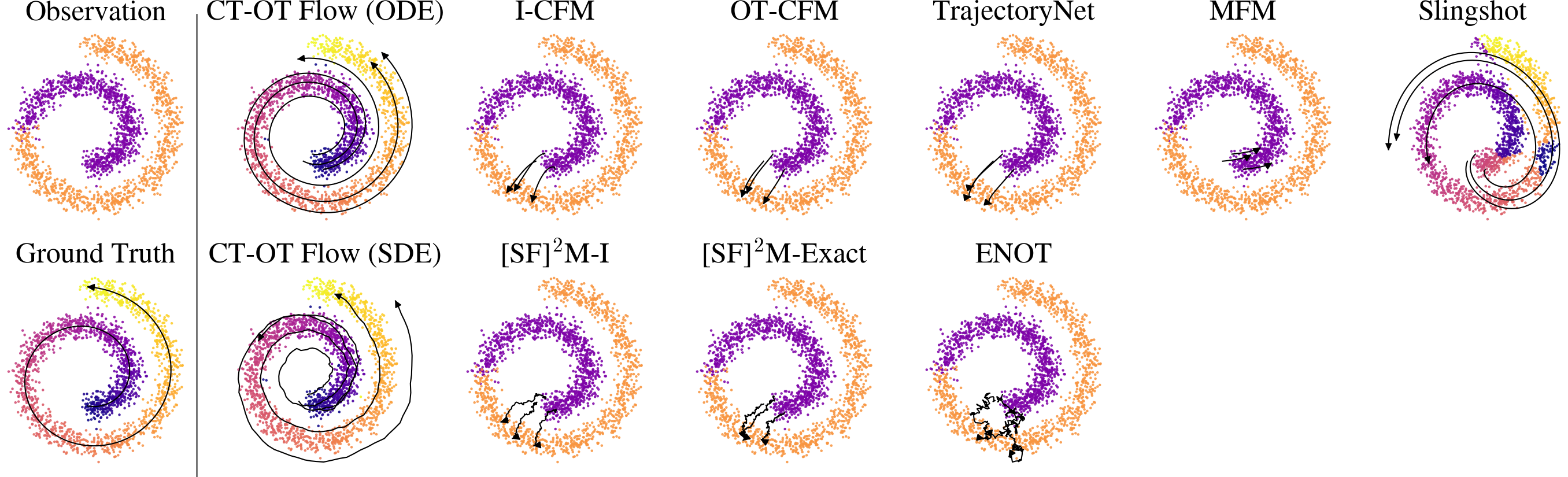}
  \caption{Estimated trajectories on the Spiral dataset. The black lines indicate the true or estimated trajectories, while the color of each point indicates its time label. For CT-OT Flow and Slingshot, the colors represent the estimated high-resolution time labels and pseudotime labels, respectively. The initial points are sampled from $p_0(\boldsymbol x)$ but not from $p_{[0,1]}(\boldsymbol x)$. The proposed method effectively reproduces the true dynamics, whereas conventional methods often yield larger deviations from the ground truth.    \label{fig:trajectories_spiral}}
\end{figure}

These results indicate that CT-OT Flow recovers trajectories that are closer to the true dynamics than those yielded by conventional methods for both the Spiral and Y-shaped datasets, owing to its consideration of discretized times and time uncertainty.
Moreover, even for the Arch dataset, where the observed intervals are non-contiguous, our method achieves accuracy comparable to that of MFM~\cite{kapusniak2024metric} in predicting the trajectories.
See App.~\ref{appendix:non_contiguous} for additional evaluation on data with non-contiguous intervals.
Replacing Step 1 with Slingshot yielded a slight improvement on the Arch dataset but a pronounced degradation on the Spiral dataset.
This contrast stems from a methodological limitation: Slingshot does not model discretized observation times.
Lacking this temporal signal, it struggles to resolve complex manifolds and consequently misaligns transitions on Spiral.
By contrast, CT-OT Flow enforces distributional continuity across contiguous snapshots to estimate instantaneous distributions, enabling it to faithfully capture intricate manifolds.
For clarity, Slingshot alone yields only an ordering--not a predictive dynamical model--and thus cannot predict trajectories for individual data points; those predictions are provided by CT-OT Flow's Steps 2-3.
\changed{Additional results on temporal label estimation accuracy are provided in App.~\ref{app:temporal-label-synthetic}.
We also consider a more challenging setting in which true initial states are unavailable at test time in App.~\ref{app:conditioned-no-initial}.}

\begin{table}[t]
  \centering
  \caption{
    Estimation errors (mean$\pm$std) on the synthetic datasets.
  Bold: best within ODE/SDE groups.}
  \label{tab:artificial}
  \hspace{-5mm}
  \begin{tabular}{ccccccc}
    \toprule
    Dataset$\rightarrow$ & \multicolumn{2}{c}{Spiral} & \multicolumn{2}{c}{Y-shaped} & \multicolumn{2}{c}{Arch} \\
    \cmidrule(lr){2-3}
    \cmidrule(lr){4-5}
    \cmidrule(lr){6-7}
    Method$\downarrow$ Metric$\rightarrow$ & $\mathcal L_\text{DTW}$ & $\mathcal L_\text{Wass}$ & $\mathcal L_\text{DTW}$ & $\mathcal L_\text{Wass}$ & $\mathcal L_\text{DTW}$ & $\mathcal L_\text{Wass}$ \\
    \toprule
    CT-OT Flow (ODE) & \textbf{9.63}\std{1.25} & \textbf{0.31}\std{0.05} & \textbf{10.80}\std{0.93} & \textbf{0.43}\std{0.11} & 6.81\std{1.28} & 0.23\std{0.05} \\
    I-CFM~\cite{tong2024improving} & 50.60\std{0.20} & 1.12\std{0.01} & 21.54\std{0.35} & 0.61\std{0.01} & 29.36\std{0.33} & 0.58\std{0.01} \\
    OT-CFM~\cite{tong2024improving} & 51.41\std{0.02} & 1.17\std{0.00} & 29.42\std{0.20} & 0.90\std{0.00} & 31.65\std{0.12} & 0.62\std{0.00} \\
    TrajectoryNet~\cite{tong2020trajectorynet} & 51.60\std{0.15} & 1.19\std{0.03} & 22.41\std{5.00} & 0.89\std{0.10} & 33.54\std{2.93} & 0.67\std{0.08} \\
    MFM~\cite{kapusniak2024metric} & 51.54\std{0.20} & 1.18\std{0.03} & 30.12\std{0.65} & 0.84\std{0.04} & 13.87\std{1.60} & 0.31\std{0.02} \\
    Slingshot~\cite{street2018slingshot} & 35.15\std{1.85} & 0.93\std{0.07} & 17.85\std{4.28} & 1.13\std{0.43} & \textbf{3.85}\std{0.90} & \textbf{0.17}\std{0.01} \\
    \midrule
    CT-OT Flow (SDE) & \textbf{11.52}\std{1.12} & \textbf{0.36}\std{0.04} & \textbf{11.51}\std{1.04} & \textbf{0.47}\std{0.11} & \textbf{7.94}\std{1.07} & \textbf{0.25}\std{0.04} \\
    $\text{[SF]}^2\text{M}$-I~\cite{tong2024simulation} & 50.04\std{0.22} & 1.12\std{0.01} & 21.20\std{0.32} & 0.60\std{0.01} & 28.62\std{0.32} & 0.59\std{0.01} \\
    $\text{[SF]}^2\text{M}$-Exact~\cite{tong2024simulation} & 51.32\std{0.03} & 1.17\std{0.00} & 28.98\std{0.21} & 0.89\std{0.01} & 30.92\std{0.17} & 0.62\std{0.00} \\
    ENOT~\cite{gushchin2024entropic} & 49.60\std{0.08} & 1.15\std{0.00} & 26.04\std{0.29} & 0.82\std{0.01} & 24.49\std{0.20} & 0.57\std{0.00} \\
    \bottomrule
  \end{tabular}
\end{table}

\subsection{Ablation study: omitting steps in CT-OT Flow}
\label{appendix:ablation_step2}

We quantitatively evaluate how much each stage of CT-OT Flow contributes to the final accuracy \changed{over 20 runs.}
Recall that Step 1 assigns high-resolution time labels, while Step 2 converts them into continuous-time distributions $\tilde p_t(\boldsymbol x)$.
In Step 3, we sample $t\sim\mathcal U(t_1,\,t_{T+1}-\delta t)$ and draw paired samples from $\tilde p_t$ and $\tilde p_{t+\delta t}$ to train the ODE/SDE model.

We first completely omit Step 2.
Instead of constructing continuous distributions, we directly use the high-resolution labels from Step 1.
Specifically, we sample label values $\tilde t_j$ from the set of unique high-resolution time labels, then form empirical distributions $\hat p_{\tilde t_j}(\boldsymbol x)$ and $\hat p_{\tilde t_{j+1}}(\boldsymbol x)$ using only points whose labels equal $\tilde t_j$ and $\tilde t_{j+1}$, respectively.
Table~\ref{tab:ablation_step2} compares the results from \S\ref{sec:artificial} against this ablated variant.
We observe that omitting Step~2 degrades estimation accuracy, underscoring the importance of modeling continuous-time distributions in Step~2.

Next, we omit both Step 1 and Step 2, reverting to standard OT-CFM~\cite{tong2024improving} or [SF]$^2$M~\cite{tong2024simulation}, which operate directly on the coarse snapshots.
Performance drops even further (rows “w/o Step 1\&2”), indicating that high-resolution time label estimation in Step 1 is beneficial, and that Step 2 provides an additional, complementary benefit.

\begin{table}[t]
  \centering
  \caption{Estimation errors (mean$\pm$std) for ablation study on the synthetic datasets.
  Bold: best within ODE/SDE groups. w/o denotes ``without''.}
  \label{tab:ablation_step2}
  \begin{tabular}{ccccccc}
    \toprule
    Dataset$\rightarrow$                   & \multicolumn{2}{c}{Spiral} & \multicolumn{2}{c}{Y-shaped} & \multicolumn{2}{c}{Arch} \\
    \cmidrule(lr){2-3}
    \cmidrule(lr){4-5}
    \cmidrule(lr){6-7}
    Method$\downarrow$ Metric$\rightarrow$ & $\mathcal L_\text{DTW}$ & $\mathcal L_\text{Wass}$ & $\mathcal L_\text{DTW}$ & $\mathcal L_\text{Wass}$ & $\mathcal L_\text{DTW}$ & $\mathcal L_\text{Wass}$ \\
    \toprule
    CT-OT Flow (ODE)       & \textbf{9.63}\std{1.25} & \textbf{0.31}\std{0.05} & \textbf{10.80}\std{0.93} & \textbf{0.43}\std{0.11} & \textbf{6.81}\std{1.28} & \textbf{0.23}\std{0.05} \\
    w/o Step 2 (ODE)       & 12.48\std{2.92}          & 0.35\std{0.07}          & 16.06\std{2.95}          & 0.81\std{0.25}          & 9.24\std{2.22} & 0.48\std{0.11} \\
    w/o Step 1\&2 (ODE)    & 51.41\std{0.02} & 1.17\std{0.00} & 29.42\std{0.20} & 0.90\std{0.00} & 31.65\std{0.12} & 0.62\std{0.00} \\
    \midrule
    CT-OT Flow (SDE)       & \textbf{11.66}\std{1.06} & \textbf{0.36}\std{0.04}          & \textbf{11.47}\std{0.78} & \textbf{0.45}\std{0.09} & 11.43\std{2.63}         & \textbf{0.34}\std{0.08} \\
    w/o Step 2 (SDE)       & 13.10\std{2.35}          & \textbf{0.36}\std{0.06}          & 16.73\std{2.80}          & 0.84\std{0.25}          & \textbf{9.62}\std{1.59}          & 0.49\std{0.11} \\
    w/o Step 1\&2 (SDE)    & 51.32\std{0.03} & 1.17\std{0.00} & 28.98\std{0.21} & 0.89\std{0.01} & 30.92\std{0.17} & 0.62\std{0.00} \\
    \bottomrule
  \end{tabular}
\end{table}

\subsection{\changed{Performance sensitivity to hyperparameters}}
\label{sec:hyperparameters}
We next examine the sensitivity of CT-OT Flow to the two main hyperparameters: the subdivision factor $K$ in Step~1 and the kernel smoothing parameter $\gamma$ in Step~2.
Here we focus on the quantitative trends, while Appendix~\ref{appendix:hyperparameter_details} provides additional visualizations and interpretation.

\paragraph{Effect of the subdivision factor \texorpdfstring{$K$}{K}}
We examine how the inferred high-resolution time labels change as the subdivision factor $K$ in CT-OT Flow varies.
Figure~\ref{fig:spearman} reports the Spearman rank correlation between the inferred and true times, averaged over 10 runs with different random seeds.
When $K$ is too small, the temporal resolution is coarse, resulting in weaker agreement with the true times.
As $K$ increases, the resolution improves and the correlation correspondingly increases.
However, when $K$ becomes excessively large, the estimates become more sensitive to noise, and the accuracy gradually degrades.
Thus, rather than taking $K \to \infty$, choosing $K$ within an appropriate range leads to more accurate time estimation.
Figure~\ref{fig:ablation_diff_k} shows how the downstream prediction error changes with $K$ when CT-OT Flow is combined with the ODE/SDE estimation settings, while keeping the other parameters the same as in \S\ref{sec:artificial}.
Consistent with the time-label analysis, CT-OT Flow achieves accurate dynamics prediction only when $K$ is chosen appropriately.
The performance degrades when $K$ is too small, particularly around $K \approx 1$, highlighting the importance of Step 1 in CT-OT Flow.
We also illustrate the inferred high-resolution time labels on the three synthetic datasets in Figure~\ref{fig:diff_K} (App.\ref{appendix:hyperparameter_details}) for different values of $K$.

\begin{figure}
  \centering
  \begin{subfigure}[b]{0.31\textwidth}
    \centering
    \includegraphics[width=0.9\textwidth]{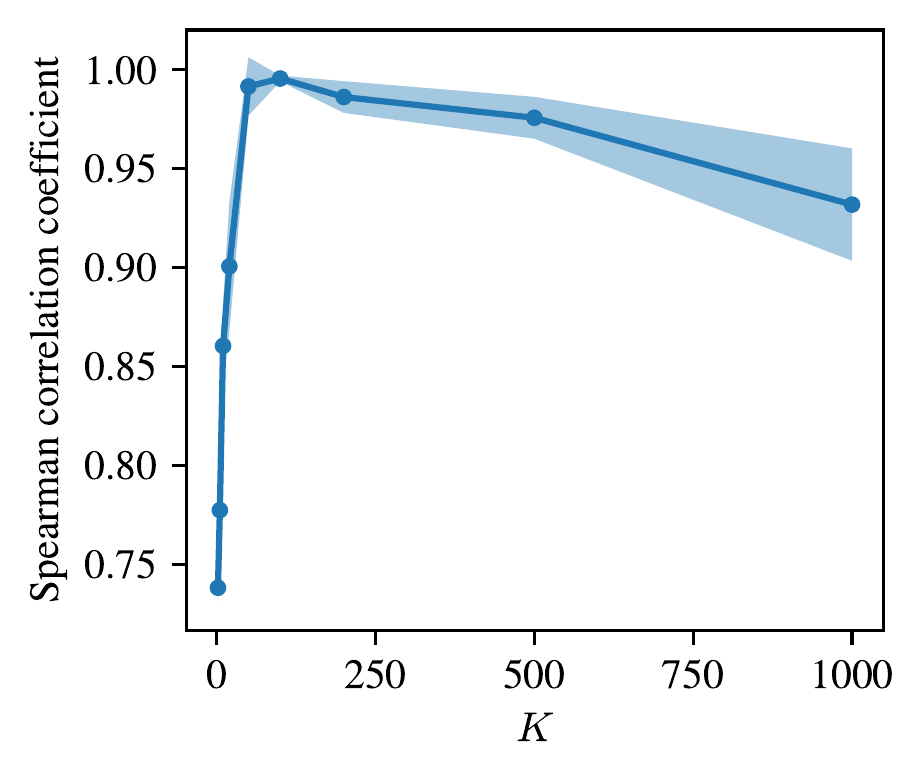}
    \caption{Spiral}
  \end{subfigure}
  \begin{subfigure}[b]{0.31\textwidth}
    \centering
    \includegraphics[width=\textwidth]{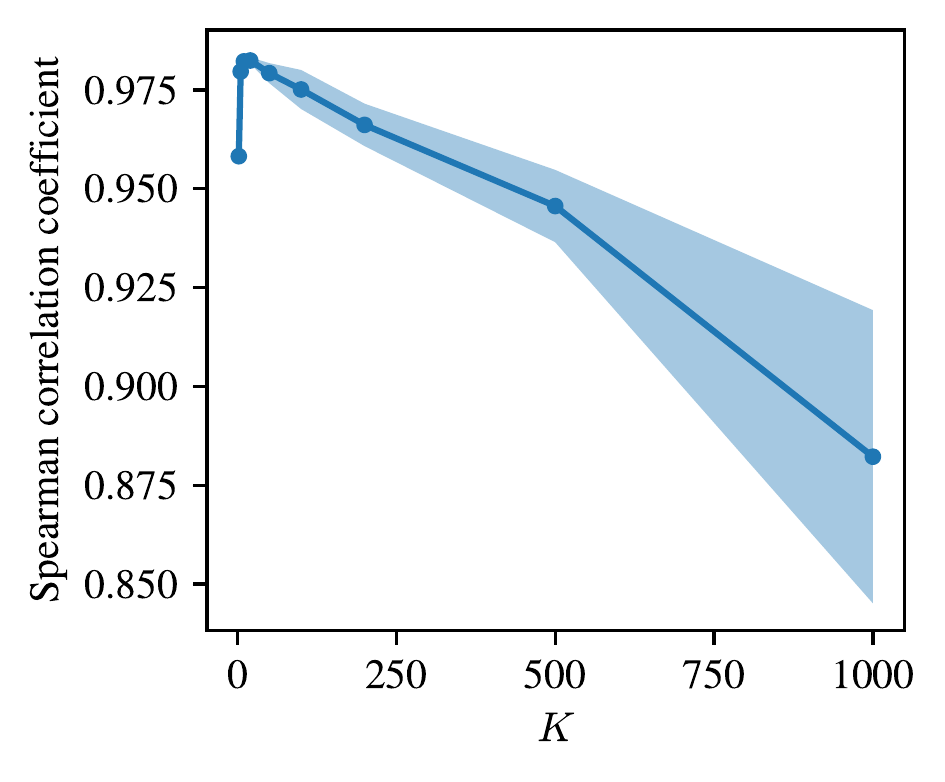}
    \caption{Y-shaped}
  \end{subfigure}
  \begin{subfigure}[b]{0.31\textwidth}
    \centering
    \includegraphics[width=\textwidth]{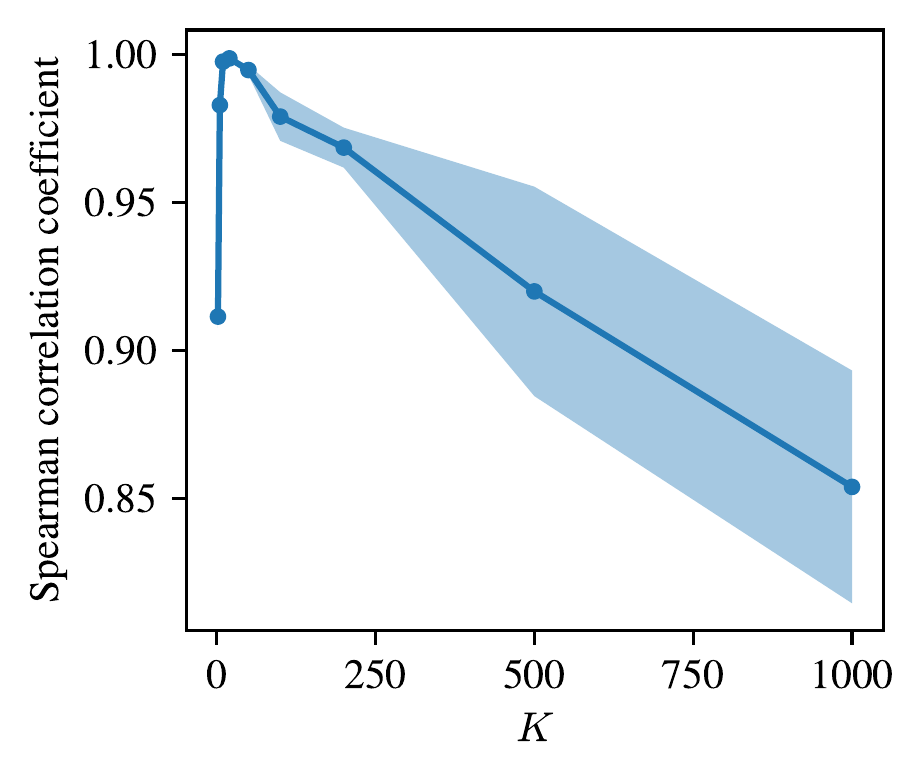}
    \caption{Arch}
  \end{subfigure}
  \caption{Spearman correlation between estimated high-resolution time labels and true times with varying $K$. The lines and areas represent the average and standard deviations, respectively.    \label{fig:spearman}
  }
\end{figure}

\begin{figure}
  \centering
  \includegraphics[width=0.65\textwidth]{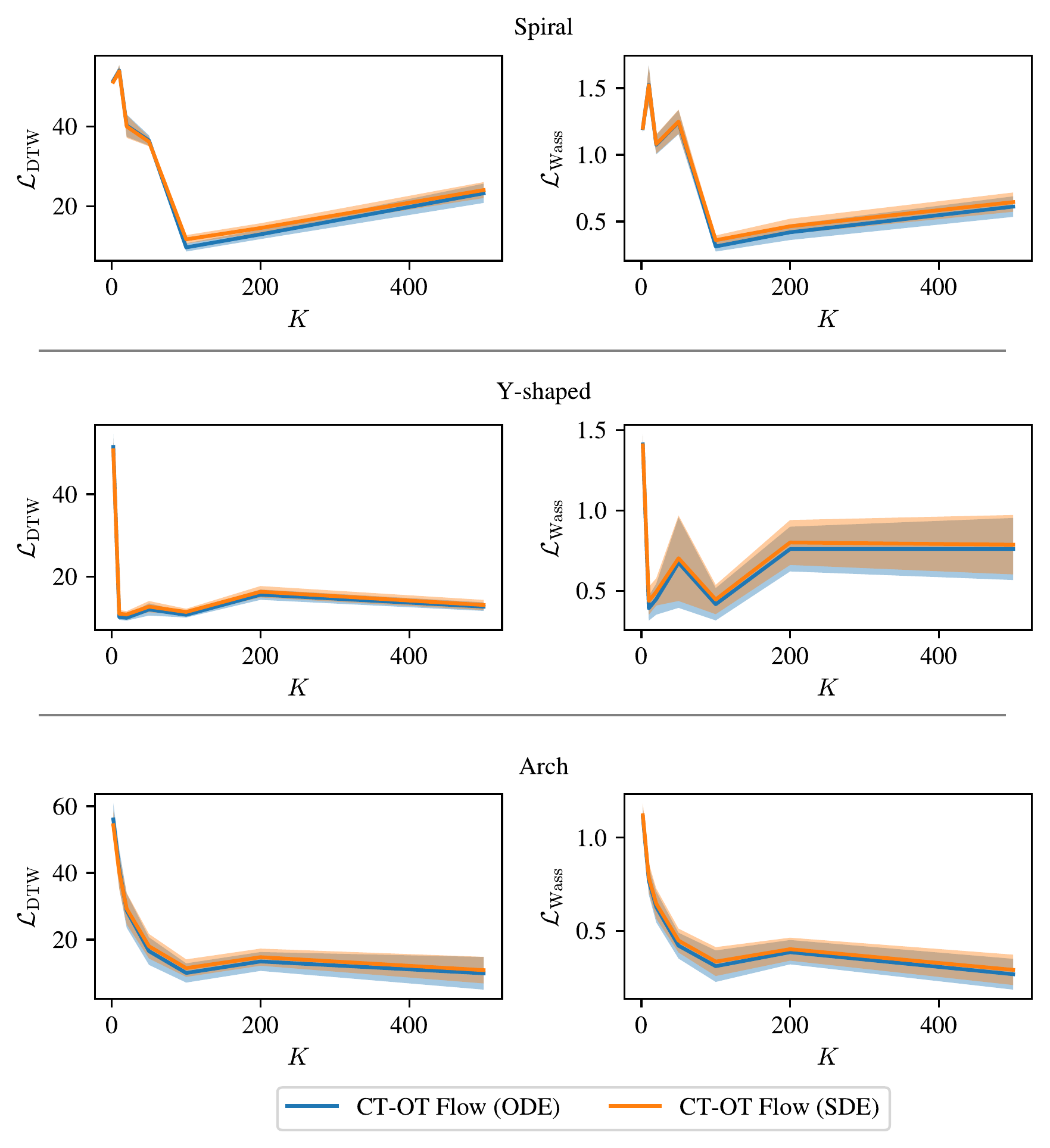}
  \caption{Prediction errors with varying $K$.    \label{fig:ablation_diff_k}
  }
\end{figure}

\paragraph{Effect of the kernel smoothing parameter \texorpdfstring{$\gamma$}{gamma}}
We investigate how varying the kernel smoothing parameter $\gamma$ affects the estimation results of CT-OT Flow.
When $\gamma$ is small, the estimated distribution $p_t(\boldsymbol x)$ has nonzero density only around data points whose high-resolution time label is close to $t$.
Conversely, when a large $\gamma$ is used, the estimated density spreads more broadly, incorporating density contributions from data points with high-resolution time labels farther from $t$.
Figure~\ref{fig:ablation_diff_gamma} illustrates how the prediction error changes in practice as $\gamma$ varies.
We run CT-OT Flow (ODE/SDE) with all other parameters as in \S\ref{sec:artificial}, varying only $\gamma$.
As shown in Figure~\ref{fig:ablation_diff_gamma}, large $\gamma$ values increase prediction errors.
This is due to excessive smoothing, which diminishes the contribution of the estimated high-resolution time labels.
On the other hand, using a very small $\gamma$ results in increased errors for the Y-shaped and Arch datasets.
As illustrated in Fig.~\ref{fig:spearman}, the accuracy of high-resolution time label estimation is lower for the Y-shaped and Arch datasets compared to Spiral.
When $\gamma$ is too small, the dynamics estimation becomes more sensitive to errors in the inferred high-resolution time labels, which likely leads to the observed degradation in overall performance.
These results demonstrate that Step 2’s smoothing mitigates time-label errors and enhances overall estimation accuracy.

\begin{figure}
  \centering
  \includegraphics[width=0.65\textwidth]{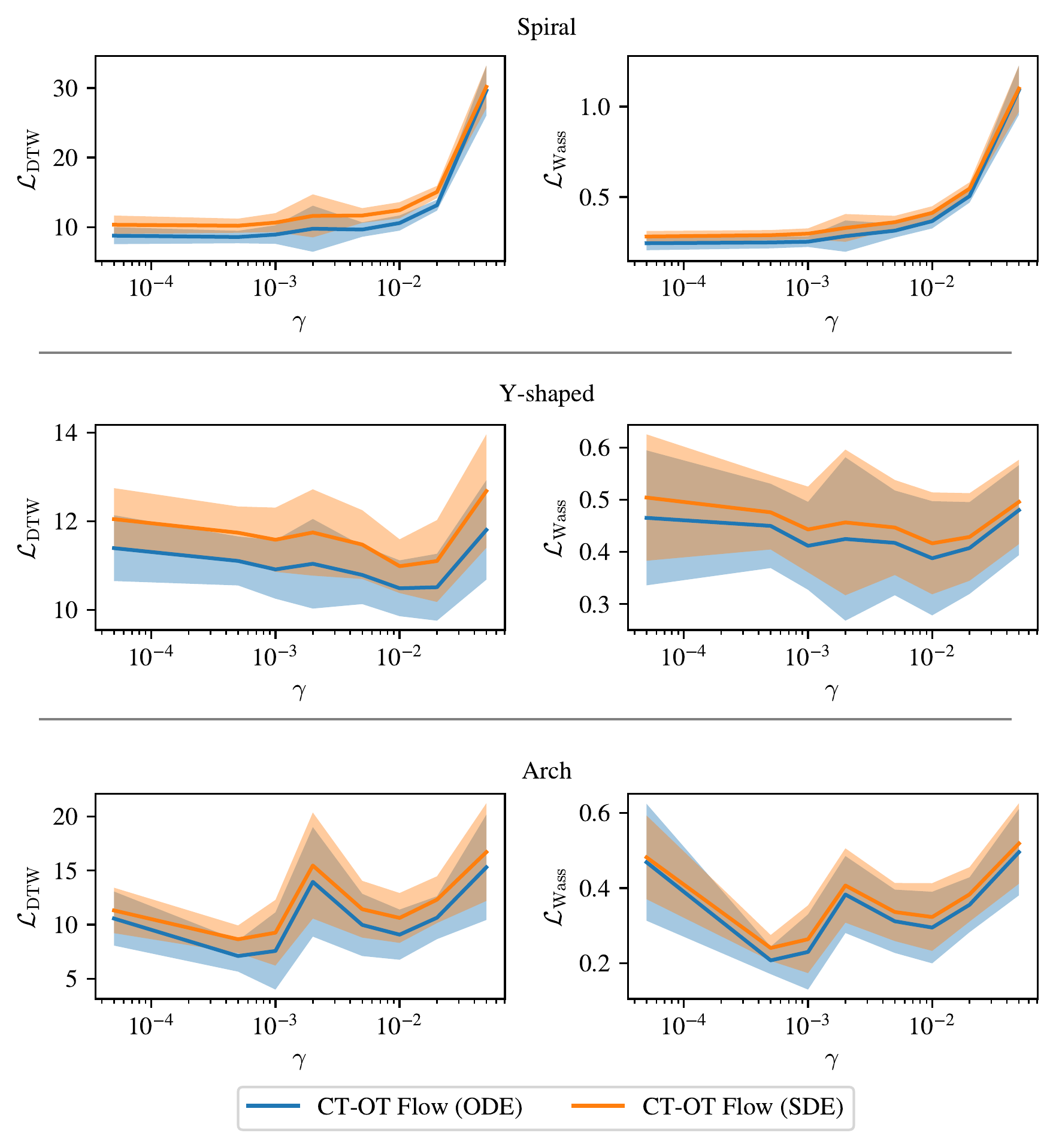}
  \caption{Prediction errors with varying $\gamma$.    \label{fig:ablation_diff_gamma}
  }
\end{figure}

\subsection{Application to scRNA-seq data}
\label{sec:scRNA-seq}
In this section, we evaluate CT-OT Flow on real scRNA-seq data.
We generate time-discretized datasets from the Bifurcation dataset~\cite{Bargaje2017cell,sha2024reconstructing} and embryoid body (EB) dataset~\cite{moon2018embryoid} by grouping the first three (for Bifurcation) or two (for EB) time points as one interval and the next three as a second interval, yielding two-interval datasets for training.
Bifurcation data are four-dimensional, while EB data are two-dimensional (see App.~\ref{appendix:data_generation} for details).

After training on the time-discretized dataset, we evaluate each method by simulating trajectories on the held-out test data, starting from samples at the earliest original time point.
Since true dynamics are unknown, we treat the empirical distribution at each original time point as ground truth and report $\mathcal L_\text{Wass}$.
Table~\ref{tab:bifurcation} summarizes the evaluation scores over 10 runs.
CT-OT Flow achieves highly accurate reconstruction of time-evolving data distributions from the temporal snapshots.
Figure~\ref{fig:scrna_trajectories} further illustrates the predicted trajectories.
In the Bifurcation and EB datasets, which are constructed from scRNA-seq data, the model reconstructs plausible developmental trajectories and temporal progression from discretized temporal snapshots.
These results demonstrate that CT-OT Flow can generalize to complex dynamics and real-world datasets, highlighting the robustness of the proposed method under various conditions.
\changed{We also provide qualitative results on the original unaggregated EB dataset in App.~\ref{appendix:eb_unaggregated}.}

\begin{table}
  \centering
  \caption{Estimation errors ($\mathcal L_{\text{Wass}}$) on the scRNA-seq datasets.
  Bold: best within ODE/SDE groups.}
  \label{tab:bifurcation}
  \begin{tabular}{ccc}
    \toprule
    Method$\downarrow$ Dataset$\rightarrow$                 & Bifurcation             & EB             \\
    \toprule
    CT-OT Flow (ODE)                                        & \textbf{0.64}\std{0.04} & \textbf{0.94}\std{0.11} \\
    I-CFM~\cite{tong2024improving}                          & 0.71\std{0.00}          & 1.00\std{0.01} \\
    OT-CFM~\cite{tong2024improving}                         & 0.70\std{0.00}          & 1.06\std{0.01} \\
    TrajectoryNet~\cite{tong2020trajectorynet}         &             0.68\std{0.04}          & 1.11\std{0.04} \\
    MFM~\cite{kapusniak2024metric}                          & 0.69\std{0.01}          & 1.03\std{0.05} \\
    Slingshot~\cite{street2018slingshot} & 0.77\std{0.08} & 2.74\std{0.20} \\
    \midrule
    CT-OT Flow (SDE)                                        & \textbf{0.63}\std{0.04} & \textbf{0.92}\std{0.12} \\
    $\text{[SF]}^2\text{M}$-I~\cite{tong2024simulation}     & 0.71\std{0.00}          & 0.99\std{0.01} \\
    $\text{[SF]}^2\text{M}$-Exact~\cite{tong2024simulation} & 0.71\std{0.01}          & 1.06\std{0.01} \\
    ENOT~\cite{gushchin2024entropic}                        & 0.70\std{0.01}          & 1.02\std{0.02} \\
    \bottomrule
  \end{tabular}
\end{table}

\begin{figure}
  \centering
  \begin{subfigure}[b]{\textwidth}
    \centering
    \includegraphics[width=0.9\textwidth]{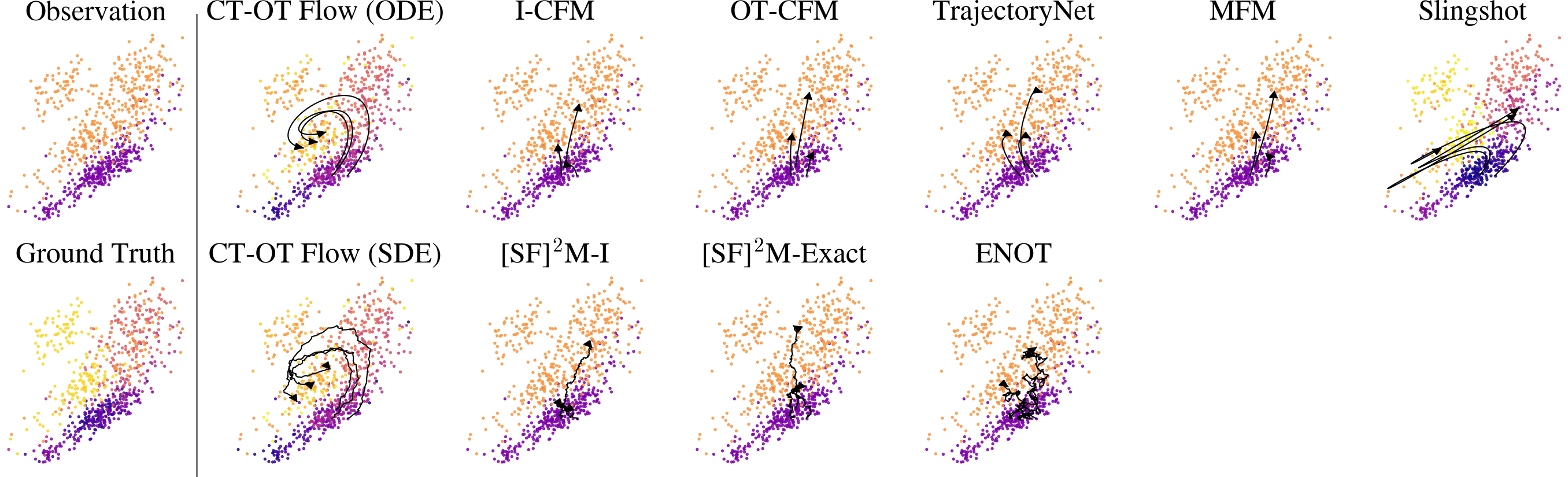}
    \caption{Bifurcation}
  \end{subfigure}
  \begin{subfigure}[b]{\textwidth}
    \centering
    \includegraphics[width=0.9\textwidth]{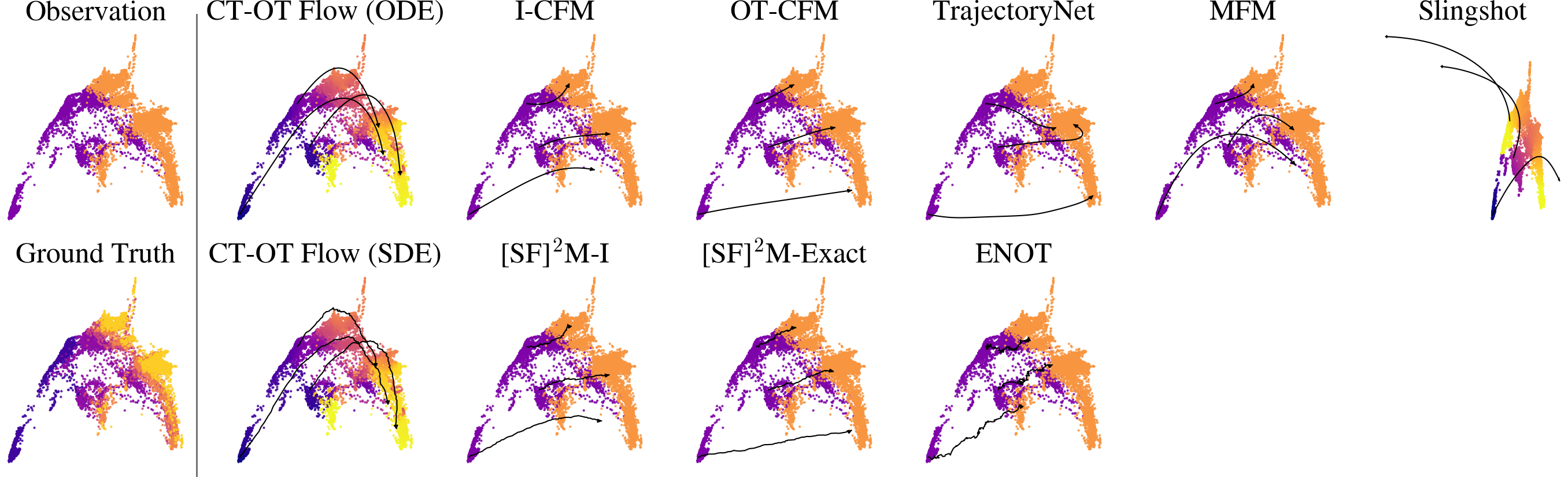}
    \caption{EB dataset}
  \end{subfigure}
  \caption{Estimated trajectories. The black lines indicate the true or estimated trajectories, while the color of each point in CT-OT Flow and Slingshot indicates its estimated high-resolution time label and pseudotime label, respectively.    \label{fig:scrna_trajectories}
  }
\end{figure}

\subsection{Application to meteorological data}
\label{sec:typhoon}
Finally, we evaluate CT-OT Flow on 528 typhoon trajectories recorded in Southeast Asia between 1951 and 2024.
The dataset records typhoon positions (longitude and latitude) at 6-hour intervals from their onset.
From the full dataset, we extracted 528 trajectories that span at least 150 hours and constructed temporal snapshots by assigning $t=0$ at 0 hours (onset), $t=1$ at 72 hours, and $t=2$ at 144 hours after onset.
Note that the index in each temporal snapshot was randomized, and the models only had access to individual data points, not the trajectories.
We split the trajectories into 70\% for training and 30\% for testing, and performed 10 evaluation runs with different neural network initializations.
See~App.~\ref{appendix:typhoon_dataset} for further details.

We evaluate each method by reconstructing trajectories in the test split, where the initial points are given at $t=0$.
Performance is measured using $\mathcal L_\text{DTW}$ and $\mathcal L_\text{Wass}$.
We treat the empirical data distribution at each time $t$ as the ground truth distribution $p^*_t(\boldsymbol x)$ for computing $\mathcal L_\text{Wass}$, and the full trajectories themselves as $X^*_j$ for $\mathcal L_\text{DTW}$.
Table~\ref{tab:typhoon} summarizes the evaluation results.
As shown in the table, CT-OT Flow improves the accuracy of trajectory prediction.
Note that CT-OT Flow (ODE) and CT-OT Flow (SDE) employ OT-CFM~\cite{tong2024improving} and $\text{[SF]}^2\text{M}$-Exact~\cite{tong2024simulation} for Step 3, respectively.
These differences highlight the improvements in prediction accuracy resulting from the use of CT-OT Flow.

\begin{table}
  \centering
  \caption{Estimation errors ($\mathcal L_{\text{DTW}}$ and $\mathcal L_{\text{Wass}}$) on the typhoon dataset.
  Bold: best within ODE/SDE groups.}
  \label{tab:typhoon}
  \begin{tabular}{ccc}
    \toprule
    Method$\downarrow$ Dataset$\rightarrow$                 & $\mathcal L_{\text{DTW}}$ & $\mathcal L_{\text{Wass}}$ \\   \toprule
    CT-OT Flow (ODE)                                        & \textbf{16.59}\std{0.25}  & \textbf{0.35}\std{0.05}    \\
    OT-CFM~\cite{tong2024improving}                         & 17.16\std{0.07}           & 0.48\std{0.01}             \\   \midrule
    CT-OT Flow (SDE)                                        & \textbf{16.48}\std{0.28}  & \textbf{0.36}\std{0.05}    \\
    $\text{[SF]}^2\text{M}$-Exact~\cite{tong2024simulation} & 17.10\std{0.07}           & 0.49\std{0.01}             \\
    \bottomrule
  \end{tabular}
\end{table}

\section{Conclusions and Limitations}
\label{sec:conclusion}
We introduced CT-OT Flow, a general framework that leverages POT to assign high-resolution time labels from discrete temporal snapshots and build a continuous-time data distribution, enabling more accurate reconstruction of underlying dynamics.
Our experiments on both synthetic and real-world datasets show that CT-OT Flow outperforms existing methods.
These improvements result from explicitly modeling temporal discretization and timestamp uncertainty.

\subsection*{Limitations}
\changed{
  CT-OT Flow relies on the assumption that the underlying distribution evolves smoothly across neighboring intervals, so that samples near the boundary of two contiguous intervals remain similar.
  This assumption can become less reliable under abrupt regime changes or strongly non-contiguous observations, in which case boundary identification and inferred high-resolution time labels may also become less reliable; see Apps.~\ref{appendix:non_contiguous} and~\ref{appendix:failure_case_for_boundary_extraction} for additional analysis.
  When the observation-time distribution $p(t)$ in Eqs.~\eqref{eq:times_backward} and~\eqref{eq:times_forward} is unknown, we assume it to be uniform within each interval.
  This approximation keeps the method practical, but if the true $p(t)$ is substantially non-uniform, it can bias the inferred time gaps and thus the estimated velocity magnitude (App.~\ref{appendix:non_uniform}).
  A promising direction for future work is to jointly infer $p(t)$ from external cues such as RNA velocity~\cite{la2018rna}.
  CT-OT Flow also assumes mass preservation at the probability-distribution level and does not explicitly model birth-death or abundance changes, for which an unbalanced OT extension would be more appropriate.
}

\changed{
  In addition, high-dimensional data can make boundary identification more difficult because pairwise distances tend to concentrate.
  In our experiments in App.~\ref{appendix:high_dimensional}, using a larger $\gamma$ partly alleviates this issue, and lower-dimensional representations may further help.
}

\changed{
  Our real-data evaluation enables quantitative comparison by aggregating several snapshots and then assessing whether finer-resolution snapshots can be reconstructed from these aggregated observations.
  Although this provides a controlled benchmark, it does not fully reflect the practical setting.
  App.~\ref{appendix:eb_unaggregated} reports a qualitative experiment on the original unaggregated EB dataset~\cite{moon2018embryoid}, which provides complementary evidence that CT-OT Flow can recover geometry-aligned trajectories in a less processed setting.
  Further evaluation in such real-world settings remains an important direction for future work.
}

\changed{
  Finally, CT-OT Flow assumes that each snapshot is an aggregation over a time interval, reflecting uncertainty in observation times.
  If the data instead consist of samples drawn directly from the instantaneous distribution $p_t(\boldsymbol{x})$ at each time $t$, our time-label inference step is unnecessary and is unlikely to provide additional benefit.
  In such cases, conventional methods that directly fit dynamics to $p_t(\boldsymbol{x})$ are more appropriate.
}

\clearpage

\appendix
\section{Background of partial optimal transport}
\label{appendix:pot}
In this section, we provide a formulation of optimal transport between two empirical distributions that consist of distinct sets of data points, with a particular focus on POT~\cite{bonneel2019spot,figalli2010optimal}, which is used in this work.
Optimal transport seeks a coupling between two distributions that minimizes a given cost function.
Let two finite point sets be $X=\{\boldsymbol x^{(1)},\dots, \boldsymbol x^{(N_x)}\}$ and $Y = \{\boldsymbol y^{(1)},\dots,\boldsymbol y^{(N_y)}\}$.
Here, $N_x,N_y$ are the respective number of data points, and $\boldsymbol x^{(i)},\boldsymbol y^{(j)}\in \mathbb R^d$ for all $i,j$.

Let $\hat p_X$ and $\hat p_Y$ be the empirical distributions corresponding to $X$ and $Y$, respectively.
Depending on the values of $\tau_x$ and $\tau_y$, different optimal transport regimes emerge for $\text{POT}_{(\tau_x, \tau_y)}(\hat p_X, \hat p_Y)$:
\begin{enumerate}
  \item ($\tau_x=\tau_y=1$). Full OT: all mass in $X$ and $Y$ is transported.
  \item ($\tau_x > \tau_y = 1$). One-sided POT: only a fraction $\frac{1}{\tau_x}$ of the total mass in $X$ is transported to the entirety of $Y$.
  \item ($\tau_y > \tau_x = 1$). One-sided POT: only a fraction $\frac{1}{\tau_y}$ of the total mass in $Y$ is transported to the entirety of $X$.
  \item ($\tau_x =\tau_y > 1$).  Two-sided POT: only subsets of points from both $X, Y$ are transported.
\end{enumerate}




\section{Non-uniform \texorpdfstring{$p(t)$}{p(t)} Assumption}
\label{appendix:non_uniform}
In the main text, we assumed $p(t)$ to be uniform within each interval due to lack of prior knowledge.
In reality, however, $p(t)$ can be non-uniform.
Retaining the uniform assumption when $p(t)$ is non-uniform causes the estimated velocity magnitude to deviate from the true value.

\subsection{One-dimensional illustrative example}
To illustrate this point, consider the simplest one-dimensional example $x(t)=t$, which has constant velocity $dx/dt=1$.
Suppose instead that $p(t)$ on $t\in [0,2]$ has the triangular form
\begin{align}
  \label{eq:non_uniform_pt}
  p(t) \propto
  \begin{cases}
    -a (t-1) + b & \text{if}\ t<1,   \\
    a (t-1) + b  & \text{otherwise},
  \end{cases}
\end{align}
with $b>0$.
When $a=0$, $p(t)$ reduces to the uniform case; when $a\neq 0$, the distribution becomes non-uniform and the data density varies over time.

Figure~\ref{fig:non_uniform_pt} compares histograms of data sampled from a uniform distribution ($a=0, b=1$) and a non-uniform distribution ($a=5,b=1$) (shown in the top panels).
The bottom panels depict the velocities inferred by CT-OT Flow when the input consists of two time intervals, $X_{[0,1]}$ and $X_{[1,2]}$.
Under the uniform $p(t)$, the estimated velocity remains close to $1$, which is consistent with the true velocity.
In contrast, when $p(t)$ is non-uniform, the estimated velocity tends to be lower in regions of higher data density (i.e., where $p(t)$ is larger) and higher in regions of lower density.
This observation reflects a fundamental difficulty, without prior knowledge of $p(t)$, one cannot distinguish whether the observed variability is due to a non-uniform distribution $p(t)$ or an inherently time-varying velocity.
\begin{figure}[t]
  \centering
  \includegraphics[width=0.5\textwidth]{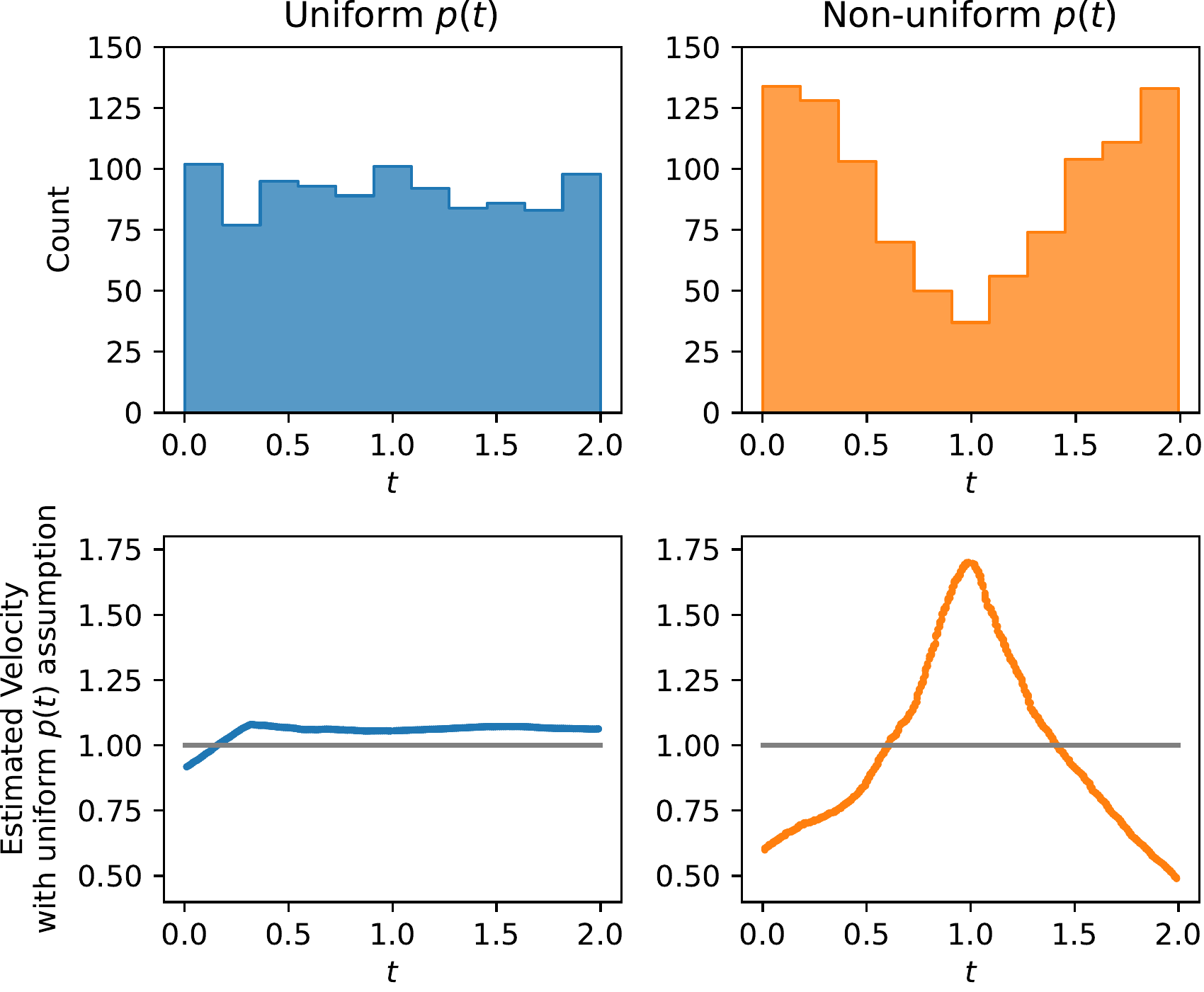}
  \caption{Estimated velocity (top) and sample histograms (bottom) if $p(t)$ is uniform (left) versus non-uniform (right).    \label{fig:non_uniform_pt}
  }
\end{figure}

When prior knowledge about $p(t)$ is available, CT-OT Flow incorporates it via Eqs.~\eqref{eq:times_backward}--\eqref{eq:times_forward}.
Figure~\ref{fig:non_uniform_pt_modify} compares, for $a=5,b=1$,
(left) high-resolution time labels and (right) velocities under uniform versus non-uniform $p(t)$ assumptions.
As shown, by incorporating the prior knowledge of $p(t)$, the accuracy of the estimated high-resolution time labels and the corresponding velocities can be improved.
However, in real-world scenarios, it is often difficult to obtain the inverse CDF of $p(t)$.
Note that the uniform $p(t)$ assumption remains a pragmatic choice when no prior information is available.
\begin{figure}
  \centering
  \includegraphics[width=0.5\textwidth]{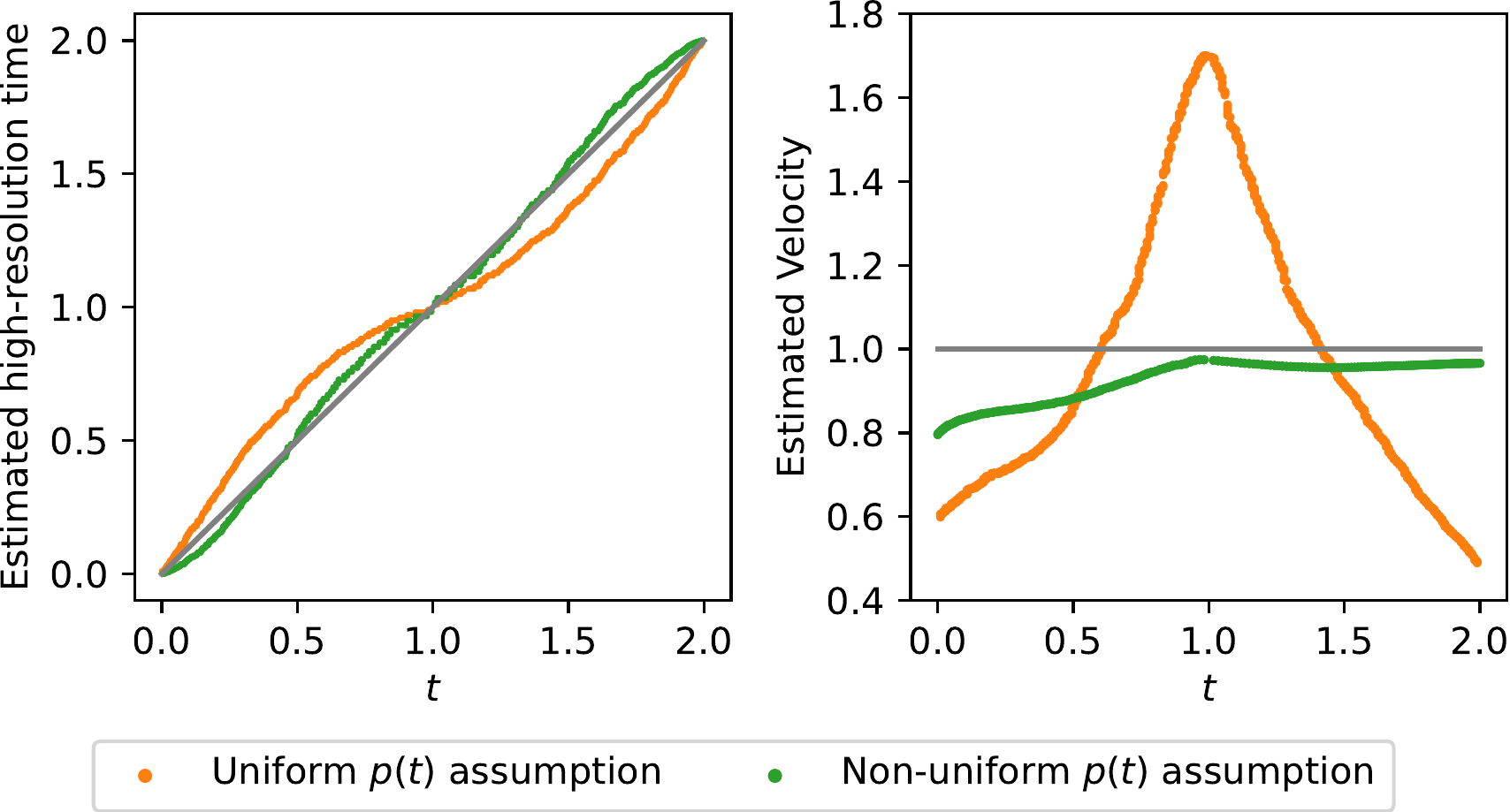}
  \caption{Estimated high-resolution time labels (left) and estimated velocities (right).}
  \label{fig:non_uniform_pt_modify}
\end{figure}

\subsection{Non-uniform Spiral dataset}
To further assess the impact of a mis-specified observation-time distribution,
we conducted an additional experiment on the Spiral dataset where the true $p(t)$ was designed as a mixture of Gaussian distributions with different means for each snapshot, while the model assumed $p(t)$ to be uniform.
Table~\ref{tab:spiral_nonuniform} summarizes the estimation errors.
Compared to the scenario with a truly uniform $p(t)$ (see Table~1 in the main text), the accuracy of CT-OT Flow degraded substantially.
As shown in Fig.~\ref{fig:gaussian_spiral}, the predicted trajectories often stalled before reaching the intended endpoint, consistent with the analysis above: the incorrect assumption about $p(t)$ led to systematic underestimation of the velocity magnitude.

\begin{table}[t]
  \centering
  \caption{Estimation errors on the Spiral dataset when the true $p(t)$ is non-uniform but the model assumes uniform $p(t)$. Mean $\pm$ std over multiple runs.}
  \label{tab:spiral_nonuniform}
  \begin{tabular}{lcc}
    \toprule
    Method & $\mathcal L_\text{DTW}$ & $\mathcal L_\text{Wass}$ \\
    \midrule
    CT-OT Flow (ODE)  & \textbf{32.15}\std{6.10} & \textbf{0.85}\std{0.15} \\
    OT-CFM            & 51.64\std{0.03} & 1.17\std{0.00} \\
    I-CFM             & 51.21\std{0.11} & 1.14\std{0.01} \\ \midrule
    CT-OT Flow (SDE)  & \textbf{31.89}\std{5.77} & \textbf{0.85}\std{0.16} \\
    $\text{[SF]}^2$M  & 51.54\std{0.07} & 1.17\std{0.01} \\
    $\text{[SF]}^2$M-I& 50.77\std{0.21} & 1.14\std{0.01} \\
    \bottomrule
  \end{tabular}
\end{table}

\begin{figure}[t]
  \centering
  \includegraphics[width=0.8\textwidth]{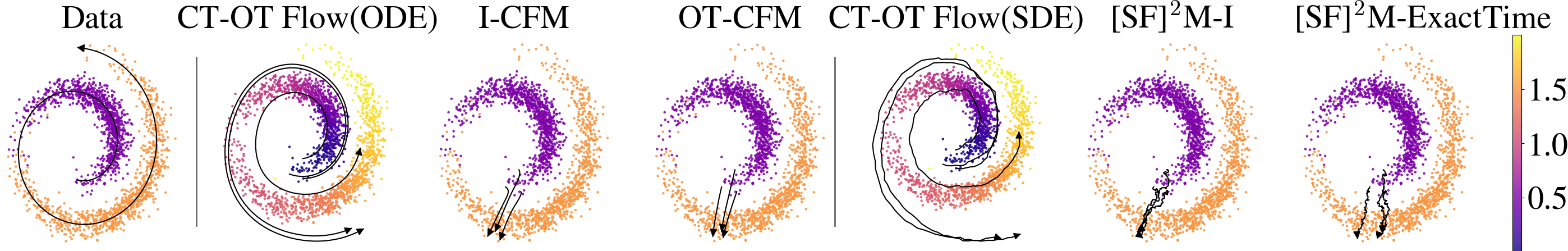}
  \caption{Estimated trajectories in non-uniform $p(t)$ setting. The black lines indicate the true or estimated trajectories, while the color of each point in CT-OT Flow indicates its estimated high-resolution time label.    \label{fig:gaussian_spiral}}
\end{figure}

\section{Continuous relaxation of the mixed-integer linear program}
\label{appendix:relaxation}
\subsection{Proof of Proposition\ref{prop:relaxation}}
\label{appendix:proof_relaxation}
Here, we prove Proposition~\ref{prop:relaxation} stated in \S\ref{sec:step1}.
\addtocounter{proposition}{-1}
\begin{proposition}
  The problem in Eq.~\eqref{eq:pw} is the continuous relaxation of the MILP in Eq.~\eqref{eq:MILP}.
\end{proposition}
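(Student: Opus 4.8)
The plan is to make the mixed-integer structure of \eqref{eq:MILP} explicit, expand the squared Wasserstein distance into its Kantorovich (coupling) form, and then check that dropping the integrality constraints reproduces exactly the partial-OT feasible set $U_{(K,K)}$. Write $N_- := N_{[t_{j-1},t_j]}$, $N_+ := N_{[t_j,t_{j+1}]}$, $n_- := \lceil N_-/K\rceil$, $n_+ := \lceil N_+/K\rceil$, and $c_{m,n} := \|\boldsymbol x^{(m)} - \boldsymbol x^{(n)}\|_2^2$. First I would encode the subset choices $S^-, S^+$ by binary indicators $a \in \{0,1\}^{N_-}$ and $b \in \{0,1\}^{N_+}$, so that the cardinality constraints of \eqref{eq:MILP} become the linear equalities $\boldsymbol 1^\top a = n_-$ and $\boldsymbol 1^\top b = n_+$, and the empirical measures $\hat p_{S^-}, \hat p_{S^+}$ place mass $a_m/n_-$ and $b_n/n_+$ on their respective points.

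Next, since $\mathcal W^2(\hat p_{S^-}, \hat p_{S^+})$ is itself a minimization over couplings $\Pi$ matching those marginals, and the outer problem over $(S^-,S^+)$ is also a minimization, the two nest into a single joint minimization over $(a,b,\Pi)$. This exhibits \eqref{eq:MILP} as a genuine MILP: the objective $\sum_{m,n}\Pi_{m,n} c_{m,n}$ is linear in $\Pi$; the marginal equalities $\sum_n \Pi_{m,n} = a_m/n_-$ and $\sum_m \Pi_{m,n} = b_n/n_+$, the nonnegativity $\Pi \ge 0$, and the two cardinality equalities are all linear; and the only integrality falls on $a, b$.

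Then I would form the continuous relaxation by replacing $a_m, b_n \in \{0,1\}$ with $a_m, b_n \in [0,1]$ and eliminate the selection variables through the marginal equalities, setting $a_m = n_- \sum_n \Pi_{m,n}$ and $b_n = n_+ \sum_m \Pi_{m,n}$ (the denominators remaining fixed at $n_-, n_+$). Under this substitution the box constraints $a_m \le 1$, $b_n \le 1$ turn into the row/column caps $\sum_n \Pi_{m,n} \le 1/n_-$ and $\sum_m \Pi_{m,n} \le 1/n_+$, while both cardinality equalities collapse to the single total-mass constraint $\boldsymbol 1^\top \Pi \boldsymbol 1 = 1$; nonnegativity together with the caps forces $\Pi_{m,n} \le 1$, so the box upper bound is automatic. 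Comparing with \eqref{eq:partial_boundary} at $\tau_x = \tau_y = K$, these are precisely the defining inequalities of $U_{(K,K)}$, whence the relaxed program is the POT problem \eqref{eq:pw}.

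The main obstacle I anticipate is bookkeeping around the ceiling: the relaxed caps read $1/n_- = 1/\lceil N_-/K\rceil$, whereas $U_{(K,K)}$ prescribes $K/N_-$. These coincide exactly when $K \mid N_-$ (and likewise for $N_+$), so I would either state the identification under that divisibility convention or note that the ceiling perturbs the mass bound only by a rounding term whose ratio to $K/N_-$ tends to $1$ as $N_- \to \infty$. The remaining point to verify carefully is the legitimacy of merging the inner Kantorovich minimization with the outer subset selection, which holds because a minimum of minima equals the joint minimum over the product feasible set.
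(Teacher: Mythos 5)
Your proposal is correct and follows essentially the same route as the paper's proof in App.~\ref{appendix:relaxation}: encode the subset choices by binary indicators, expand $\mathcal W^2$ into its Kantorovich coupling form so the problem becomes a single MILP in $(a,b,\Pi)$, relax the indicators to $[0,1]$, and eliminate them through the marginal equalities to recover the row/column caps and total-mass constraint defining the POT feasible set. Your remark about the ceiling (the relaxed caps are $1/\lceil N_-/K\rceil$ rather than $K/N_-$ unless $K$ divides the sample counts) is a point the paper's own proof passes over silently, and flagging the divisibility convention or the vanishing rounding discrepancy is a worthwhile refinement rather than a flaw.
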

\begin{proof}
  The MILP~\eqref{eq:MILP} selects subsets $S^-$ and $S^+$ from $X_{[t_{j-1},t_j]}$ and $X_{[t_j,t_{j+1}]}$, respectively, to minimize the 2-Wasserstein distance between the corresponding empirical distributions.
  For simplicity, let us denote $X^-=X_{[t_{j-1},t_j]}$ and $X^+=X_{[t_j,t_{j+1}]}$ with $N^-=|X^-|$ and $N^+=|X^+|$.
  Let $\boldsymbol x_-^{(i)}$ be the $i$-th point in $X^-$ and $\boldsymbol x_+^{(j)}$ be the $j$-th point in $X^+$.
  Define indicator vectors $\boldsymbol z^-\in \{0,1\}^{N^-}$ and $\boldsymbol z^+\in \{0,1\}^{N^+}$ such that:
  \begin{align}
    z_i^- =
    \begin{cases}
      1 & \text{if}\  \boldsymbol x^{(i)}_- \in S^- \\
      0 & \text{otherwise}
    \end{cases}, \ \
    z_j^+ =
    \begin{cases}
      1 & \text{if}\  \boldsymbol x^{(j)}_+ \in S^+ \\
      0 & \text{otherwise}.
    \end{cases}
  \end{align}
  The MILP in Eq.~\eqref{eq:MILP} can be written as:
  \begin{align}
    \label{eq:MILP_z}
    \min_{\boldsymbol z^-, \boldsymbol z^+} \min_{P'\in U'_{(\boldsymbol z^-, \boldsymbol z^+)}} \sum_{i=1}^{N^-} \sum_{j=1}^{N^+} P'_{ij}\|\boldsymbol x_-^{(i)} - \boldsymbol x_+^{(j)}\|_2^2, \ \ \text{s.t.} \ \sum_{i=1}^{N^-} z^-_i=N_z^-,\ \sum_{j=1}^{N^+} z^+_j=N_z^+,
  \end{align}
  where $N_z^-$ and $N_z^+$ are the number of the selected points that correspond to $|S^-| = \left\lceil\frac{|X_{[t_{j-1}, t_{j}]}|}{K}\right\rceil$ and $|S^+| = \left\lceil\frac{|X_{[t_{j}, t_{j+1}]}|}{K}\right\rceil$, respectively.
  The feasible set $U'_{(\boldsymbol z^-, \boldsymbol z^+)}$ is
  \begin{align}
    \label{eq:MILP_z_condition}
    U'_{(\boldsymbol z^-, \boldsymbol z^+)} = \left\{P'\in[0,1]^{N^-\times N^+}\ \middle |\ P'\boldsymbol 1_{N^+} = \frac{1}{N^-_z}\boldsymbol z^-,\ (P')^\top\boldsymbol 1_{N^-} = \frac{1}{N_z^+}\boldsymbol z^+,\ \boldsymbol 1_{N^-}^\top P'\boldsymbol 1_{N^+}=1\right\}.
  \end{align}
  Note that the row/column-sum constraints in Eq.~\eqref{eq:MILP_z_condition} force $\sum_{j} P'_{ij}=0$ whenever $z_i^- = 0$ and $\sum_{i} P'_{ij}=0$ whenever \(z_j^+ = 0\).
  Consequently $P'_{ij}=0$ if $\boldsymbol x_-^{(i)}\notin S^{-}$ or $\boldsymbol x_+^{(j)}\notin S^{+}$; the summations in Eq.~\eqref{eq:MILP_z} therefore effectively range only over pairs
  $(\boldsymbol x_-^{(i)},\boldsymbol x_+^{(j)})\in S^{-}\times S^{+}$.

  Next, we replace the indicator vectors $\boldsymbol z^-$ and $\boldsymbol z^+$ with continuous variables $\tilde{\boldsymbol z}^-\in [0,1]^{N^-}$ and $\tilde{\boldsymbol z}^+\in [0,1]^{N^+}$, relaxing the binary choice (selected/not selected)  to a fractional choice (selected at some proportion).
  Accordingly, the feasible set $U'_{(\tilde{\boldsymbol z}^-, \tilde{\boldsymbol z}^+)}$ is
  \begin{align}
    \label{eq:MILP_tilde_z_condition}
    U'_{(\tilde{\boldsymbol z}^-, \tilde{\boldsymbol z}^+)} = \left\{P'\in[0,1]^{N^-\times N^+}\ \middle |\ P'\boldsymbol 1_{N^+} = \frac{1}{N^-_z}\tilde{\boldsymbol z}^-,\ (P')^\top\boldsymbol 1_{N^-} = \frac{1}{N_z^+}\tilde{\boldsymbol z}^+,\ \boldsymbol 1_{N^-}^\top P'\boldsymbol 1_{N^+}=1\right\},
  \end{align}
  and $\sum_{i=1}^{N^-} \tilde{z}^-_i=N_z^-,\ \sum_{j=1}^{N^+} \tilde{z}^+_j=N_z^+$.
  These constraints can be transformed into a manner equivalent to partial transport conditions:
  \begin{align}
    \begin{split}                                & \left(\sum_{i=1}^{N^-} z^-_i=N_z^-\right) \land \left(P'\boldsymbol 1_{N^+} = \frac{1}{ N^-_z}\tilde{\boldsymbol z}^- \right) \land \left(\boldsymbol 1_{N^-}^\top P'\boldsymbol 1_{N^+}=1 \right) \\
      \Rightarrow & \left(P'\boldsymbol 1_{N^+} \leq \frac{1}{ N^-_z}\boldsymbol 1_{N^-}\right) \land \left(\boldsymbol 1_{N^-}^\top P'\boldsymbol 1_{N^+}=1 \right)
    \end{split} \\
    \begin{split}                                 & \left(\sum_{j=1}^{N^+} z^+_j=N_z^+ \right) \land \left((P')^\top\boldsymbol 1_{N^-} = \frac{1}{ N^+_z}\tilde{\boldsymbol z}^+ \right) \land \left(\boldsymbol 1_{N^-}^\top P'\boldsymbol 1_{N^+}=1\right) \\
      \Rightarrow & \left((P')^\top\boldsymbol 1_{N^-} \leq \frac{1}{ N^+_z}\boldsymbol 1_{N^+}\right) \land \left(\boldsymbol 1_{N^-}^\top P'\boldsymbol 1_{N^+}=1 \right)
    \end{split}
  \end{align}
  Obviously, $\tilde{\boldsymbol z}^-$ (and $\tilde{\boldsymbol z}^+$) that satisfies the left condition exists if the right condition holds.
  Hence, the original problem (Eq.~\eqref{eq:MILP_z}) transforms into
  \begin{align}
    \label{eq:partial_appendix}
    \min_{P\in U_{(\frac{N^-}{N_z^-}, \frac{N^+}{N_z^+})}} \sum_{i=1}^{N^-} \sum_{j=1}^{N^+} P_{ij}\|\boldsymbol x_-^{(i)} - \boldsymbol x_+^{(j)}\|_2^2,
  \end{align}
  where
  \begin{align}
    U_{(\frac{N^-}{N_z^-}, \frac{N^+}{N_z^+})} = \left\{P\in[0,1]^{N^-\times N^+}\ \middle |\ P\boldsymbol 1_{N^+} \leq \frac{1}{N_z^-}\boldsymbol 1_{N^-},\ P^\top\boldsymbol 1_{N^-} \leq \frac{1}{N_z^+}\boldsymbol 1_{N^+},\ \boldsymbol 1_{N^-}^\top P\boldsymbol 1_{N^+}=1\right\}.
  \end{align}

  Thus, the POT formulation in Eq.~\eqref{eq:partial_appendix} shares exactly the same objective function as the original MILP in Eq.~\eqref{eq:MILP_z}, while its feasible set contains the discrete feasible set obtained when the indicator variables are restricted to $\{0,1\}$.
  Hence Eq.~\eqref{eq:partial_appendix} is a continuous relaxation of Eq.~\eqref{eq:MILP_z}: every integer-feasible solution is feasible for the relaxed POT problem.
\end{proof}

\subsection{Continuous relaxation for \texorpdfstring{$k\geq 2$}{k>2}}
We now provide a proof of the following proposition concerning the continuous relaxation of the optimization problem in Eqs.~\eqref{eq:k2_1} and \eqref{eq:k2_2} in \S\ref{sec:step1}.
\begin{proposition}
  \label{prop:relaxation2}
  For $k=1,\dots ,K-2$, the two optimization problems below become $\text{POT}_{(1,K - k)}(\hat p_{S_k^-}, \hat p_{X_{[t_{j-1}, t_{j}]}\setminus \cup_{k'=1}^{{k}}S_{k'}^-})$ and $\text{POT}_{(1, K - k)}(\hat p_{S_k^+}, \hat p_{X_{[t_{j}, t_{j+1}]}\setminus \cup_{k'=1}^{{k}}S_{k'}^+})$, respectively, when relaxed to continuous variables:
  \begin{align}
    \tag{\ref{eq:k2_1}}
    \min_{ S_{k+1}^-\subset X_{[t_{j-1}, t_{j}]}\setminus \cup_{k'=1}^{{k}}S_{k'}^-} \mathcal W^2(\hat p_{S^-_{k}}(\boldsymbol x), \hat p_{S^-_{k+1}}(\boldsymbol x)) \ \ \  & \text{(backward)} \\
    \tag{\ref{eq:k2_2}}
    \min_{ S_{k+1}^+\subset X_{[t_{j}, t_{j+1}]}\setminus \cup_{k'=1}^{{k}}S_{k'}^+} \mathcal W^2(\hat p_{S^+_{k}}(\boldsymbol x), \hat p_{S^+_{k+1}}(\boldsymbol x)) \ \ \  & \text{(forward)}
  \end{align}
\end{proposition}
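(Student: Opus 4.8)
The plan is to mirror the argument used for Proposition~\ref{prop:relaxation}, adapting it to the asymmetric situation in which one of the two distributions is already fixed. I would treat the backward case \eqref{eq:k2_1} in detail; the forward case \eqref{eq:k2_2} is identical after swapping the roles of $X_{[t_{j-1},t_j]}$ and $X_{[t_j,t_{j+1}]}$. Fix the level index $k$, abbreviate the already-selected block as $S_k^-$ (of cardinality $N_z=\lceil N_{[t_{j-1},t_j]}/K\rceil$) and the remaining pool as $R^- = X_{[t_{j-1},t_j]}\setminus\cup_{k'=1}^{k}S_{k'}^-$, with $N_R=|R^-|$.

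First I would encode the subset choice $S_{k+1}^-\subset R^-$ by an indicator vector $\boldsymbol z\in\{0,1\}^{N_R}$ with $\sum_i z_i = N_z$, exactly as in the proof of Proposition~\ref{prop:relaxation}. Expanding $\mathcal W^2(\hat p_{S_k^-},\hat p_{S_{k+1}^-})$ as an optimal-transport problem then turns \eqref{eq:k2_1} into a joint minimisation over $\boldsymbol z$ and a coupling $P'$, subject to $P'\boldsymbol 1 = \frac{1}{|S_k^-|}\boldsymbol 1$ on the fixed side, $(P')^\top\boldsymbol 1 = \frac{1}{N_z}\boldsymbol z$ on the pool side, and the total-mass constraint $\boldsymbol 1^\top P'\boldsymbol 1 = 1$. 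The crucial asymmetry relative to Proposition~\ref{prop:relaxation} is that the $S_k^-$ marginal is an equality to the full uniform weights (all of its mass must be matched), whereas the pool marginal is supported only on the selected entries.

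Next I would relax $\boldsymbol z$ to $[0,1]^{N_R}$ and eliminate it via the same implication chain as before: the conjunction of $\sum_i \tilde z_i = N_z$, $(P')^\top\boldsymbol 1 = \frac{1}{N_z}\tilde{\boldsymbol z}$, and $\boldsymbol 1^\top P'\boldsymbol 1 = 1$ yields $(P')^\top\boldsymbol 1 \le \frac{1}{N_z}\boldsymbol 1$, and conversely any $P'$ satisfying this inequality admits a feasible $\tilde{\boldsymbol z} = N_z\,(P')^\top\boldsymbol 1\in[0,1]^{N_R}$. This converts the pool-side constraint into a partial-transport inequality while leaving the fixed-side equality untouched. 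The remaining step is to identify the resulting polytope with $U_{(1,K-k)}$ of Eq.~\eqref{eq:partial_boundary}: the fixed side contributes the weight $\tau=1$, and on the pool side I must check that $\frac{1}{N_z} = \frac{K-k}{N_R}$, i.e. $N_R = (K-k)\,N_z$. When $N_{[t_{j-1},t_j]}$ is divisible by $K$ this is exact, since removing $k$ blocks of size $N_z=N_{[t_{j-1},t_j]}/K$ leaves $N_R = N_{[t_{j-1},t_j]} - kN_z = (K-k)N_z$; otherwise it holds up to the ceiling rounding, in which sense the claim is understood.

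I expect the main obstacle to be bookkeeping the $\tau$ parameter correctly rather than any deep difficulty: the factor $K-k$ emerges from the shrinking pool, and one must verify that the fixed block contributes $\tau=1$ (full mass matched) while the pool contributes $\tau=K-k$ (only a fraction $1/(K-k)$ of its mass is moved, consistent with $K-k\ge 2$ over the range $k=1,\dots,K-2$). The rest — encoding selection as an indicator, relaxing to a fractional selection, and matching against $U_{(\cdot,\cdot)}$ — is a direct transcription of the proof of Proposition~\ref{prop:relaxation}, so I would keep it terse and refer back to that proof for the shared implications.
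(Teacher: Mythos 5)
Your proposal is correct and follows essentially the same route as the paper's proof: encode the subset choice by an indicator vector, write the Wasserstein objective as a coupling problem whose fixed-side marginal is an equality (all of $S_k^-$ is transported) and whose pool-side marginal is supported on the selected entries, then relax the indicator and eliminate it to obtain the partial-transport inequality with $\tau = K-k$ on the pool side and $\tau=1$ on the fixed side. Your explicit check that $N_R=(K-k)N_z$ (exactly under divisibility, up to ceiling rounding otherwise) is a small point the paper's proof passes over silently, but it does not change the argument.
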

\begin{proof}
  Consider first the optimization problem Eq.~\eqref{eq:k2_1}.
  Let $\boldsymbol x_-^{(i)}\in X_{[t_{j-1}, t_{j}]}\setminus \cup_{k'=1}^{{k}}S_{k'}^-$
  and $\boldsymbol x_+^{(j)}\in S_k^-$.
  As in App.~\ref{appendix:proof_relaxation}, define an indicator vector $\boldsymbol z^-$ by
  \begin{align}
    z_i^- =
    \begin{cases}
      1 & \text{if}\  \boldsymbol x_-^{(i)} \in S_{k+1}^- , \\
      0 & \text{otherwise}.
    \end{cases}
  \end{align}
  Then this MILP can be written as
  \begin{align}
    \label{eq:MILP_partial}
    \min_{\boldsymbol z^-} \min_{P''\in U''_{(\boldsymbol z^-,1)}} \sum_{i=1}^{N_k^-} \sum_{j=1}^{N_k^+} P''_{ij}\|\boldsymbol x_-^{(i)} - \boldsymbol x_+^{(j)}\|_2^2, \ \ \text{s.t.} \ \sum_{i=1}^{N^-_k} z^-_i=N_z^-,
  \end{align}
  where $N_k^- = |X_{[t_{j-1}, t_{j}]}\setminus \cup_{k'=1}^{{k}}S_{k'}^-|$, $N_k^+ = |S_k^-|$, and
  \begin{align}
    \label{eq:MILP_partial_z_condition}
    U''_{(\boldsymbol z^-,1)} = \left\{P''\in[0,1]^{N^-\times N^+}\ \middle |\ P''\boldsymbol 1_{N^+} = \frac{1}{N^-_z}\boldsymbol z^-,\ (P'')^\top\boldsymbol 1_{N^-} = \frac{1}{N^+}\boldsymbol 1_{N^+},\ \boldsymbol 1_{N^-}^\top P''\boldsymbol 1_{N^+}=1\right\}.
  \end{align}
  Note that, unlike in the previous MILP, all data in $S_k^-$ are transported, i.e., $(P'')^\top\boldsymbol 1_{N^-} = \frac{1}{N^+}\boldsymbol 1_{N^+}$.
  By relaxing $\boldsymbol z^-$ to continuous values, we obtain:
  \begin{align}
    U_{(\frac{N^-}{N_z^-},1)} = \left\{P\in[0,1]^{N^-\times N^+}\ \middle |\ P\boldsymbol 1_{N^+} \leq \frac{1}{N_z^-}\boldsymbol 1_{N^-},\ P^\top\boldsymbol 1_{N^-} = \frac{1}{N^+}\boldsymbol 1_{N^+},\ \boldsymbol 1_{N^-}^\top P'\boldsymbol 1_{N^+}=1\right\}.
  \end{align}
  Setting $\frac{N^-}{N_z^-} = K-k$ yields the POT problem
  \begin{align}
    \text{POT}_{(K - k,1)}(\hat p_{X_{[t_{j-1}, t_{j}]}\setminus \cup_{k'=1}^{{k}}S_{k'}^-}, \hat p_{S_k^-})
  \end{align}
  A similar argument holds for Eq.~\eqref{eq:k2_2} (the forward case).
  Hence, by continuous relaxation, each step's MILP can be interpreted as a POT problem.
\end{proof}

\section{Algorithm for CT-OT Flow}
\label{appendix:algorithm}
Algorithm~\ref{algorithm:proposed} outlines the procedure for CT-OT Flow.

\begin{algorithm}[H]
  \caption{Dynamics estimation using CT-OT Flow}
  \label{algorithm:proposed}
  \begin{algorithmic}[1]
    \State \textbf{Input:}\quad Datasets $\{X_{[t_j, t_{j+1}]}\}_{j=1}^T$
    \State \textbf{Output:}\quad Continuous-time distribution $\tilde{p}_t(x)$ and ODE/SDE model
    \Statex \textbf{Step 1: high-resolution time label estimation}
    \For{$j = 1,\dots,T-1$}
    \State extract subsets $S^-$ and $S^+$ via Eq.~\eqref{eq:pw}
    \For{$k = 1$ to $K-2$}
    \State extract subsets $S^{\pm}_{k+1}$ via forward/backward selection via Eqs.\eqref{eq:k2_1} and \eqref{eq:k2_2}.
    \EndFor
    \State $S_K^- = X_{[t_{j-1}, t_{j}]} \setminus \cup_{k=1}^{K-1}S_k^-$
    \State $S_K^+ = X_{[t_{j}, t_{j+1}]} \setminus \cup_{k=1}^{K-1}S_k^+$
    \EndFor
    \State assign each data point $x^{(i)}$ a high-resolution time label $\tilde{t}^{(i)}$ via Eqs.~\eqref{eq:times_backward} and~\eqref{eq:times_forward}.
    \Statex \textbf{Step 2: kernel-based time-smoothing of data distributions}
    \State Construct $\tilde{p}_t(x)$ via Eq.~\eqref{eq:kernel_time_dist}.

    \Statex \textbf{Step 3: ODE/SDE training}
    \For{$n = 1$ to $N_{\text{iter}}$}
    \State draw sample pairs $(x^{(0)}, x^{(1)})$ from $\tilde{p}_t(x)$ and $\tilde{p}_{t+\delta t}(x)$
    \State Update model parameters under an ODE/SDE training loss (e.g., Eq.~\eqref{eq:loss_rf}).
    \EndFor
    \State \Return $\tilde{p}_t(x)$ and the trained ODE/SDE model
  \end{algorithmic}
\end{algorithm}

\section{Computation time}
\label{appendix:computational_time}
The runtime of CT-OT Flow depends on the number of data points per interval $N$ and the subdivision factor $K$.
We present three acceleration techniques: screening, mini-batching, and Sinkhorn iterations~\cite{cuturi2013sinkhorn}.
We then investigate how the actual computation time changes and memory usage as $N$ and $K$ vary using network simplex algorithm~\cite{ahuja1993network} with the techniques.

\subsection{Fast computation via screening}
\label{appendix:screening}
Screening reduces cost by pre-filtering irrelevant points before POT.
Although POT is applied over all points to capture boundary regions between contiguous intervals, only points near those boundaries meaningfully contribute to the transport plan.
Points that are far from all points in the opposite group are unlikely to be selected as transport pairs and are therefore redundant in the optimization.

Screening removes such irrelevant points prior to POT optimization, using only the pairwise distance matrix.
Let $D$ denote the pairwise distance matrix between two point sets, $\boldsymbol x^{(m)} \in X$ and $\boldsymbol y^{(n)} \in Y$.
The following points are excluded during the screening process:
\begin{align}
  \{\boldsymbol x^{(m)} \in X \mid \min_{n} D_{mn} > \alpha\}, \quad \{\boldsymbol y^{(n)} \in Y \mid \min_{m} D_{mn} > \beta\},
\end{align}
where $\alpha$ and $\beta$ are distance thresholds.

This pre-filtering step reduces the number of candidate points and accelerates computation.
The thresholds $\alpha$ and $\beta$ can be adjusted so that, for example, $c \, \frac{N}{K}$ data points (i.e., $c$ times the number of selected points) are retained from each interval for the POT computation.
If $c$ is sufficiently large, the solution to the POT problem remains almost unchanged, and the high-resolution time labels obtained by CT-OT Flow are also unaffected.

Figure \ref{fig:correlation_c} illustrates Spearman's rank correlation between the estimated high-resolution time labels and the ground truth with varying the screening parameter $c$ over 5 runs on Spiral dataset.
As shown in Fig.~\ref{fig:correlation_c}, setting $c=10$ already yields the high Spearman's rank correlation of over 0.99.

\begin{figure}
  \centering
  \includegraphics[width=0.4\textwidth]{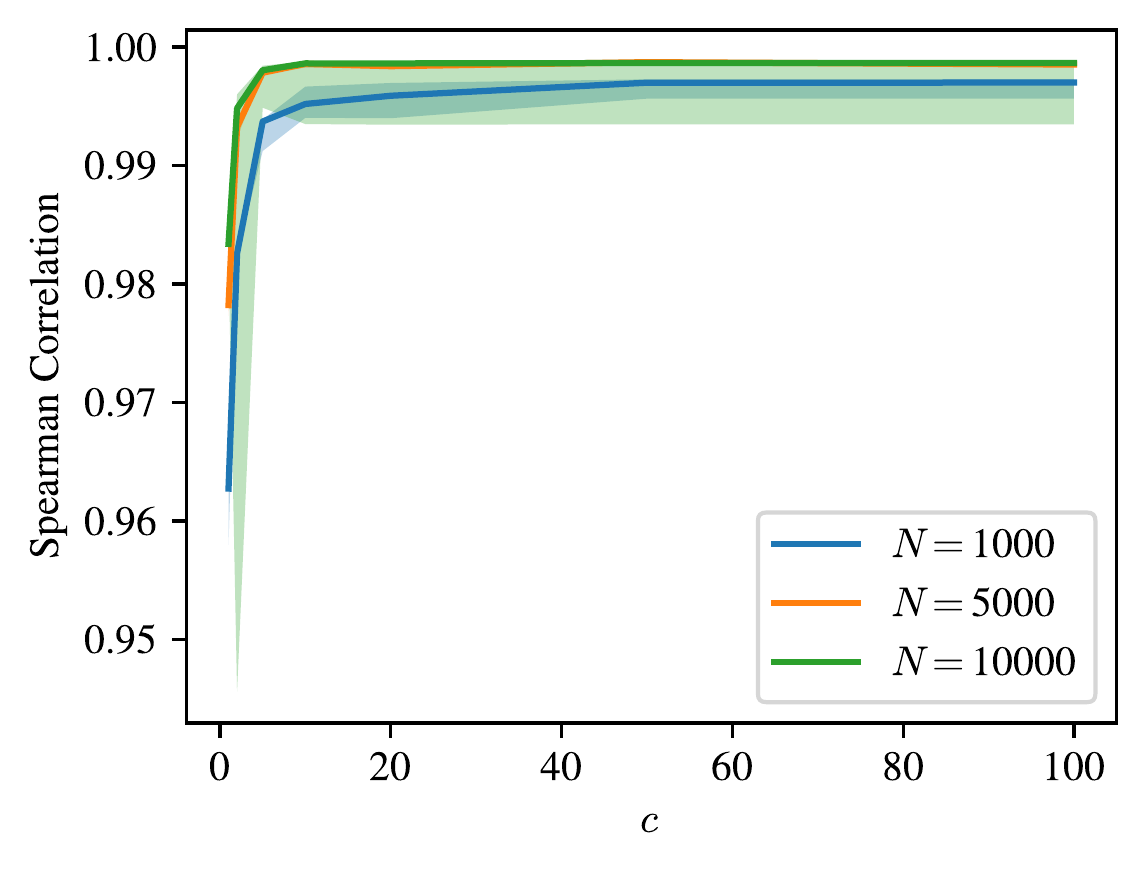}
  \caption{Spearman correlation between the estimated high-resolution time labels with screening setting ($c>0$) and ground truth. The areas correspond to 0.25 and 0.75 quantiles.}
  \label{fig:correlation_c}
\end{figure}

\subsection{Fast computation with mini-batches}
\label{appendix:random_subsets}
Mini-batching reduces cost by partitioning each interval’s data into disjoint batches of size $m$.
When the total number of points per interval $N$ is very large, applying CT-OT Flow to the entire set becomes prohibitively expensive; as shown in App.~\ref{appendix:empirical_runtime}, even with screening it is infeasible when $N>10^6$.
A simple remedy is to randomly split the dataset into $G = \lceil N/m \rceil$ mini-batches of size $m$ and run Step~1 of CT-OT Flow independently on each.
If the mini-batch size is sufficiently large, the induced error in the inferred high-resolution labels is expected to be negligible; the labels thus closely approximate those obtained by a full-data run.
For fixed $m$, total cost scales linearly with $N$, enabling scalability to massive datasets.
The mini-batch size $m$ controls the trade-off between accuracy and runtime.

Figure~\ref{fig:correlation_m} shows the Spearman's rank correlation between the ground truth and high-resolution time labels from the mini-batch setting as $m$ and $N$ vary.
Figure~\ref{fig:correlation_m} shows that with $m=1,000$ for $N=10,000$, the Spearman's rank correlation between the inferred high-resolution times from mini-batches and the ground truth exceeds 0.99.

\begin{figure}
  \centering
  \includegraphics[width=0.4\textwidth]{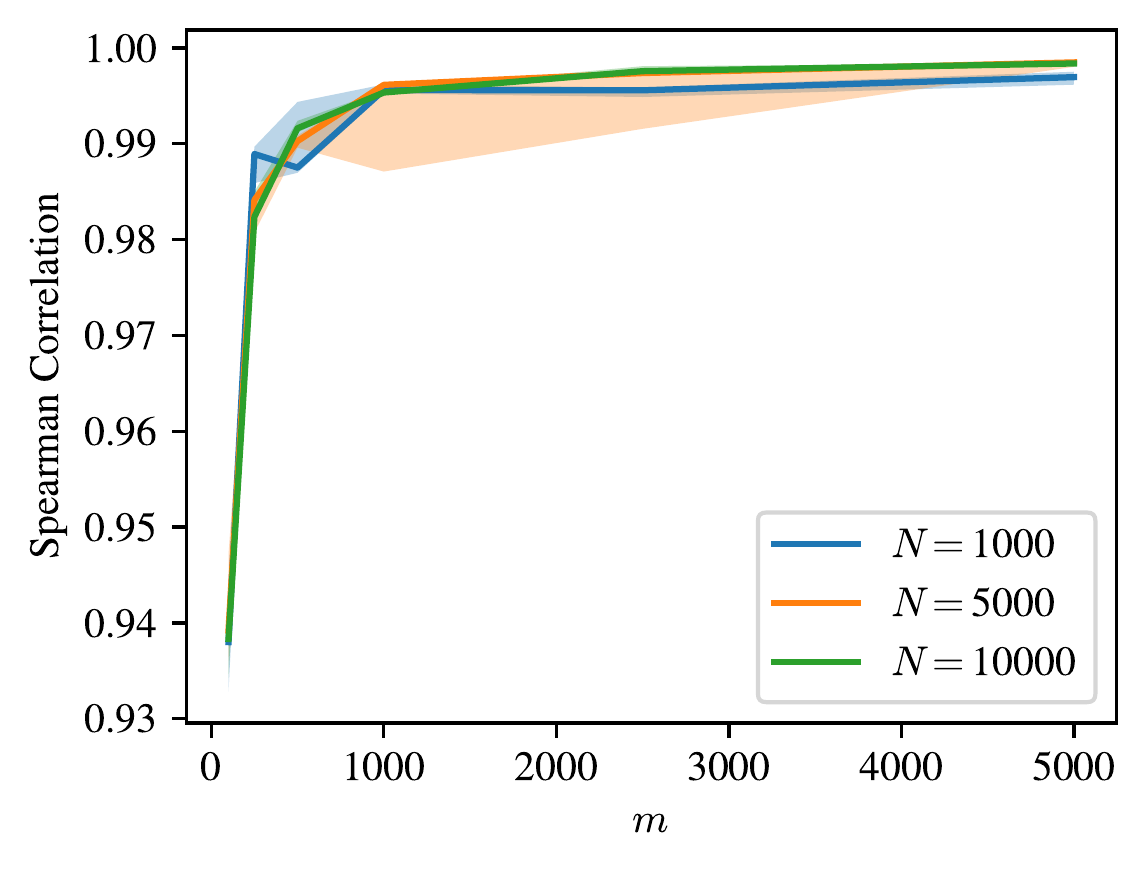}
  \caption{Spearman correlation between the high-resolution time labels with mini-batch setting and ground truth. The areas correspond to 0.25 and 0.75 quantiles.\label{fig:correlation_m}
  }
\end{figure}

\subsection{\changed{Effect of Sinkhorn entropy regularization}}
\label{app:sinkhorn-reg}

Sinkhorn-based optimal transport has attracted attention as an efficient approach for transport computation\cite{cuturi2013sinkhorn}.
By introducing an entropy regularization term, Sinkhorn replaces a sharp transport plan with a smoother coupling, while enabling efficient matrix-scaling updates that can be implemented on GPUs.

To examine how this entropy regularization affects temporal refinement in CT-OT Flow, we replace the original unregularized POT with Sinkhorn iterations and vary the entropy regularization coefficient.
Figure~\ref{fig:correlation_entropy} shows how the correlation between the estimated high-resolution time labels and the ground-truth time labels changes as the entropy regularization strength varies.
In most cases, the estimation accuracy deteriorates when entropy regularization is introduced.
This is because the smoothing effect of entropy regularization blurs the transport structure and makes it more difficult to identify data points located near the temporal boundaries.
In addition, when the entropy regularization is made weak in order to preserve a sharp transport structure, Sinkhorn iterations tend to become numerically less stable and typically require more iterations to converge~\cite{peyre2019computational}.

\begin{figure}
  \centering
  \includegraphics[width=0.4\textwidth]{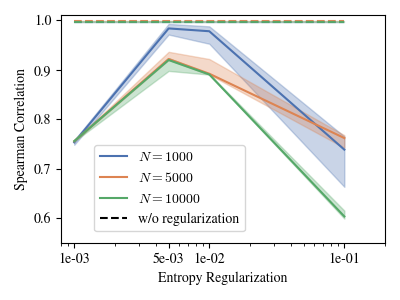}
  \caption{Spearman correlation between the estimated high-resolution time labels with entropy regularization and ground truth. The areas correspond to 0.25 and 0.75 quantiles.}
  \label{fig:correlation_entropy}
\end{figure}

\subsection{Empirical runtime and memory usage}
\label{appendix:empirical_runtime}
We benchmarked Step~1 runtimes using the network simplex algorithm and the Sinkhorn solver~\cite{cuturi2013sinkhorn} (entropic regularization $\epsilon=0.1$).
For Sinkhorn, we set a maximum of $10^4$ iterations and a convergence tolerance of $10^{-9}$.
We also applied the screening method (with $c=10$) and the mini-batch method (with $m=1,000$) using the network simplex solver.
Experiments with the simplex algorithm were run on a CPU (AMD EPYC 7742 64-core, single thread), while Sinkhorn iterations were executed on a GPU (NVIDIA A100 40 GB).

Figure~\ref{fig:computational_time} shows runtimes for (i) $K=100$ with varying $N$, and (ii) $N=5,000$ with varying $K$.
Runtimes increase monotonically with $N$ for all methods.
For $N>10,000$, Sinkhorn outperforms simplex, although it cannot run beyond $N\approx30,000$ due to GPU memory limits.
Even for $N=100,000$, screening ($c=10$) recovers high-resolution labels in tens of minutes on CPU only, while mini-batching yields labels in approximately 30 s.
The effect of $K$ on runtime is minor, but smaller $K$ (i.e., matching more points per iteration) slows convergence.
For a typical setting ($K\approx100$), screening and mini-batch methods exhibit faster runtime than the network simplex algorithm.
Additionally, for $N=1,000,000$, our mini-batch setting ($m=1,000$) achieves a Spearman's rank correlation of 0.994 between the high-resolution time labels and the ground truth with a runtime of approximately 480 s on a single CPU.

\begin{figure}
  \centering
  \includegraphics[width=0.7\textwidth]{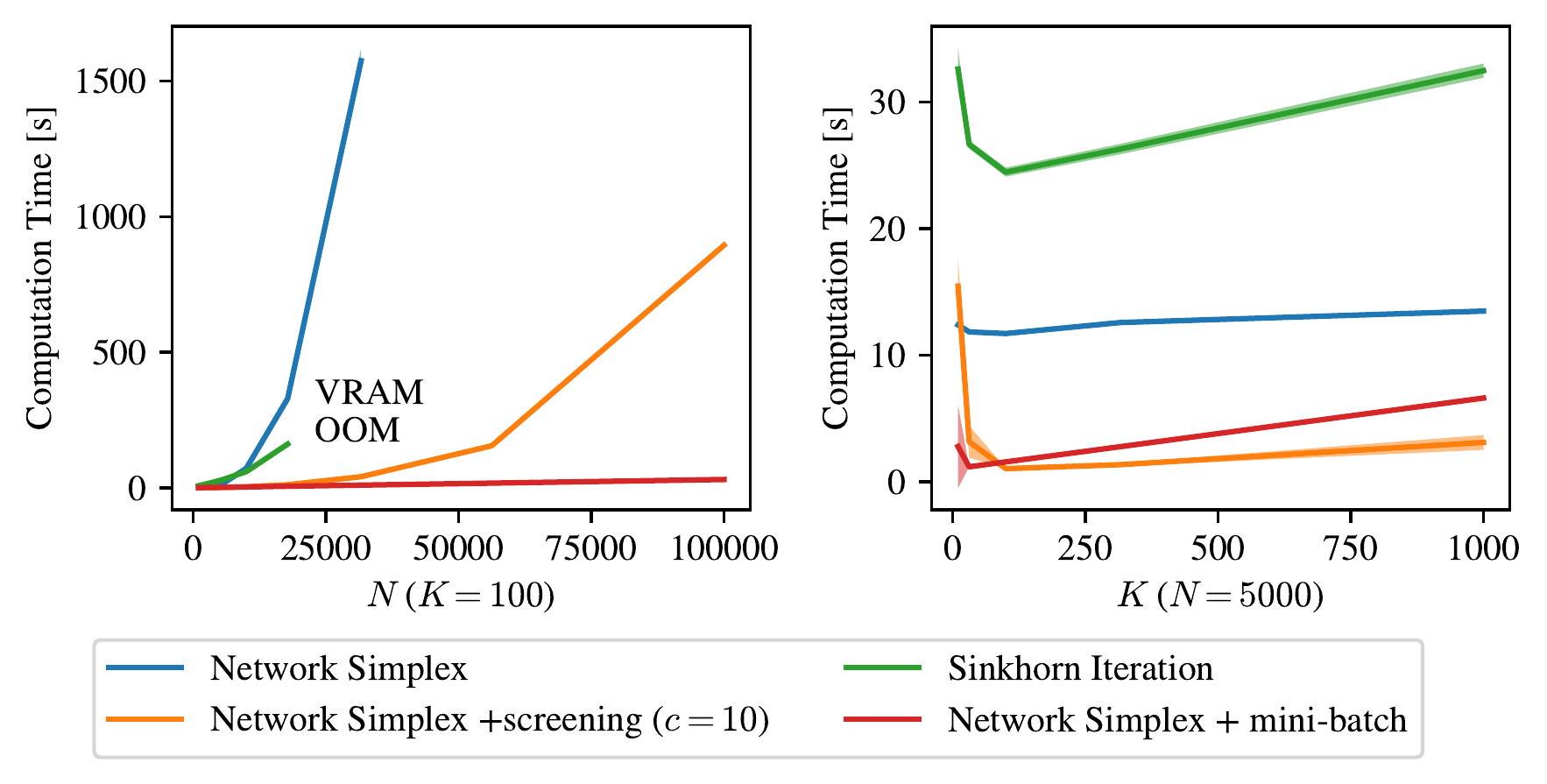}
  \caption{Runtimes of Step~1.
    Lines indicate means and shaded areas the 25th–75th quantiles over 5 runs.
    “OOM” marks out-of-memory failures.
  \label{fig:computational_time}}
\end{figure}

\begin{figure}
  \centering
  \includegraphics[width=0.7\textwidth]{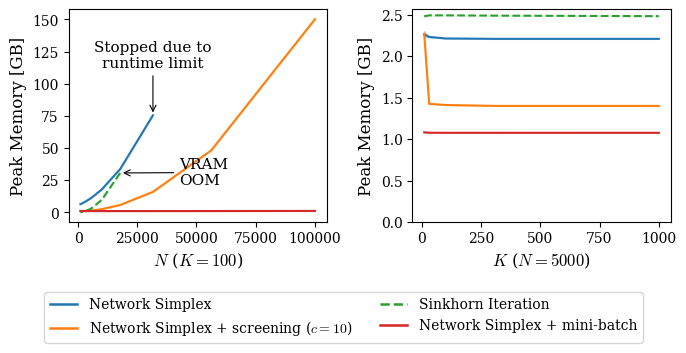}
  \caption{Peak memory usage of Step~1.
    Lines indicate means and shaded areas the 25th--75th quantiles over 5 runs.
    Solid lines denote CPU memory (peak RSS), while the dashed Sinkhorn line denotes peak GPU memory usage.
    “OOM” marks GPU out-of-memory failures.
  }
  \label{fig:memory}
\end{figure}

\changed{
  In addition to runtime, we measured the memory usage of Step~1 in terms of peak resident set size (RSS) on CPU and, for Sinkhorn, peak GPU memory usage.
  Figure~\ref{fig:memory} shows that CPU memory usage increases sharply with $N$ for the network simplex solver.
  Screening reduces the peak CPU memory, whereas the mini-batch variant remains nearly constant at around $1.1$--$1.2$~GB over the entire range.
  For Sinkhorn, GPU memory grows rapidly with $N$, reaching about $31$~GB at the largest feasible setting and resulting in GPU out-of-memory for $N \gtrsim 3\times 10^4$.
  Note that CT-OT Flow with the network simplex remains feasible for moderate problem sizes (e.g., $N\approx 20,000$) even without mini-batching or screening.
  The effect of $K$ is minor overall: CPU memory remains roughly stable across $K\in[10,1000]$ for the simplex-based methods, and Sinkhorn also exhibits nearly constant GPU memory usage over this range.
}

\section{Parameters}
\label{appendix:experiments_parameters}
We follow prior work in using the Euclidean distance as the ground metric for computing the 1-Wasserstein distance.
Table~\ref{tab:parameters} summarizes the hyperparameters used in our experiments.
For OT-CFM~\cite{tong2024improving}, $\text{[SF]}^2$M~\cite{tong2024simulation}, MFM~\cite{kapusniak2024metric},  ENOT~\cite{gushchin2024entropic}, and Slingshot~\cite{street2018slingshot}, the parameters were set according to the publicly available implementation at \\
\url{https://github.com/atong01/conditional-flow-matching} (accessed on February 18, 2025), \\
\url{https://github.com/KrishnaswamyLab/TrajectoryNet} (accessed on February 18, 2025), \\
\url{https://github.com/kksniak/metric-flow-matching} (accessed on March 5, 2025), \\
\url{https://github.com/ngushchin/EntropicNeuralOptimalTransport} (accessed on April 12, 2025), and\\
\url{https://github.com/mossjacob/pyslingshot} (accessed on July 25, 2025), respectively.

\begin{table}
  \centering
  \caption{List of default parameters.}
  \label{tab:parameters}
  \begin{tabular}{p{4.0cm}cc}\toprule
    Model                                                                         & Hyperparameters                      & Values                                         \\ \midrule
    \multirow[t]{5}{*}{CT-OT Flow}                                                                    & $K$                                  & 100 (unless otherwise stated)                  \\
    & $\gamma$                             & 0.005 (unless otherwise stated)                \\
    & $\delta t$                           & 0.1                                            \\
    & POT solver                           & Python Optimal Transport~\cite{flamary2021pot} \\
    & POT Computation Method               & Network simplex~\cite{ahuja1993network}        \\ \midrule
    \multirow[t]{8}{2cm}{I-/OT-CFM~\cite{tong2024improving}, $\text{[SF]}^2$M~\cite{tong2024simulation}}                                            & Gradient Norm Clipping               & 0.1                                            \\
    & Learning Rate                        & 1e-4                                           \\
    & Number of Iterations                 & 5000                                           \\
    & Batch Size                           & 128                                            \\
    & Number of Hidden Layers              & 3 (Fully Connected)                             \\
    & Number of Hidden Nodes               & 64                                             \\
    & Activation Function                  & SELU~\cite{klambauer2017self}                  \\
    & $ma$                             & 0.1                                            \\ \midrule
    \multirow[t]{8}{2cm}{MFM~\cite{kapusniak2024metric}}                                                & Batch Size                           & 128                                            \\
    & $\gamma$                             & 0.125                                          \\
    & $\rho$                               & 0.001                                          \\
    & n\_center                            & 100                                            \\
    & $\kappa$                             & 1.0                                            \\
    & Metric Network Learning Rate         & 1e-2                                           \\
    & Metric Network Optimizer             & Adam~\cite{kingma2014adam}                     \\ 
    & Metric  Network Number of Epochs     & 50                                             \\
    & Geopath Network Learning Rate        & 1e-4                                           \\
    & Geopath Network Optimizer            & Adam~\cite{kingma2014adam}                     \\
    & Geopath Network Weight Decay         & 1e-5                                           \\
    & Flow Network Learning Rate           & 1e-3                                           \\
    & Flow Network Optimizer               & AdamW~\cite{loshchilov2017decoupled}           \\
    & Flow Network Weight Decay            & 1e-3                                           \\
    & Flow Network $ma$                & 0.1                                            \\ \midrule
    \multirow[t]{8}{2cm}{ENOT~\cite{gushchin2024entropic}}                                              & Number of Iterations                 & 1000                                           \\
    & Number of Inner Iterations           & 10                                             \\
    & $\epsilon$                           & 0.1                                            \\
    & Shift Model Number of Hidden  Layers & 3 (Fully Connected)                             \\
    & Shift Model Number of Hidden Nodes   & 100                                            \\
    & Shift Model Optimizer                & Adam~\cite{kingma2014adam}                     \\
    & Shift Model Learning Rate            & 0.0001                                         \\
    & Shift Model Number of Steps          & 100                                            \\
    & Beta Net Number of Hidden  Layers    & 3 (Fully Connected)                             \\
    & Beta Net Number of Hidden Nodes      & 100                                            \\
    & Beta Net Optimizer                   & Adam~\cite{kingma2014adam}                     \\
    & Beta Net Learning Rate               & 0.0001                                         \\ \midrule
    \multirow[t]{8}{2cm}{TrajectoryNet~\cite{tong2020trajectorynet}}                                    & Mode                                 & Base                                           \\
    & Number of Iterations                 & 10,000 \\
    & Number of Hidden Layers (Fully Connected) & 3 \\
    & Number of Hidden Nodes & 64 \\
    & Activation Function & Tanh \\
    & Batch Size & 1000 \\
    & Optimizer & Adam \\
    & Learning Rate & 0.001 \\
    & Weight Decay & 1e-5 \\
    & atol & 1e-5 \\
    & rtol & 1e-5 \\
    \bottomrule
  \end{tabular}
\end{table}

\section{Handling non-contiguous time intervals}
\label{appendix:non_contiguous}
Although CT-OT Flow assumes contiguous intervals, it can be applied to non-contiguous intervals as in the Arch dataset (\S\ref{sec:artificial}).
Accuracy on non-contiguous data depends on whether the CT-OT Flow assumption, (i.e., subsets with similar true times have smaller Wasserstein distance), remains valid.

Figure~\ref{fig:appendix_non_contiguous} shows the estimated high-resolution time labels after removing the middle time-interval data from each synthetic dataset in \S\ref{sec:artificial}.
In the figure, gray points indicate excluded samples, while the color of each remaining point represents its estimated high-resolution time label.
As the figure shows, excluding these data points alters the results for the Spiral dataset, whereas it does not affect the Y-shaped and Arch datasets.
In the Spiral dataset with non-contiguous intervals, the central and outer subsets are close in Wasserstein distance despite being far apart in time.
As a result, the assumption is violated, and the estimated high-resolution time labels deviate from the ground truth.
By contrast, for the Y-shaped and Arch datasets, the assumption holds even with non-contiguous intervals, enabling accurate high-resolution time label estimation.

\clearpage

\begin{figure}
  \centering
  \begin{subfigure}[b]{\textwidth}
    \centering
    \includegraphics[width=0.7\textwidth]{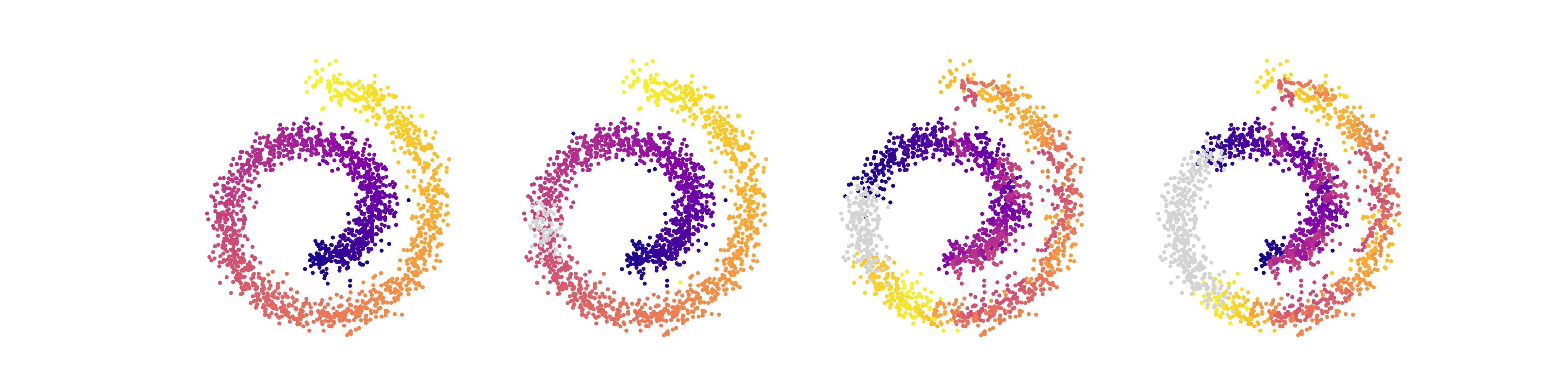}
    \caption{Spiral}
  \end{subfigure}
  \begin{subfigure}[b]{\textwidth}
    \centering
    \includegraphics[width=0.7\textwidth]{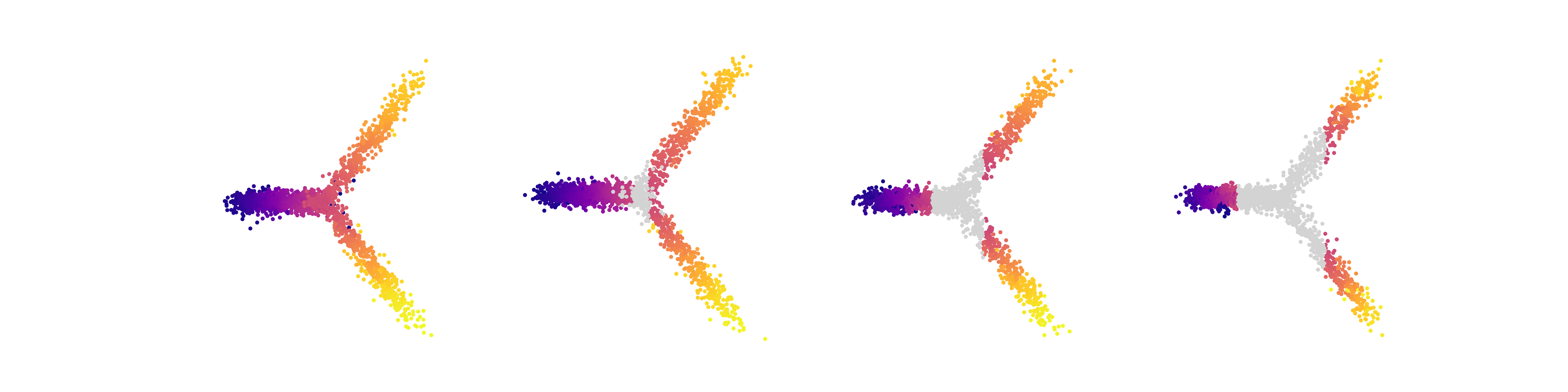}
    \caption{Y-shaped}
  \end{subfigure}
  \begin{subfigure}[b]{\textwidth}
    \centering
    \includegraphics[width=0.7\textwidth]{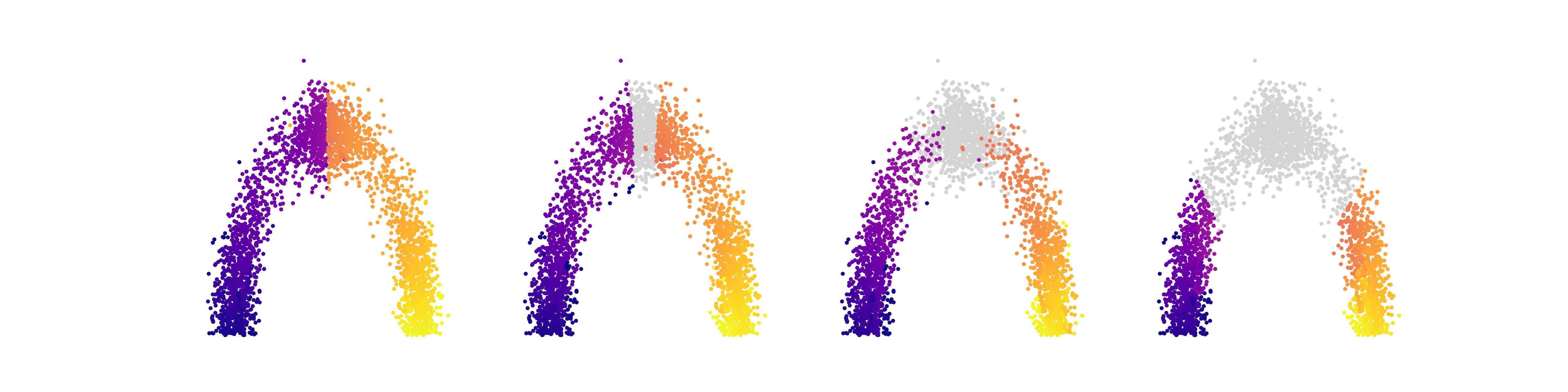}
    \caption{Arch}
  \end{subfigure}
  \caption{Estimated high-resolution time labels for non-contiguous time intervals.}
  \label{fig:appendix_non_contiguous}
\end{figure}

\section{\changed{Failure case for boundary extraction}}
\label{appendix:failure_case_for_boundary_extraction}
We provide an additional synthetic example to illustrate a failure mode of Step 1 in CT-OT Flow.
As discussed in App.~\ref{appendix:non_contiguous}, accurate high-resolution time label estimation depends on the assumption that subsets with similar true times have smaller Wasserstein distance.
When temporally distant states become geometrically close, this assumption can be violated, and the boundary subsets extracted by POT may no longer correspond to the true temporal boundary.

To examine this effect, we constructed a near-return example based on a circular trajectory.
As shown in Fig.~\ref{fig:failure_scatter}, the data consist of two contiguous snapshots sampled from the trajectory, while the end and the beginning of the trajectory are separated only by a small angular gap.
By varying this gap, we obtain settings in which, for smaller gaps, temporally discontinuous states near the end and beginning become geometrically close, thereby violating our assumption that subsets with similar true times have smaller Wasserstein distance.

We then applied CT-OT Flow and evaluated the inferred high-resolution time labels using Spearman's rank correlation with the ground-truth times over 10 runs.
Figure~\ref{fig:failure_spearman} shows that the rank correlation.
When the gap is large, CT-OT Flow recovers the temporal ordering almost perfectly.
However, as the gap decreases, temporally discontinuous states become increasingly difficult to distinguish based on geometry alone, and the inferred ordering deteriorates.

Representative examples are shown in Fig.~\ref{fig:failure_scatter}.
For a gap of 30$^\circ$, the inferred labels are consistent with the true temporal progression.
For 15$^\circ$, mild distortions appear near the boundary region.
For 0$^\circ$, the inferred labels are clearly misaligned, indicating that POT-based extraction can select a distributionally similar subgroup rather than the true temporal boundary.
These results clarify that CT-OT Flow is reliable when local temporal neighborhoods are identifiable from the geometry of the data, but may fail when temporally discontinuous states become geometrically similar.

\begin{figure}[t]
  \centering
  \begin{minipage}[t]{0.66\textwidth}
    \centering
    \includegraphics[width=\linewidth]{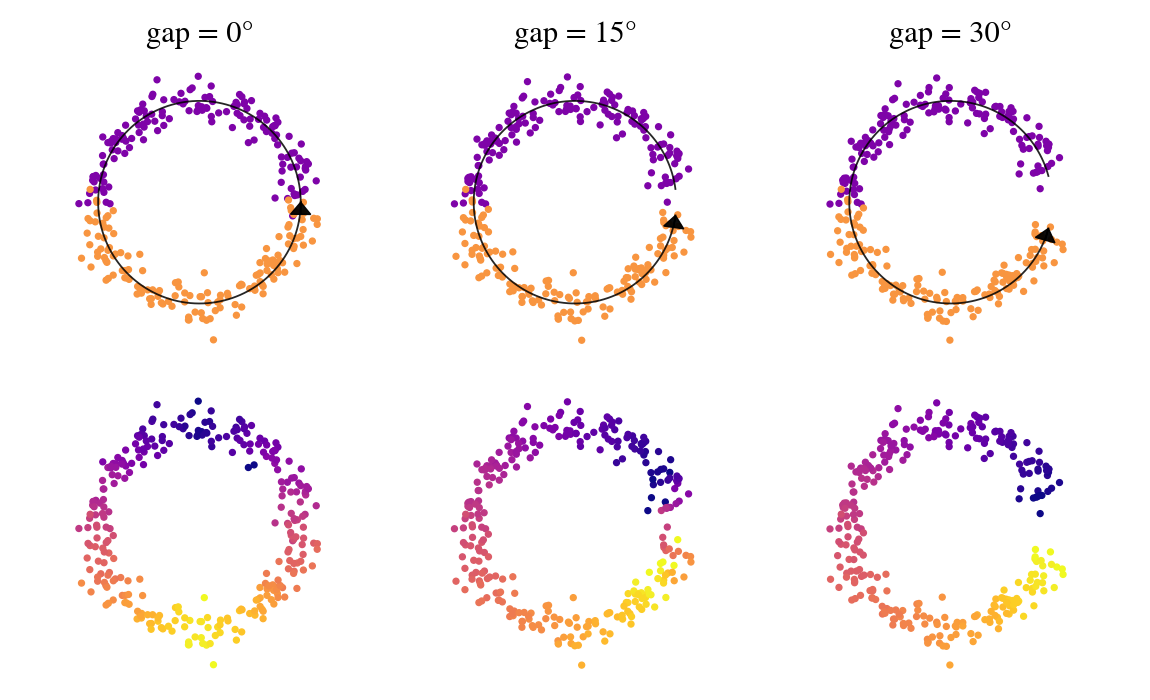}\par\vspace{4pt}
    \refstepcounter{figure}\label{fig:failure_scatter}
    \parbox[t]{0.8\linewidth}{\small\textbf{Figure \thefigure.} Input snapshots (top) and inferred time labels~(bottom) for gaps of 0$^\circ$, 15$^\circ$, and 30$^\circ$. Black lines denote the true trajectories.}
  \end{minipage}
  \hfill
  \begin{minipage}[t]{0.33\textwidth}
    \centering
    \includegraphics[width=\linewidth]{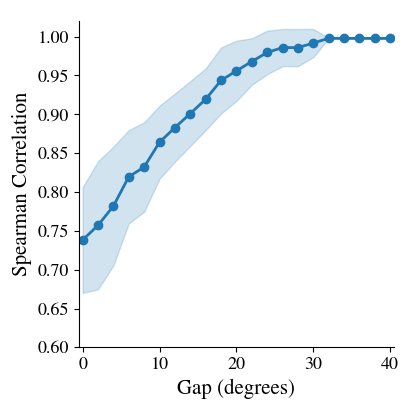}\par\vspace{4pt}
    \refstepcounter{figure}\label{fig:failure_spearman}
    \parbox[t]{\linewidth}{\small\textbf{Figure \thefigure.} Spearman correlation between the inferred high-resolution time labels and the ground-truth times in the near-return example, averaged over 10 runs.}
  \end{minipage}
\end{figure}

\section{Datasets}
\label{appendix:data_generation}
\subsection{Synthetic datasets}
We evaluate the ODE/SDE methods on the following three synthetic datasets:
\begin{enumerate}
  \item Spiral:  At each time $t$, the data distribution $p_t^*(\boldsymbol{x})$ is defined as a Gaussian with covariance $ma I$, where $I$ is the identity matrix and $ma = 0.1$. The trajectory $X_j^*$ consists of the means of these Gaussians. The time intervals are $[0,1]$ and $[1,2]$.
  \item Y-shaped: The distribution $p_t^*(\boldsymbol{x})$ is a mixture of two Gaussians. The trajectory $X_j^*$ follows the paths traced by the means of these two components. The time intervals are $[0,1]$ and $[1,2]$.
  \item Arch: We generate data following the procedure in~\cite{kapusniak2024metric}. The trajectory $X_j^*$ is chosen as an arc passing through the center of the arch. The time intervals are $[0,1]$ and $[2,3]$.
\end{enumerate}
The datasets are two-dimensional.
In all datasets, we add i.i.d. Gaussian noise $\sim\mathcal{N}(0,0.1)$ to each observation time.
As a result, some data points near the boundary of a time interval may fall into a different interval than their true time.
We set the number of samples per interval to 1000.

\subsection{scRNA-seq datasets}
We evaluated the ODE/SDE methods on two scRNA-seq datasets: Bifurcation and EB.
The Bifurcation dataset~\cite{Bargaje2017cell,sha2024reconstructing} consists of scRNA-seq measurements projected into four dimensions using principal component analysis (PCA).
We grouped the first three labels (0 d, 1 d, 1.5 d) as $t=0$, and then next three labels (2 d, 2.5 d, 3 d) as $t=1$.
The EB dataset~\cite{moon2018embryoid} consists of preprocessed scRNA-seq measurements projected into two dimensions using the nonlinear dimensionality reduction method PHATE~\cite{moon2019visualizing}.
We grouped the first two labels (Day 00-03, Day 06-09) as $t=0$, and then next three labels (Day 12-15, Day 18-21, Day 24-27) as $t=1$.
For both datasets, we randomly split each timestamp’s data into 70\% training and 30\% testing sets, and evaluate models using the hold-out test data.
These datasets are publicly available at \url{https://data.mendeley.com/datasets/v6n743h5ng/1} (Bifurcation; CC BY 4.0) and \url{https://github.com/yutongo/TIGON} (EB; MIT License).

\subsection{Typhoon trajectory dataset}
\label{appendix:typhoon_dataset}
This section details the experiments using meteorological data in \S\ref{sec:typhoon}.
We train a model using position data (longitude and latitude) sampled from typhoon tracks in Southeast Asia between 1951 and 2024.
The raw data were obtained from \url{https://www.jma.go.jp/jma/jma-eng/jma-center/rsmc-hp-pub-eg/besttrack.html} (accessed on April 24, 2025) under the license compatible with CC BY 4.0; \url{https://www.jma.go.jp/jma/en/copyright.html}.
The dataset records typhoon positions at 6-hour intervals from their onset.
Before training, the longitude and latitude coordinates were normalized to have zero mean and unit variance.
We used the Euclidean distance between points as the ground metric.

\section{Additional results}
\label{appendix:additional_results}

\subsection{Estimated trajectories}
Figure~\ref{fig:appendix_trajectory} provides additional visualizations of the estimated trajectories for the Y-shaped, Arch, Bifurcation, and EB datasets.
In each figure, the black lines indicate either the ground-truth or the estimated trajectories, depending on the method.
For CT-OT Flow, the color of each point represents its estimated high-resolution time label, illustrating the model's ability to resolve temporal ordering from coarse or noisy time labels.
In the Y-shaped and Arch datasets (Figs.~\ref{fig:appendix_trajectory}(a) and~(b)), CT-OT Flow successfully captures the geometry of the dynamics, even in the presence of temporal uncertainty.

\begin{figure}
  \centering
  \begin{subfigure}[b]{\textwidth}
    \centering
    \includegraphics[width=0.9\textwidth]{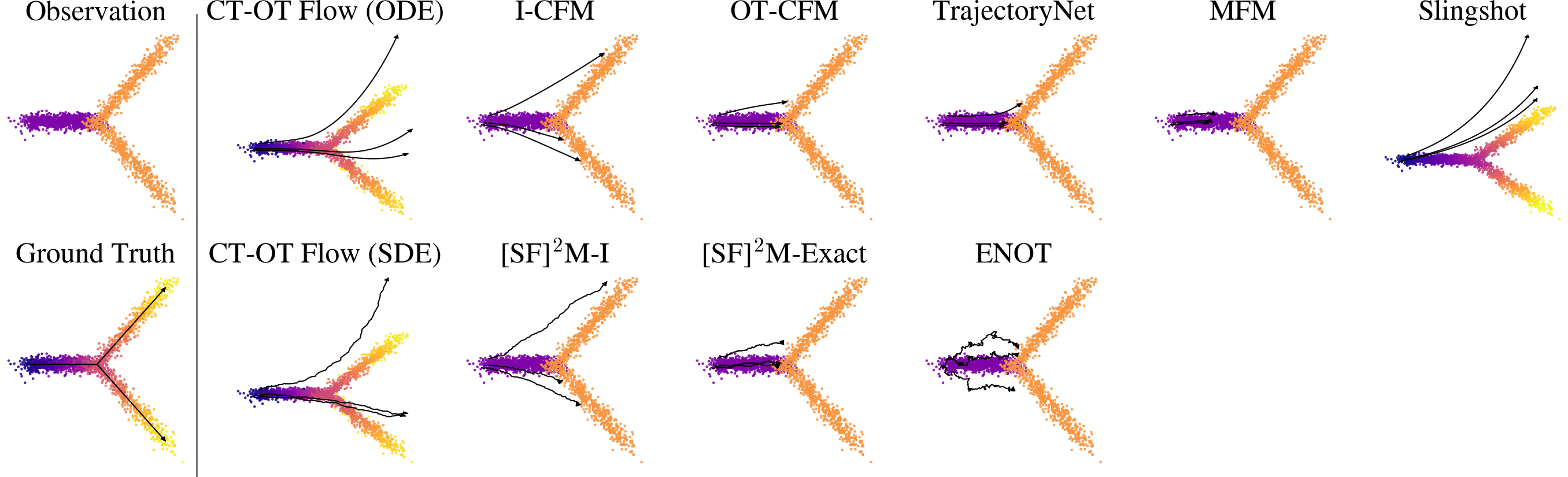}
    \caption{Y-shaped}
  \end{subfigure}
  \begin{subfigure}[b]{\textwidth}
    \centering
    \includegraphics[width=0.9\textwidth]{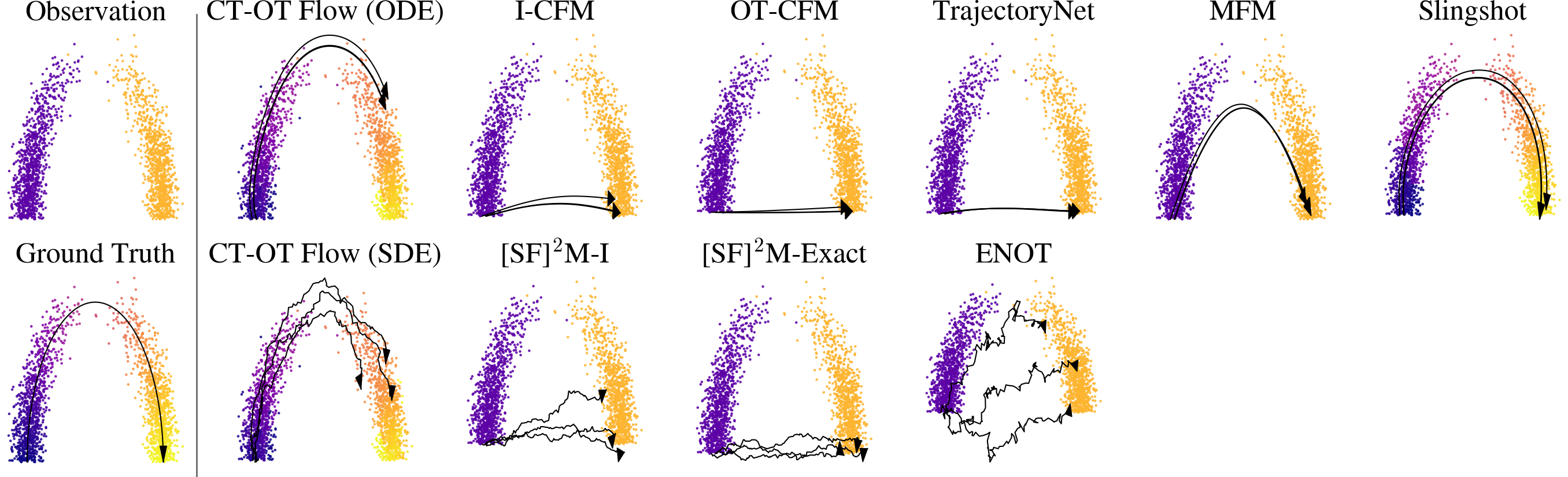}
    \caption{Arch}
  \end{subfigure}
  \caption{Estimated trajectories. The black lines indicate the true or estimated trajectories, while the color of each point in CT-OT Flow and Slingshot indicates its estimated high-resolution time label and pseudotime label, respectively.    \label{fig:appendix_trajectory}
  }
\end{figure}

\subsection{\changed{Temporal label estimation accuracy}}
\label{app:temporal-label-synthetic}

In this section, we evaluate the accuracy of point-wise temporal label estimation on the synthetic datasets.
For CT-OT Flow and Slingshot, we use the high-resolution time labels and pseudotime estimated by the respective models.
For the remaining methods that do not directly provide point-wise temporal labels, we infer them from the learned trajectories in a post-hoc manner.
Let $\hat X^{(i)}= [\hat{\boldsymbol x}_0^{(i)}, \dots, \hat{\boldsymbol x}_{\hat T}^{(i)}]$ denote the discretized trajectory, where $\hat X_t = \{\hat{\boldsymbol x}_t^{(i)}\}_{i=1}^{\hat N}$ is the set of simulated particles at time $t$.
For an observed sample $\boldsymbol x^{(j)}$, we define its time-dependent log-density score at time $t$ by a Gaussian kernel estimator,
\begin{align}
  \log \rho_t(\boldsymbol x^{(j)})
  =
  \log \left(
    \frac{1}{\hat N}
    \sum_{i=1}^{\hat N}
    \exp\!\left(
      -\frac{\|\boldsymbol x^{(j)} - \hat{\boldsymbol x}_t^{(i)}\|^2}{2h^2}
    \right)
  \right),
\end{align}
where $h>0$ is the kernel bandwidth.
We then convert these scores into a posterior distribution over discretized time points,
\begin{align}
  p(t \mid \boldsymbol x^{(j)})
  =
  \frac{\exp(\log \rho_t(\boldsymbol x^{(j)}))}
  {\sum_{t'} \exp(\log \rho_{t'}(\boldsymbol x^{(j)}))}.
\end{align}
Finally, the temporal label of $\boldsymbol x^{(j)}$ is defined as the posterior mean, $\hat{t}^{(j)} = \sum_t t \, p(t \mid \boldsymbol x^{(j)})$.
In our implementation, the bandwidth $h$ is chosen automatically from the trajectory and observation sets.
Specifically, we flatten all simulated trajectory points across time, compute for each observed sample its minimum Euclidean distance to the flattened trajectory set, and set $h$ to the median of these minimum distances.

Table~\ref{tab:artificial_time} summarizes Spearman and Pearson correlations between estimated time and ground-truth time.
Overall, CT-OT Flow yields the most stable and consistently accurate temporal estimates across datasets.
The difference is particularly clear on Spiral, where both the ODE and SDE variants of CT-OT Flow achieve nearly perfect agreement with ground-truth time, while several competing baselines exhibit weak correlations.
On Y-shaped and Arch, some baselines also attain high correlations, indicating that coarse temporal ordering can be recovered on simpler geometries.

\begin{table}[t]
  \centering
  \caption{Pseudotime correlations (mean$\pm$std) on the synthetic datasets. Bold: best within ODE/SDE groups.}
  \label{tab:artificial_time}
  \hspace{-5mm}
  \begin{tabular}{ccccccc}
    \toprule
    Dataset$\rightarrow$ & \multicolumn{2}{c}{Spiral} & \multicolumn{2}{c}{Y-shaped} & \multicolumn{2}{c}{Arch} \\
    \cmidrule(lr){2-3}
    \cmidrule(lr){4-5}
    \cmidrule(lr){6-7}
    Method$\downarrow$ Metric$\rightarrow$ & Spearman & Pearson & Spearman & Pearson & Spearman & Pearson \\
    \toprule
    CT-OT Flow (ODE) & \textbf{1.00}\std{0.00} & \textbf{1.00}\std{0.00} & 0.98\std{0.00} & 0.98\std{0.00} & \textbf{0.99}\std{0.00} & 0.99\std{0.00} \\
    I-CFM~\cite{tong2024improving} & 0.70\std{0.02} & 0.80\std{0.01} & 0.98\std{0.00} & 0.98\std{0.00} & 0.83\std{0.01} & 0.97\std{0.00} \\
    OT-CFM~\cite{tong2024improving} & 0.81\std{0.00} & 0.88\std{0.00} & \textbf{0.99}\std{0.00} & \textbf{0.99}\std{0.00} & 0.89\std{0.00} & 0.98\std{0.00} \\
    TrajectoryNet~\cite{tong2020trajectorynet} & 0.83\std{0.04} & 0.89\std{0.02} & 0.97\std{0.01} & 0.97\std{0.01} & 0.85\std{0.03} & 0.98\std{0.00} \\
    MFM~\cite{kapusniak2024metric} & 0.82\std{0.03} & 0.87\std{0.02} & \textbf{0.99}\std{0.00} & 0.98\std{0.00} & 0.97\std{0.00} & 0.99\std{0.00} \\
    Slingshot~\cite{street2018slingshot} & 0.61\std{0.00} & 0.64\std{0.00} & \textbf{0.99}\std{0.00} & \textbf{0.99}\std{0.00} & \textbf{0.99}\std{0.00} & \textbf{1.00}\std{0.00} \\
    \midrule
    CT-OT Flow (SDE) & \textbf{1.00}\std{0.00} & \textbf{1.00}\std{0.00} & 0.98\std{0.00} & 0.98\std{0.00} & \textbf{0.99}\std{0.00} & \textbf{0.99}\std{0.00} \\
    $\text{[SF]}^2\text{M}$-I~\cite{tong2024simulation} & 0.70\std{0.02} & 0.80\std{0.01} & 0.98\std{0.00} & 0.98\std{0.00} & 0.83\std{0.01} & 0.97\std{0.00} \\
    $\text{[SF]}^2\text{M}$-Exact~\cite{tong2024simulation} & 0.81\std{0.01} & 0.87\std{0.00} & \textbf{0.99}\std{0.00} & \textbf{0.99}\std{0.00} & 0.89\std{0.01} & 0.98\std{0.00} \\
    ENOT~\cite{gushchin2024entropic} & 0.76\std{0.01} & 0.85\std{0.01} & \textbf{0.99}\std{0.00} & \textbf{0.99}\std{0.00} & 0.91\std{0.01} & 0.98\std{0.00} \\
    \bottomrule
  \end{tabular}
\end{table}

\subsection{\changed{Evaluation without observed initial states}}
\label{app:conditioned-no-initial}

We next consider a more challenging setting in which true initial states are unavailable even at test time.
To better reflect this scenario, we evaluate two initial-state strategies.
In the first, we use the first observed snapshot itself as the initial state and simulate trajectories from that population.
In the second, we estimate the initial state rather than observing it directly.
As in App.~\ref{app:temporal-label-synthetic}, we first estimate point-wise time labels.
Based on these estimates, we select the earliest $10\%$ of observed points according to the estimated time values and use them as surrogate initial points.

The results are summarized in Tables~\ref{tab:artificial_unconditioned} and~\ref{tab:artificial_conditioned}.
When the first observed snapshot is directly used as the initial state (Table~\ref{tab:artificial_unconditioned}), CT-OT Flow generally performs worse than in the setting with true observed initial states (Table~\ref{tab:artificial}), reflecting the fact that the first snapshot does not coincide with the true starting population.

When the initial state is instead estimated from inferred time labels (Table~\ref{tab:artificial_conditioned}), CT-OT Flow improves markedly, and its performance becomes close to that under true initialization.
On Spiral, CT-OT Flow achieves the best performance in both DTW and Wasserstein distance for both ODE and SDE models.
Overall, these results suggest that even when true initial states are unavailable at test time, CT-OT Flow can still achieve strong dynamics estimation performance by estimating surrogate initial points based on the inferred high-resolution time labels.

\begin{table}[t]
  \centering
  \caption{Estimation errors (mean$\pm$std) on the synthetic datasets when the first observed snapshot is used as the initial state. Bold: best within ODE/SDE groups.}
  \label{tab:artificial_unconditioned}
  \hspace{-5mm}
  \begin{tabular}{ccccccc}
    \toprule
    Dataset$\rightarrow$ & \multicolumn{2}{c}{Spiral} & \multicolumn{2}{c}{Y-shaped} & \multicolumn{2}{c}{Arch} \\
    \cmidrule(lr){2-3}
    \cmidrule(lr){4-5}
    \cmidrule(lr){6-7}
    Method$\downarrow$ Metric$\rightarrow$ & $\mathcal L_\text{DTW}$ & $\mathcal L_\text{Wass}$ & $\mathcal L_\text{DTW}$ & $\mathcal L_\text{Wass}$ & $\mathcal L_\text{DTW}$ & $\mathcal L_\text{Wass}$ \\
    \toprule
    CT-OT Flow (ODE) & \textbf{20.75}\std{0.33} & 1.46\std{0.04} & 16.44\std{1.87} & 1.79\std{0.35} & \textbf{7.60}\std{0.64} & \textbf{0.33}\std{0.01} \\
    I-CFM~\cite{tong2024improving} & 49.25\std{0.17} & \textbf{1.02}\std{0.00} & 19.31\std{0.16} & \textbf{0.59}\std{0.00} & 20.82\std{0.15} & 0.49\std{0.00} \\
    OT-CFM~\cite{tong2024improving} & 52.99\std{0.03} & 1.09\std{0.00} & 20.48\std{0.21} & 0.65\std{0.00} & 21.32\std{0.04} & 0.49\std{0.00} \\
    TrajectoryNet~\cite{tong2020trajectorynet} & 53.19\std{0.21} & 1.09\std{0.01} & 20.24\std{0.42} & 0.62\std{0.02} & 23.45\std{2.27} & 0.54\std{0.06} \\
    MFM~\cite{kapusniak2024metric} & 53.02\std{0.34} & 1.09\std{0.01} & 20.86\std{0.43} & 0.65\std{0.01} & 10.86\std{0.55} & 0.35\std{0.01} \\
    Slingshot~\cite{street2018slingshot} & 27.68\std{0.52} & 1.47\std{0.04} & \textbf{15.49}\std{0.97} & 1.26\std{0.23} & 9.93\std{1.22} & 0.44\std{0.03} \\
    \midrule
    CT-OT Flow (SDE) & \textbf{21.52}\std{0.33} & 1.46\std{0.04} & \textbf{16.64}\std{1.94} & 1.79\std{0.34} & \textbf{8.48}\std{0.57} & \textbf{0.34}\std{0.01} \\
    $\text{[SF]}^2\text{M}$-I~\cite{tong2024simulation} & 48.65\std{0.17} & \textbf{1.02}\std{0.00} & 18.99\std{0.23} & \textbf{0.59}\std{0.01} & 20.62\std{0.14} & 0.50\std{0.00} \\
    $\text{[SF]}^2\text{M}$-Exact~\cite{tong2024simulation} & 52.48\std{0.05} & 1.09\std{0.00} & 20.34\std{0.24} & 0.65\std{0.00} & 21.08\std{0.10} & 0.50\std{0.00} \\
    ENOT~\cite{gushchin2024entropic} & 50.08\std{0.09} & 1.09\std{0.00} & 18.76\std{0.18} & 0.64\std{0.01} & 18.81\std{0.13} & 0.50\std{0.00} \\
    \bottomrule
  \end{tabular}
\end{table}

\begin{table}[t]
  \centering
  \caption{Estimation errors (mean$\pm$std) on the synthetic datasets when the initial state is estimated from inferred time labels. Bold: best within ODE/SDE groups.}
  \label{tab:artificial_conditioned}
  \hspace{-5mm}
  \begin{tabular}{ccccccc}
    \toprule
    Dataset$\rightarrow$ & \multicolumn{2}{c}{Spiral} & \multicolumn{2}{c}{Y-shaped} & \multicolumn{2}{c}{Arch} \\
    \cmidrule(lr){2-3}
    \cmidrule(lr){4-5}
    \cmidrule(lr){6-7}
    Method$\downarrow$ Metric$\rightarrow$ & $\mathcal L_\text{DTW}$ & $\mathcal L_\text{Wass}$ & $\mathcal L_\text{DTW}$ & $\mathcal L_\text{Wass}$ & $\mathcal L_\text{DTW}$ & $\mathcal L_\text{Wass}$ \\
    \toprule
    CT-OT Flow (ODE) & \textbf{8.24}\std{0.88} & \textbf{0.39}\std{0.07} & 11.79\std{1.12} & 0.65\std{0.21} & 6.77\std{0.96} & \textbf{0.23}\std{0.03} \\
    I-CFM~\cite{tong2024improving} & 54.05\std{0.87} & 1.19\std{0.03} & 20.41\std{0.38} & \textbf{0.56}\std{0.01} & 21.26\std{0.20} & 0.51\std{0.00} \\
    OT-CFM~\cite{tong2024improving} & 49.52\std{0.38} & 1.03\std{0.01} & 27.41\std{0.19} & 0.80\std{0.00} & 23.37\std{0.09} & 0.52\std{0.00} \\
    TrajectoryNet~\cite{tong2020trajectorynet} & 48.11\std{1.65} & 1.06\std{0.04} & 43.47\std{3.75} & 1.14\std{0.08} & 51.19\std{5.83} & 1.10\std{0.09} \\
    MFM~\cite{kapusniak2024metric} & 49.21\std{1.00} & 1.03\std{0.02} & 28.19\std{0.56} & 0.74\std{0.03} & 12.21\std{1.23} & 0.31\std{0.01} \\
    Slingshot~\cite{street2018slingshot} & 19.27\std{0.75} & 1.02\std{0.04} & \textbf{9.48}\std{0.83} & 0.66\std{0.15} & \textbf{4.41}\std{0.76} & 0.25\std{0.03} \\
    \midrule
    CT-OT Flow (SDE) & \textbf{10.00}\std{0.79} & \textbf{0.42}\std{0.06} & \textbf{12.11}\std{1.10} & 0.67\std{0.20} & \textbf{7.89}\std{0.88} & \textbf{0.25}\std{0.03} \\
    $\text{[SF]}^2\text{M}$-I~\cite{tong2024simulation} & 52.97\std{1.14} & 1.17\std{0.03} & 20.53\std{0.66} & \textbf{0.57}\std{0.01} & 21.18\std{0.30} & 0.51\std{0.00} \\
    $\text{[SF]}^2\text{M}$-Exact~\cite{tong2024simulation} & 48.97\std{0.79} & 1.03\std{0.01} & 27.20\std{0.23} & 0.80\std{0.01} & 23.05\std{0.34} & 0.53\std{0.00} \\
    ENOT~\cite{gushchin2024entropic} & 50.50\std{0.74} & 1.13\std{0.02} & 24.47\std{0.45} & 0.74\std{0.01} & 20.82\std{0.61} & 0.51\std{0.01} \\
    \bottomrule
  \end{tabular}
\end{table}

\subsection{\changed{Additional visualizations for hyperparameter sensitivity}}
\label{appendix:hyperparameter_details}
This appendix supplements the hyperparameter sensitivity analysis in \S\ref{sec:hyperparameters} with additional qualitative visualizations.
While the main text focuses on the practical conclusion that both $K$ and $\gamma$ should be chosen in an intermediate range, the figures in this section illustrate how these hyperparameters affect the inferred labels and the smoothed intermediate distributions.

\paragraph{Subdivision factor \texorpdfstring{$K$}{K}.}
Figure~\ref{fig:diff_K} visualizes the inferred high-resolution time labels for different values of $K$ on the Spiral, Y-shaped, and Arch datasets. When $K$ is small, each interval is partitioned only coarsely, so many neighboring samples share the same inferred label and the recovered temporal ordering remains rough. As $K$ increases, the labels become progressively finer and align more closely with the underlying trajectories. However, when $K$ is too large, each subset contains too few samples, making the inferred labels more sensitive to noise and local sampling fluctuations. This visual pattern matches the quantitative results in Figs.~\ref{fig:spearman} and~\ref{fig:ablation_diff_k}, where moderate values of $K$ achieve the best balance between temporal resolution and robustness.
For completeness, Table~\ref{tab:flow_matching_k_ablation} further summarizes the results for the default choice $K=100$ and its $\pm 50\%$ variants ($K=50,150$).

\begin{figure}
  \centering
  \begin{subfigure}[b]{\textwidth}
    \centering
    \includegraphics[width=0.8\textwidth]{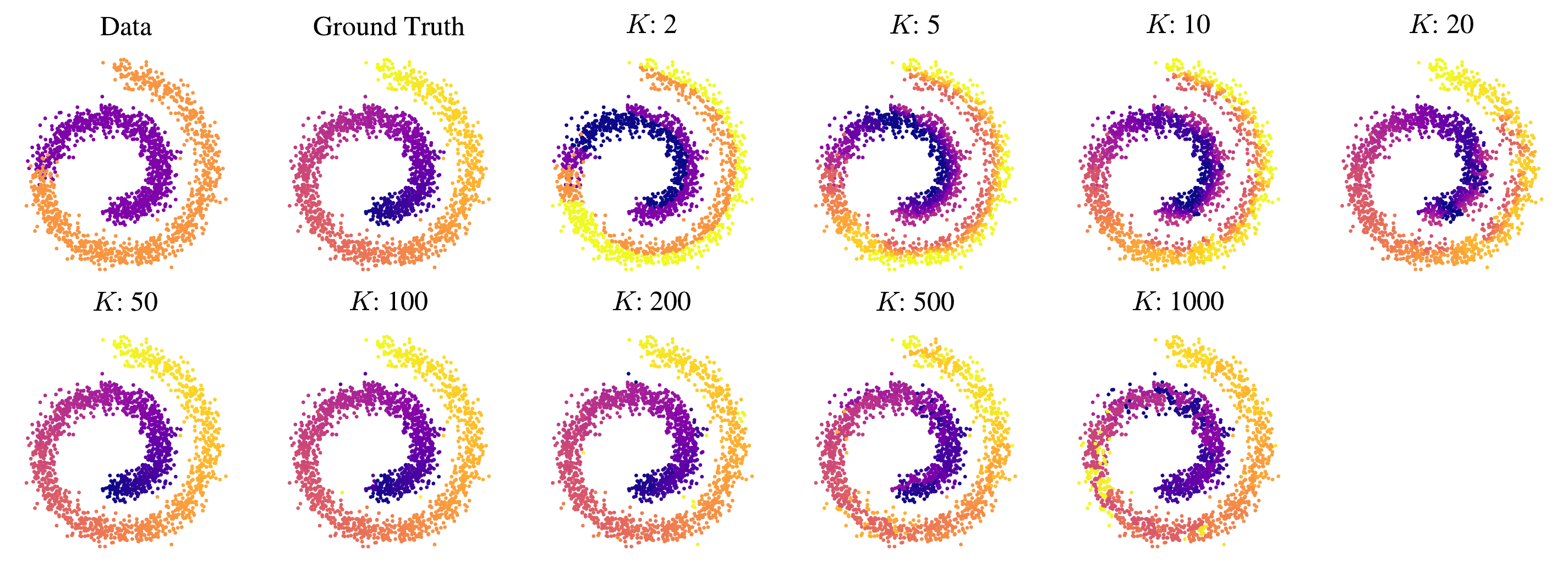}
    \caption{Spiral}
  \end{subfigure}
  \begin{subfigure}[b]{\textwidth}
    \centering
    \includegraphics[width=0.8\textwidth]{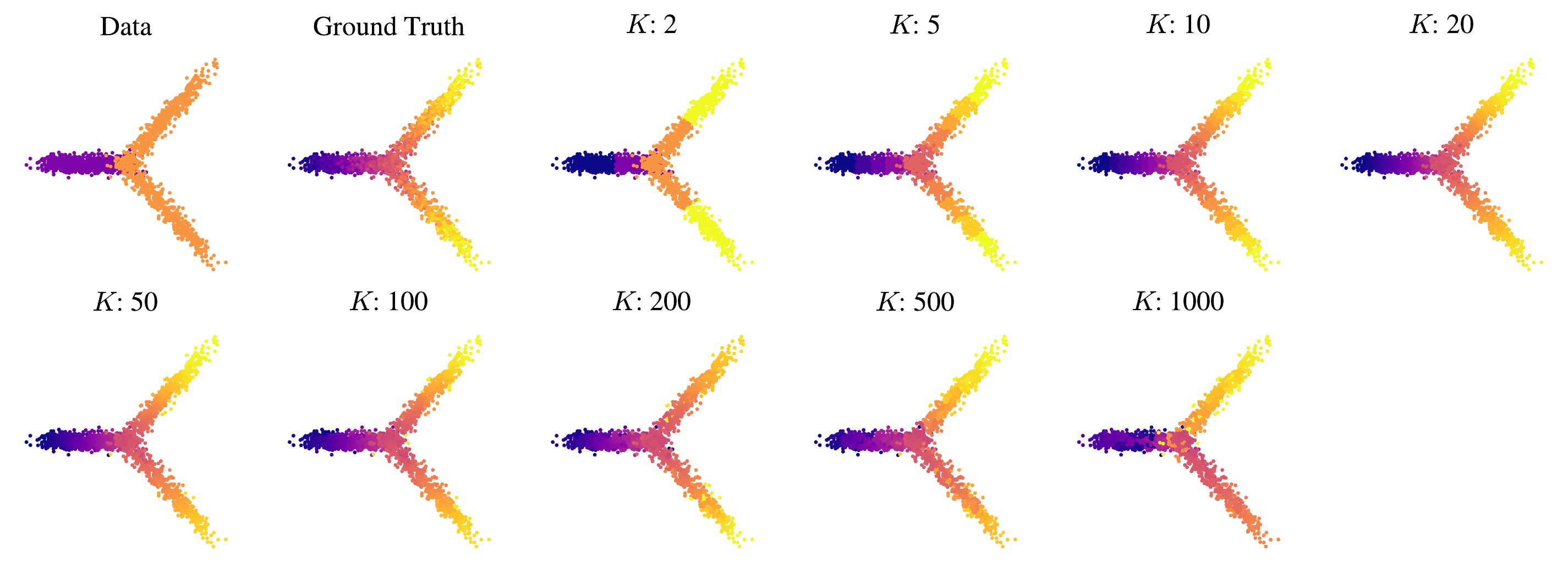}
    \caption{Y-shaped}
  \end{subfigure}
  \begin{subfigure}[b]{\textwidth}
    \centering
    \includegraphics[width=0.8\textwidth]{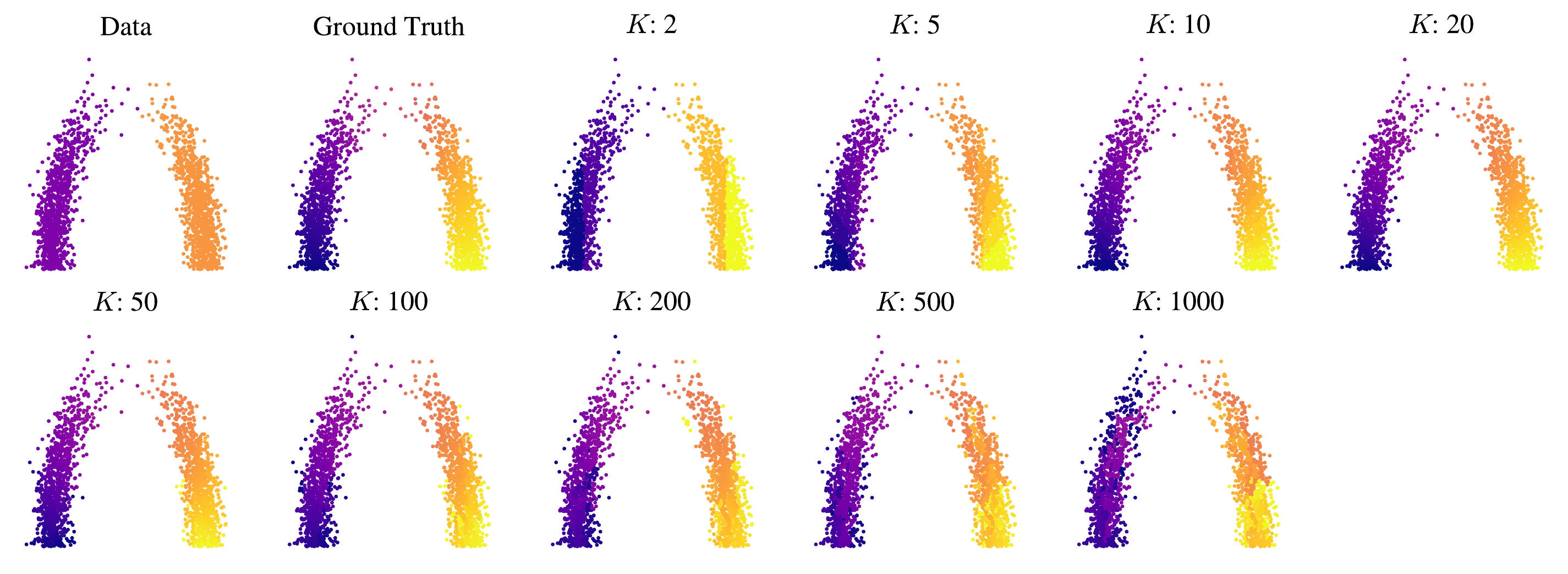}
    \caption{Arch}
  \end{subfigure}
  \caption{Estimated high-resolution time labels with varying $K$.    \label{fig:diff_K}
  }
\end{figure}

\begin{table}[t]
  \centering
  \caption{Estimation errors (mean$\pm$std) on the synthetic datasets with varying $K$. Bold: best within ODE/SDE groups across $K \in \{50, 100, 150\}$. }
  \label{tab:flow_matching_k_ablation}
  \hspace{-5mm}
  \begin{tabular}{cccccccc}
    \toprule
    Dataset$\rightarrow$ & & \multicolumn{2}{c}{Spiral} & \multicolumn{2}{c}{Y-shaped} & \multicolumn{2}{c}{Arch} \\
    \cmidrule(lr){3-4}
    \cmidrule(lr){5-6}
    \cmidrule(lr){7-8}
    Method$\downarrow$ & $K$ & $\mathcal L_\text{DTW}$ & $\mathcal L_\text{Wass}$ & $\mathcal L_\text{DTW}$ & $\mathcal L_\text{Wass}$ & $\mathcal L_\text{DTW}$ & $\mathcal L_\text{Wass}$ \\
    \toprule
    CT-OT Flow (ODE) & 50 & 36.39\std{1.21} & 1.25\std{0.09} & 12.11\std{1.78} & 0.64\std{0.26} & 10.43\std{2.57} & 0.33\std{0.06} \\
    & 100 & 9.63\std{1.25} & 0.31\std{0.05} & \textbf{10.80}\std{0.93} & \textbf{0.43}\std{0.11} & \textbf{6.81}\std{1.28} & \textbf{0.23}\std{0.05} \\
    & 150 & \textbf{9.08}\std{0.82} & \textbf{0.30}\std{0.03} & 12.89\std{1.76} & 0.69\std{0.22} & 8.16\std{1.59} & 0.26\std{0.05} \\
    \midrule
    CT-OT Flow (SDE) & 50 & 36.05\std{1.09} & 1.25\std{0.09} & 12.81\std{1.77} & 0.67\std{0.25} & 11.49\std{2.39} & 0.35\std{0.06} \\
    & 100 &  11.52\std{1.12} & 0.36\std{0.04} & \textbf{11.51}\std{1.04} & \textbf{0.47}\std{0.11} & \textbf{7.94}\std{1.07} & \textbf{0.25}\std{0.04} \\
    & 150 & \textbf{10.63}\std{0.80} & \textbf{0.34}\std{0.03} & 13.86\std{1.83} & 0.75\std{0.21} & 8.99\std{1.36} & 0.27\std{0.04} \\
    \bottomrule
  \end{tabular}
\end{table}

\paragraph{Kernel smoothing parameter \texorpdfstring{$\gamma$}{gamma}.}
Figure~\ref{fig:diff_gamma} visualizes the time-smoothed empirical distribution $\tilde p_t(\boldsymbol x)$ at $t=0.5$ for different values of $\gamma$. When $\gamma$ is very small, the temporal kernel places substantial weight only on samples whose inferred times are extremely close to $t$, so the estimated distribution remains sharp but can inherit errors from imperfect time-label inference.
In contrast, when $\gamma$ is large, the kernel averages over a broader temporal range, producing overly smooth distributions that blur phase-specific structure and weaken the benefit of Step~1. Thus, $\gamma$ controls a bias--variance trade-off: smaller values preserve temporal detail but are more sensitive to labeling errors, whereas larger values improve robustness at the cost of excessive smoothing. This interpretation is consistent with Fig.~\ref{fig:ablation_diff_gamma}, where intermediate values of $\gamma$ give the lowest prediction error.

\begin{figure}
  \centering
  \begin{subfigure}[b]{\textwidth}
    \centering
    \includegraphics[width=0.9\textwidth]{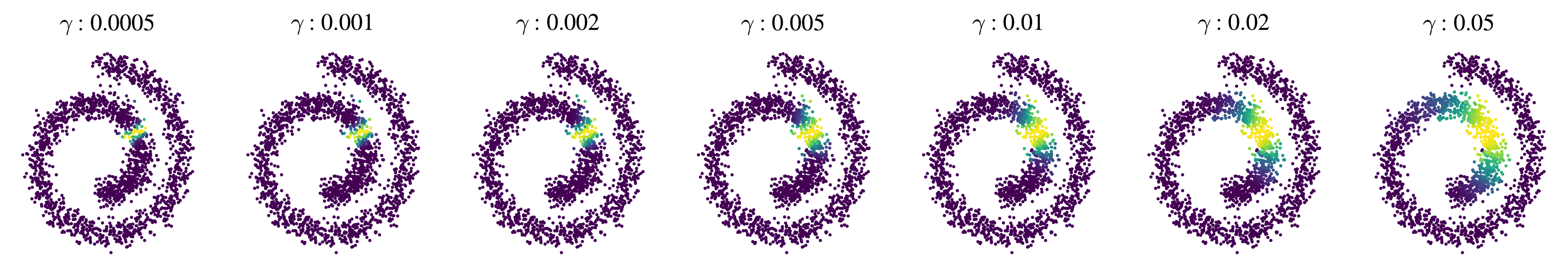}
    \caption{Spiral}
  \end{subfigure}
  \begin{subfigure}[b]{\textwidth}
    \centering
    \includegraphics[width=0.9\textwidth]{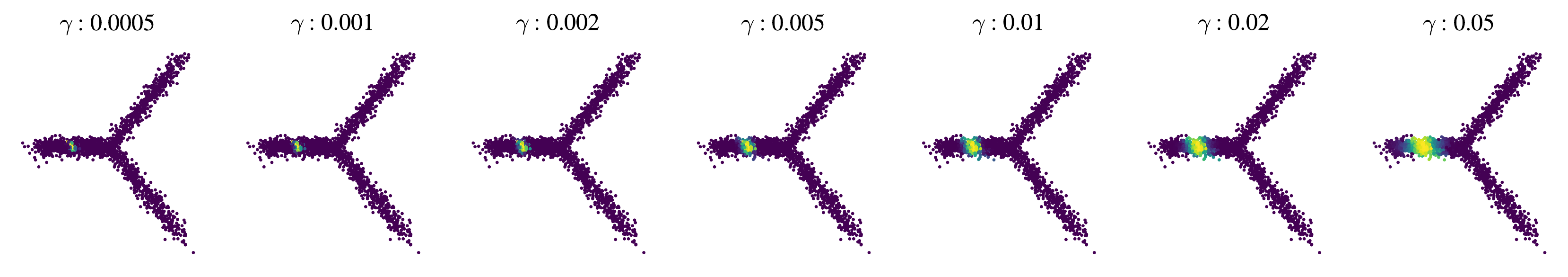}
    \caption{Y-shaped}
  \end{subfigure}
  \begin{subfigure}[b]{\textwidth}
    \centering
    \includegraphics[width=0.9\textwidth]{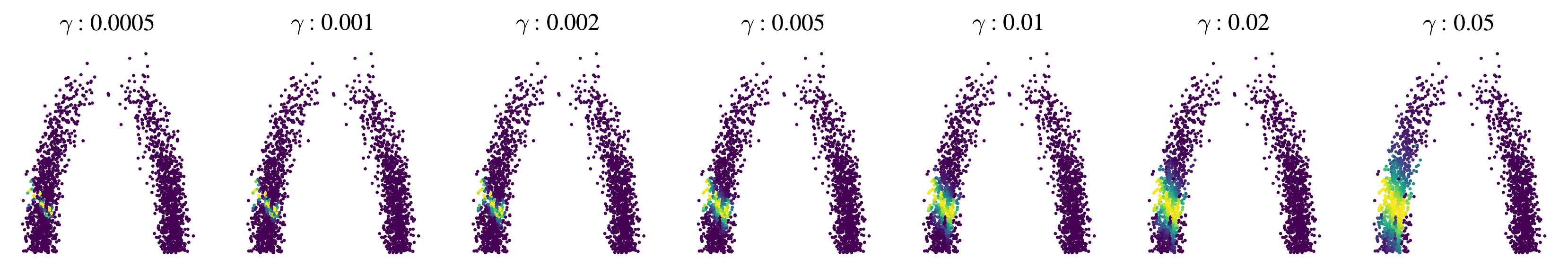}
    \caption{Arch}
  \end{subfigure}
  \caption{Estimated $\tilde p_t(\boldsymbol x)$, where $t=0.5$ with varying $\gamma$.    \label{fig:diff_gamma}
  }
\end{figure}

\subsection{\changed{Approximation quality of the POT-based boundary selection}}
\label{app:milp-pot}

The boundary-selection step in our method is implemented via a POT relaxation (Eq.~\eqref{eq:pw}), whereas the corresponding exact formulation can be written as an MILP (Eq.~\eqref{eq:MILP}).
Because the exact MILP is not scalable, we use the POT-based approximation in practice.
Here, we quantify the approximation error introduced by the relaxation in Eq.~\eqref{eq:pw} on small-scale synthetic problems for which the exact MILP  is solvable.

We evaluated the spiral, Y-shaped, and arch datasets with $N \in \{50,100,200,500\}$ samples per snapshot and subdivision factors $K \in \{5,10,20,40,100\}$ satisfying $K \le N/5$, using five random seeds for each setting.
For each instance, we compared the exact MILP solution with the subset obtained from the POT solution.

Across all tested settings, the POT-based subset selected exactly the same points as the exact MILP solution.
This suggests that the POT-based approximation is effectively exact in the benchmark regime considered here.
More generally, a discrepancy between MILP and POT can arise when several candidate points have identical or nearly identical transport costs, since in such cases the relaxed POT solution may distribute mass across multiple competing candidates.
Note that the effect of additionally introducing random mini-batches is discussed in \ref{appendix:random_subsets}.

\subsection{Evaluation on multiple snapshots}
\label{appendix:multiple_snapshots}

In the main text, most experiments were conducted on datasets grouped into two temporal snapshots.
Here, we further investigate the performance of CT-OT Flow when the number of snapshots increases.

\paragraph{Increased number of snapshots}
First, we considered the Spiral dataset divided into $4, 8$, and $16$ snapshots.
Models were trained using all snapshots, and performance was evaluated by comparing the simulated trajectories with the ground truth.
Table~\ref{tab:spiral_multik} reports the results.
CT-OT Flow consistently achieved lower errors than the baselines, indicating that it can scale to settings with more than two snapshots.

\begin{table}[t]
  \centering
  \caption{Estimation errors (mean$\pm$std) with varying number of snapshots.
  Bold: best within ODE/SDE groups.}
  \label{tab:spiral_multik}
  \begin{tabular}{ccccccc}
    \toprule
    Dataset$\rightarrow$                   & \multicolumn{2}{c}{4 snapshots} & \multicolumn{2}{c}{8 snapshots} & \multicolumn{2}{c}{16 snapshots}                                                                                            \\
    \cmidrule(lr){2-3}
    \cmidrule(lr){4-5}
    \cmidrule(lr){6-7}
    Method$\downarrow$ Metric$\rightarrow$ & $\mathcal L_\text{DTW}$         & $\mathcal L_\text{Wass}$        & $\mathcal L_\text{DTW}$          & $\mathcal L_\text{Wass}$    & $\mathcal L_\text{DTW}$      & $\mathcal L_\text{Wass}$    \\ \toprule
    CT-OT Flow (ODE)                       & \textbf{9.79}\std{0.90}     & \textbf{0.30}\std{0.04}     & \textbf{12.00}\std{1.19}     & \textbf{0.35}\std{0.03} & \textbf{10.36}\std{1.02} & \textbf{0.32}\std{0.05} \\
    I-CFM                                  & 38.71\std{3.27}                 & 1.02\std{0.08}                  & 33.30\std{3.14}                  & 1.01\std{0.08}              & 19.51\std{3.437}             & 0.49\std{0.09}              \\
    OT-CFM                                 & 41.96\std{1.50}                 & 1.10\std{0.05}                  & 31.36\std{2.54}                  & 0.87\std{0.06}              & 17.67\std{3.13}              & 0.44\std{0.09}              \\
    \midrule
    CT-OT Flow (SDE)                       & \textbf{11.41}\std{0.89}    & \textbf{0.35}\std{0.03}     & \textbf{12.93}\std{1.26}     & \textbf{0.38}\std{0.03} & \textbf{12.51}\std{1.37} & \textbf{0.37}\std{0.05} \\
    $\text{[SF]}^2$M-I                       & 39.45\std{3.01}                 & 1.03\std{0.08}                  & 30.45\std{2.05}                  & 1.04\std{0.10}              & 22.01\std{1.68}              & 0.68\std{0.06}              \\
    $\text{[SF]}^2$M-Exact                   & 41.97\std{1.41}                 & 1.13\std{0.05}                  & 28.31\std{1.58}                  & 0.87\std{0.06}              & 20.05\std{1.40}              & 0.60\std{0.07}              \\
    \bottomrule
  \end{tabular}
\end{table}

\paragraph{Even/odd train–test split}
Next, we evaluated a train–test split using seven snapshots of the Spiral dataset.
The even-indexed snapshots were used for training, while the odd-indexed snapshots were reserved for testing.
For each test snapshot, we extracted the simulated trajectories corresponding to its time range and compared them with the empirical distributions of the test data.
The final score was calculated as the average 1-Wasserstein distance over all test snapshots.
Table~\ref{tab:spiral_multisnapshot} shows that CT-OT Flow again achieved the lowest errors, demonstrating robustness even in partially observed (non-contiguous) scenarios.

\begin{table}[t]
  \centering
  \caption{Estimation errors (1-Wasserstein distance) on the Spiral dataset with seven snapshots (mean$\pm$ std).
    Models were trained on even-indexed snapshots and tested on odd-indexed snapshots.
  Bold: best within ODE/SDE groups and dimension.}
  \label{tab:spiral_multisnapshot}
  \begin{tabular}{lc}
    \toprule
    Method & $\mathcal L_\text{wass}$ \\
    \midrule
    CT-OT Flow (ODE)        & \textbf{0.38}\std{0.09} \\
    I-CFM                   & 0.87\std{0.06} \\ \midrule
    OT-CFM                  & 0.92\std{0.05} \\
    CT-OT Flow (SDE)        & \textbf{0.39}\std{0.09} \\
    $\text{[SF]}^2$M-I      & 0.85\std{0.07} \\
    $\text{[SF]}^2$M-Exact        & 0.92\std{0.05} \\
    \bottomrule
  \end{tabular}
\end{table}

\subsection{CT-OT Flow for high-dimensional data}
\label{appendix:high_dimensional}
We evaluated CT-OT Flow by applying PHATE~\cite{moon2019visualizing} to embed the EB dataset~\cite{moon2018embryoid} into $d\in\{2,4,8,16,32\}$ dimensions.
Table~\ref{tab:high-dim} summarizes the estimation errors $\mathcal L_\text{Wass}$ for each setting.
Our experiments reveal that when using a small kernel smoothing parameter ($\gamma=0.005$), CT-OT Flow’s performance degrades as the dimension increases.
We attribute this to the concentration of pairwise distances in high-dimensional spaces, which reduces the contrast required for accurate boundary-point extraction.
Conversely, increasing $\gamma$ (e.g., $\gamma=0.5$) yields robust $\mathcal L_{\mathrm{Wass}}$ across all $d$.
This is because a larger $\gamma$ enhances the robustness of high-resolution time label estimation against the reduced distance contrast (see also \S\ref{sec:hyperparameters}).
In practice, when applying CT-OT Flow to high-dimensional data, one may either increase the kernel smoothing parameter $\gamma$ or embed the data into a lower-dimensional space to effectively identify points near the interval boundaries.

\begin{table}
  \centering
  \caption{Estimation errors ($\mathcal L_{\text{Wass}}$) on the EB dataset with varying dimensions.
  Bold: best within ODE/SDE groups and dimension.}
  \label{tab:high-dim}
  \begin{tabular}{rccccc}
    \toprule
    Method$\downarrow$ Dimension$\rightarrow$               & 2                       & 4                       & 8                       & 16                      & 32                      \\
    \toprule
    CT-OT Flow (ODE) ($\gamma=0.005$)                       & 0.94\std{0.11}          & 1.13\std{0.06}          & 2.19\std{0.13}          & 6.41\std{0.88}          & 5.80\std{0.46}          \\
    CT-OT Flow (ODE) ($\gamma=0.05$)                        & 0.98\std{0.09}          & \textbf{1.07}\std{0.06} & \textbf{1.76}\std{0.08} & 4.21\std{0.14}  & 4.79\std{0.33}          \\
    CT-OT Flow (ODE) ($\gamma=0.5$)                         & \textbf{0.81}\std{0.05} & 1.16\std{0.03}          & 1.81\std{0.04}          & \textbf{2.75}\std{0.04}      & 4.34\std{0.04}          \\ \addlinespace
    I-CFM~\cite{tong2024improving}                          & 1.00\std{0.01}          & 1.32\std{0.01}          & 2.01\std{0.01}          & 2.86\std{0.01}          & 4.36\std{0.01}          \\
    OT-CFM~\cite{tong2024improving}                         & 1.06\std{0.01}          & 1.37\std{0.01}          & 2.01\std{0.01}          & 2.87\std{0.01}          & 4.32\std{0.01}          \\
    TrajectoryNet~\cite{tong2020trajectorynet}              & 1.10\std{0.04}          & 2.04\std{0.59} & 3.07\std{0.16}                   & 4.16\std{0.29}          &  5.41\std{0.27}           \\
    MFM~\cite{kapusniak2024metric}                          & 1.04\std{0.04}          & 1.34\std{0.03}          & 1.93\std{0.03}          & 2.80\std{0.03}          & \textbf{4.22}\std{0.02} \\
    Slingshot~\cite{street2018slingshot} & 2.74\std{0.20} & 1.90\std{0.08} & 3.53\std{0.24} & 20.89\std{3.34} & 9.61\std{1.58}\\
    \midrule
    CT-OT Flow (SDE) ($\gamma=0.005$)                       & 0.92\std{0.12}          & 1.11\std{0.07}          & 2.19\std{0.14}          & 6.39\std{0.85}          & 5.79\std{0.46}          \\
    CT-OT Flow (SDE) ($\gamma=0.05$)                        & 0.97\std{0.09}          & \textbf{1.04}\std{0.06} & \textbf{1.76}\std{0.08} & 4.25\std{0.16}          & 4.79\std{0.32}          \\
    CT-OT Flow (SDE) ($\gamma=0.5$)                         & \textbf{0.80}\std{0.05} & 1.14\std{0.03}          & 1.79\std{0.04}          & \textbf{2.72}\std{0.04} & 4.33\std{0.04}          \\ \addlinespace
    $\text{[SF]}^2\text{M}$-I~\cite{tong2024simulation}     & 0.99\std{0.01}          & 1.30\std{0.01}          & 2.00\std{0.01}          & 2.84\std{0.01}          & 4.34\std{0.01}          \\
    $\text{[SF]}^2\text{M}$-Exact~\cite{tong2024simulation} & 1.05\std{0.01}          & 1.36\std{0.01}          & 2.00\std{0.01}          & 2.86\std{0.01}          & \textbf{4.32}\std{0.01}          \\
    ENOT~\cite{gushchin2024entropic}                        & 1.03\std{0.02}          & 1.33\std{0.01}          & 1.99\std{0.08}          & 2.90\std{0.08}          & 4.53\std{0.25}          \\
    \bottomrule
  \end{tabular}
\end{table}

\subsection{\changed{Results on unaggregated EB dataset}}
\label{appendix:eb_unaggregated}
While the experiments in the main text aggregate contiguous snapshots to enable quantitative evaluation, scRNA-seq snapshots inherently represent distributions over time intervals because cell birth times cannot be synchronized.
We therefore also qualitatively evaluate CT-OT Flow on the original, unaggregated EB dataset.
As shown in Fig.~\ref{fig:unaggregated}, baseline methods (e.g., $\text{[SF]}^2$M) produce trajectories that depart from the data manifold, whereas CT-OT Flow follows paths aligned with the observed geometry.

\begin{figure}
  \centering
  \includegraphics[width=0.95\textwidth]{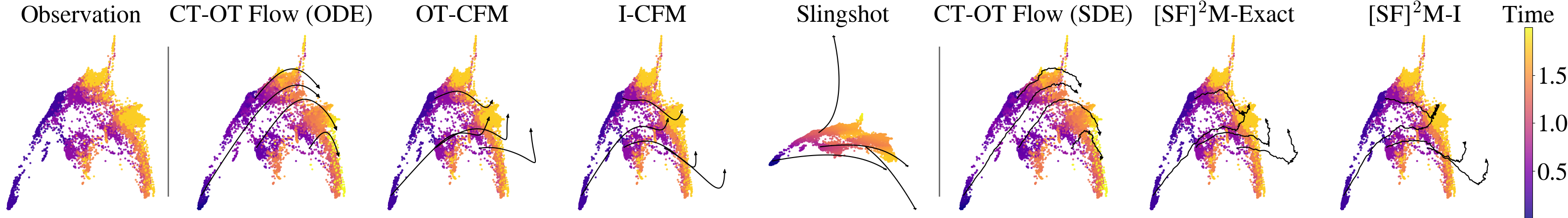}
  \caption{Estimated trajectories on unaggregated EB dataset. The black lines indicate the true or estimated trajectories, while the color of each point indicates its (high-resolution) time label. \label{fig:unaggregated}}
\end{figure}

\subsection{TrajectoryNet with stronger manifold regularization}
We additionally tested TrajectoryNet~\cite{tong2020trajectorynet} with stronger regularization toward manifold-conforming trajectories.
Specifically, we increased the weight of the top-$k$-NN regularization term (``top-$k$-reg''), which encourages trajectories to align with local manifold geometry.
Table~\ref{tab:trajectorynet_reg} summarizes the results on the Spiral, Y-shaped, and Arch datasets.
Across all settings, CT-OT Flow still achieved substantially lower errors.

\begin{table}[t]
  \centering
  \caption{Comparison with TrajectoryNet using different regularization strengths. }
  \label{tab:trajectorynet_reg}
  \begin{tabular}{ccccccc}
    \toprule
    Dataset$\rightarrow$                   & \multicolumn{2}{c}{Spiral} & \multicolumn{2}{c}{Y-shaped} & \multicolumn{2}{c}{Arch}                                                                                 \\
    \cmidrule(lr){2-3}
    \cmidrule(lr){4-5}
    \cmidrule(lr){6-7}
    Method$\downarrow$ Metric$\rightarrow$ & $\mathcal L_\text{DTW}$    & $\mathcal L_\text{Wass}$     & $\mathcal L_\text{DTW}$  & $\mathcal L_\text{Wass}$ & $\mathcal L_\text{DTW}$ & $\mathcal L_\text{Wass}$ \\ \toprule
    top-$k$-reg = 1                        & 51.68\std{0.12}            & 1.20\std{0.03}               & 20.74\std{2.08}          & 0.75\std{0.07}           & 34.29\std{1.27}         & 0.74\std{0.06}           \\
    top-$k$-reg = 100                      & 49.69\std{3.91}            & 1.17\std{0.10}               & 30.78\std{3.44}          & 0.78\std{0.07}           & 21.24\std{2.83}         & 0.52\std{0.07}           \\
    top-$k$-reg = 10000                    & 42.60\std{3.70}            & 0.99\std{0.04}               & 28.20\std{4.27}          & 0.77\std{0.10}           & 38.99\std{12.29}        & 0.84\std{0.22}           \\
    CT-OT Flow (ODE)                       & \textbf{9.64}\std{1.04}    & \textbf{0.31}\std{0.04}      & \textbf{10.79}\std{0.66} & \textbf{0.41}\std{0.10}  & \textbf{9.98}\std{2.88}          & \textbf{0.31}\std{0.08}           \\
    \bottomrule
  \end{tabular}
\end{table}

\clearpage

\bibliographystyle{cas-model2-names}
\bibliography{references}

\end{document}